\documentclass[11pt, a4paper,leqno]{amsart}
\usepackage{amsmath,amsthm,amscd,amssymb,amsfonts, amsbsy}
\usepackage{latexsym}
\usepackage{txfonts}
\usepackage{exscale}
\usepackage{enumitem}
\setlist[itemize]{leftmargin=2em}
\usepackage{mathrsfs}
\usepackage{graphicx}

\usepackage{tikz-cd}

\usepackage{hyperref}
\newcommand{\doi}[1]{\href{https://doi.org/#1}{doi: #1}}

\hypersetup{
  colorlinks=true,
  linkcolor=blue,
  citecolor=blue,
  urlcolor=blue,
  pdftitle={A Commutator Framework for Selective Spectral Alignment in Deep Neural Networks},
  pdfauthor={Kaj Nystr\"om},
  pdfsubject={Finite-width spectral alignment in nonlinear neural networks},
  pdfkeywords={deep neural networks, feature learning, spectral alignment, matrix commutators, AGOP, NFM}
}

\usepackage{pgf}

\calclayout
\allowdisplaybreaks

\newtheorem{theorem}{Theorem}[section]
\newtheorem{proposition}{Proposition}[section]
\newtheorem{lemma}{Lemma}[section]
\newtheorem{corollary}{Corollary}[section]
\theoremstyle{definition}

\newtheorem{remark}{Remark}[section]

\numberwithin{equation}{section}
\def\barint{\,\Xint -} 
\def\bariint{\barint_{} \kern-.4em \barint}
\def\bariiint{\bariint_{} \kern-.4em \barint}

\newcommand{\beq}{\begin{equation}}
\newcommand{\bea}[1]{\begin{array}{#1} }
\newcommand{\eeq}{ \end{equation}}
\newcommand{\ea}{ \end{array}}

\def \d {{\delta}}

\def\mean#1{\mathchoice%
          {\mathop{\kern 0.2em\vrule width 0.6em height 0.69678ex depth -0.58065ex
                  \kern -0.8em \intop}\nolimits_{\kern -0.4em#1}}%
          {\mathop{\kern 0.1em\vrule width 0.5em height 0.69678ex depth -0.60387ex
                  \kern -0.6em \intop}\nolimits_{#1}}%
          {\mathop{\kern 0.1em\vrule width 0.5em height 0.69678ex
              depth -0.60387ex
                  \kern -0.6em \intop}\nolimits_{#1}}%
          {\mathop{\kern 0.1em\vrule width 0.5em height 0.69678ex depth -0.60387ex
                  \kern -0.6em \intop}\nolimits_{#1}}}

\def\vintslides_#1{\mathchoice%
          {\mathop{\kern 0.1em\vrule width 0.5em height 0.697ex depth -0.581ex
                  \kern -0.6em \intop}\nolimits_{\kern -0.4em#1}}%
          {\mathop{\kern 0.1em\vrule width 0.3em height 0.697ex depth -0.604ex
                  \kern -0.4em \intop}\nolimits_{#1}}%
          {\mathop{\kern 0.1em\vrule width 0.3em height 0.697ex depth -0.604ex
                  \kern -0.4em \intop}\nolimits_{#1}}%
          {\mathop{\kern 0.1em\vrule width 0.3em height 0.697ex depth -0.604ex
                  \kern -0.4em \intop}\nolimits_{#1}}}

\newcommand{\aveint}[2]{\mathchoice%
          {\mathop{\kern 0.2em\vrule width 0.6em height 0.69678ex depth -0.58065ex
                  \kern -0.8em \intop}\nolimits_{\kern -0.45em#1}^{#2}}%
          {\mathop{\kern 0.1em\vrule width 0.5em height 0.69678ex depth -0.60387ex
                  \kern -0.6em \intop}\nolimits_{#1}^{#2}}%
          {\mathop{\kern 0.1em\vrule width 0.5em height 0.69678ex depth -0.60387ex
                  \kern -0.6em \intop}\nolimits_{#1}^{#2}}%
          {\mathop{\kern 0.1em\vrule width 0.5em height 0.69678ex depth -0.60387ex
                  \kern -0.6em \intop}\nolimits_{#1}^{#2}}}

\def\eqn#1$$#2$${\begin{equation}\label#1#2\end{equation}}
\def\charfn_#1{{\raise1.2pt\hbox{$\chi
_{\kern-1pt\lower3pt\hbox{{$\scriptstyle#1$}}}$}}}

\def\qq1{q_*}
\def\q2{q_{**}}

\newdimen\vintbar
\def\vint{-\kern-\vintbar\int}

\def\0{\boldsymbol 0}

\newtoks\by
\newtoks\paper
\newtoks\book
\newtoks\jour
\newtoks\yr
\newtoks\pages
\newtoks\vol
\newtoks\publ

\def\name[#1, #2]{#1 #2}
\def\ota{{\hbox{\bf ???}}}
\def\cLear{\by=\ota\paper=\ota\book=\ota\jour=\ota\yr=\ota
\pages=\ota\vol=\ota\publ=\ota}
\def\endpaper{\the\by, \textit{\the\paper},
{\the\jour} \textbf{\the\vol} (\the\yr), \the\pages.\cLear}
\def\endbook{\the\by, \textit{\the\book},
\the\publ, \the\yr.\cLear}
\def\endpap{\the\by, \textit{\the\paper}, \the\jour.\cLear}
\def\endproc{\the\by, \textit{\the\paper}, \the\book, \the\publ,
\the\yr, \the\pages.\cLear}

\renewcommand{\d}{\, \mathrm{d}} 

\begin{document}

\title[Selective Spectral Alignment in Deep Neural Networks]
{A Commutator Framework for Selective Spectral\\ Alignment in Deep Neural Networks}
\author{Kaj Nystr\"om}
\address{Department of Mathematics, Uppsala University, Box 480\\
SE-751 06 Uppsala, Sweden}
\email{kaj.nystrom@math.uu.se}
\date{\today}
\subjclass[2020]{Primary 68T07; Secondary 15A18, 15A27, 37N40}
\keywords{Deep neural networks, feature learning, spectral alignment,
matrix commutators, average gradient outer products, neural feature matrices,
activation geometry, gradient flow}

\begin{abstract}
We develop a finite-width geometric framework describing how learned feature
geometries are organized, transported, and selectively aligned in deep neural
networks. Incompatibility among weight-generated covariance, gates, and backward
sensitivities is quantified through three families of commutators: between
gates and covariance, between sensitivities and covariance, and between
average gradient outer products (AGOPs) and neural feature matrices (NFMs).

An exact layerwise identity decomposes the sensitivity-covariance commutator
into four sources: downstream transport, adjacent-layer imbalance, pointwise
sensitivity fluctuations, and nonlinear gate-covariance interactions. The
AGOP-NFM commutator is a singular-value-weighted transport of the internal
commutator, explaining why observed feature-side alignment alone does not
determine the internal geometry from which it emerges.

Buffered localized energies resolve mixing between separated covariance
subspaces. We establish spectral-gap, projector-evolution, and stabilization
estimates, and formulate conditional Lyapunov principles that yield decay
under explicit geometric error-bound or intrinsic-damping assumptions. These
criteria do not follow from gradient flow alone and clarify why risk reduction
need not imply commutator collapse.

Analytic examples and numerical experiments exhibit factorization of spectral
and activation geometry, transient growth, and cancellation among nonzero sources. In
tested finite-time regimes, cancellation dominated by a negative
transport-imbalance interaction persists across depths, widths, and two
regression benchmarks. Spectral alignment therefore appears as a layer- and
scale-dependent compatibility phenomenon governed by transport, interaction,
cancellation, and possible damping, rather than a universal consequence of
training.
\end{abstract}

\maketitle



    \setcounter{equation}{0} \setcounter{theorem}{0}

    \section{Introduction}

Understanding how neural networks learn statistical features from data remains
a central problem in supervised learning. A growing body of work suggests that
training produces geometric and spectral structure linking backward
sensitivity to the forward representation. Average Gradient Outer Products
(AGOPs) encode the directions to which the network output is sensitive,
whereas Neural Feature Matrices (NFMs) encode the geometry generated by the
learned weights. Their observed correspondence motivates the \emph{Neural
Feature Ansatz} (NFA), which in its strongest form relates a layerwise NFM to a
positive spectral power of the corresponding AGOP
\cite{Radhakrishnan2024}. Section~\ref{sec:related-work} gives a detailed
comparison with this and related approaches.

Deep linear networks provide an exact reference case: under balanced gradient
flow, the NFA holds with a spectral exponent determined by depth. For
nonlinear networks, however, exact power-law relations can fail even when the
target function is represented exactly \cite{TansleyMassartCartis2025}. A
prescribed relation between eigenvalues may therefore be too rigid to capture
the common geometric content. In this paper we instead study the weaker compatibility
condition
\begin{equation}
\label{eq:intro-feature-commutator}
[\mathcal S_\ell(t),\mathrm{NFM}_\ell(t)]=0,
\end{equation}
where \(\mathcal S_\ell\) is the layerwise AGOP and
\(\mathrm{NFM}_\ell=W_\ell^\top W_\ell\). Since both operators are real
symmetric, \eqref{eq:intro-feature-commutator} is equivalent to simultaneous
orthogonal diagonalizability. Every spectral functional relation implies
commutativity, but the converse imposes no relation between the corresponding
eigenvalues.

The purpose of this paper is to identify the internal mechanisms governing
this spectral compatibility in finite-width nonlinear networks. The question
is not whether training universally drives all relevant commutators to zero,
but how incompatibility is generated, transported, localized, and cancelled
across layers, and which additional hypotheses can produce genuine decay. The
theory is developed primarily for bias-free, fully connected, scalar-output
networks trained by population gradient flow. The algebraic identities also
apply to empirical averages, and vector outputs are accommodated by replacing
scalar gradient outer products with Jacobian Gram matrices.

The first observation is that the AGOP-NFM commutator is not a closed
geometric object. Let
\begin{equation}
\label{eq:intro-covariance-operators}
P_\ell=W_\ell W_\ell^\top,
\qquad
\mathrm{NFM}_\ell=W_\ell^\top W_\ell
\end{equation}
be the covariance operators on the internal and feature sides of the same
weight matrix. Introducing the population backward-sensitivity operator
\(Q_\ell\), one has
\begin{equation}
\label{eq:intro-common-transport}
\mathcal S_\ell=W_\ell^\top Q_\ell W_\ell
\end{equation}
and the exact transport identity
\begin{equation}
\label{eq:intro-transport-identity}
[\mathcal S_\ell,\mathrm{NFM}_\ell]
=
W_\ell^\top[Q_\ell,P_\ell]W_\ell.
\end{equation}
Thus the feature-side commutator is a congruence image of the internal
sensitivity-covariance commutator. Small internal incompatibility implies
feature-side alignment, subject to the scale of \(W_\ell\), whereas the
converse may fail because transport suppresses components in small singular
directions and annihilates covariance-null directions.

The second observation is that \([Q_\ell,P_\ell]\) satisfies an exact
backward recursion. The nonlinear activation geometry is represented by the
gate
\begin{equation}
\label{eq:intro-gates}
D_\ell(x,t)
=
\operatorname{diag}\bigl(\sigma'(z_\ell(x,t))\bigr),
\end{equation}
whose compatibility with the covariance geometry is measured by
\([D_\ell(x,t),P_\ell(t)]\). At each hidden layer the recursion has the
schematic form
\begin{equation}
\label{eq:intro-four-source-recursion}
[Q_\ell,P_\ell]
=
T_\ell^{\mathrm{tr}}
+T_\ell^{\mathrm{imb}}
+T_\ell^{\mathrm{fluc}}
+T_\ell^{\mathrm{gate}}.
\end{equation}
The four terms arise, respectively, from downstream transport,
adjacent-layer imbalance, pointwise sensitivity fluctuation, and the nonlinear
gate-covariance interaction. Here

\[
\Delta_\ell=P_\ell-W_{\ell+1}^\top W_{\ell+1},
\qquad
R_{\ell+1}=S_{\ell+1}-\mathcal S_{\ell+1}.
\]
They may reinforce or cancel one another. The gate commutator is therefore an intrinsic nonlinear
source, but it is neither the only source of internal incompatibility nor a
complete explanation of a small total commutator.

Equations~\eqref{eq:intro-transport-identity} and
\eqref{eq:intro-four-source-recursion} organize the three alignment levels
into the hierarchy
\begin{equation}
\label{eq:intro-commutator-hierarchy}
[D_\ell,P_\ell]
\quad\longrightarrow\quad
[Q_\ell,P_\ell]
\quad\longrightarrow\quad
[\mathcal S_\ell,\mathrm{NFM}_\ell].
\end{equation}
The first commutator records pointwise incompatibility between activation and
covariance geometry and enters as a primitive nonlinear source; the second
records the internal sensitivity mixing produced by all four mechanisms after
backward transport; and the third is its singular-value-weighted feature-side
image. The arrows indicate propagation and transport, not strict causality,
monotone decay, or a universal ordering of the corresponding energies.

Global commutator norms aggregate interactions across the entire covariance
spectrum. They do not reveal where in the spectrum the misalignment is
concentrated, nor do they distinguish coupling across well-separated spectral
blocks from mixing\footnote{Here and below, mixing between orthogonal spectral
subspaces \(E\) and \(F\) means operator-theoretic coupling: the corresponding
off-diagonal block \(\Pi_EA\Pi_F\) of the relevant operator \(A\) is nonzero.}
inside a nearly degenerate cluster. Buffered localized energies provide this
information at a prescribed blockwise spectral resolution by isolating the
direct coupling between selected covariance subspaces.

To make this distinction precise, write $W_\ell:\mathbb R^{m_\ell}\longrightarrow\mathbb R^{m_{\ell+1}}$ and use the spectral decomposition
\[
P_\ell
=
\sum_{i=1}^{m_{\ell+1}}
\lambda_{\ell,i}u_{\ell,i}u_{\ell,i}^\top,
\qquad
\lambda_{\ell,1}\geq\cdots\geq
\lambda_{\ell,m_{\ell+1}}\geq0.
\]
For \(1\leq q\leq m_{\ell+1}\), let
\[
\Pi_\ell^{(q)}
=
\sum_{i=1}^q u_{\ell,i}u_{\ell,i}^\top,
\qquad
\Pi_\ell^{(q),\perp}
=
I_{m_{\ell+1}}-\Pi_\ell^{(q)},
\]
and define the boundary gap
\[
\gamma_\ell^{(q)}
=
\lambda_{\ell,q}-\lambda_{\ell,q+1},
\qquad
1\leq q<m_{\ell+1}.
\]
When \(\gamma_\ell^{(q)}>0\), the projector
\(\Pi_\ell^{(q)}\) is intrinsically determined by \(P_\ell\), independently
of the choice of eigenbasis inside its spectral subspaces.

Fix \(k\geq1\) and a buffer width \(\beta\geq0\), with
\(k+\beta<m_{\ell+1}\), and assume
\[
\gamma_\ell^{(k)}>0,
\qquad
\gamma_\ell^{(k+\beta)}>0.
\]
The buffered localized energies measure the direct coupling between the
leading covariance subspace
\(\operatorname{Ran}\Pi_\ell^{(k)}\) and the complement of
\(\operatorname{Ran}\Pi_\ell^{(k+\beta)}\), while leaving mixing through the
intermediate modes
\[
u_{\ell,k+1},\ldots,u_{\ell,k+\beta}
\]
unrestricted when \(\beta>0\). When the buffer is chosen to contain a nearly
degenerate boundary cluster, rotations inside that cluster are therefore not
interpreted as misalignment across the separated spectral blocks. The
separation relevant to the retained coupling is the buffered gap
\[
\gamma_\ell^{(k,\beta)}
=
\lambda_{\ell,k}-\lambda_{\ell,k+\beta+1}.
\]

These localized energies are natural defects for selective alignment because
they are precisely the squared norms of the off-diagonal blocks connecting
the two selected spectral regions. Indeed, for a symmetric operator
\(A_\ell\) on \(\mathbb R^{m_{\ell+1}}\),
\[
\left\|
\Pi_\ell^{(k)}A_\ell
\Pi_\ell^{(k+\beta),\perp}
\right\|_F^2
=
\sum_{\substack{1\leq i\leq k\\
j>k+\beta}}
\left|
u_{\ell,i}^\top A_\ell u_{\ell,j}
\right|^2.
\]
This quantity vanishes exactly when \(A_\ell\) produces no direct coupling
between the leading \(k\)-dimensional covariance subspace and the remote
complement beyond the buffer. It is invariant under orthogonal changes of
basis within the two spectral subspaces and therefore depends on the
subspaces rather than on the particular eigenvectors used to represent
them. Moreover, the eigenbasis expansion of the global commutator gives
\[
\|[A_\ell,P_\ell]\|_F^2
\geq
2\bigl(\gamma_\ell^{(k,\beta)}\bigr)^2
\left\|
\Pi_\ell^{(k)}A_\ell
\Pi_\ell^{(k+\beta),\perp}
\right\|_F^2.
\]
Applying this estimate pointwise and integrating for
\(A_\ell=D_\ell(x)\), and applying it directly for \(A_\ell=Q_\ell\), shows
that the global gate and internal commutator energies dominate their
corresponding localized energies by the common factor
\(2(\gamma_\ell^{(k,\beta)})^2\). The same conclusion holds at the feature
level whenever the cutoff is admissible on both sides of \(W_\ell\).
Vanishing of these localized energies expresses selective alignment across
the buffered interface and is weaker than global commutativity.

To compare the three localized levels, we use on the feature side the right
singular vectors of \(W_\ell\), completed through \(\ker W_\ell\), so that
their nonzero spectral indices agree with those of \(P_\ell\). Let $r_\ell=\operatorname{rank}(W_\ell)$. A common cutoff is admissible when
\[
1\leq k\leq r_\ell,
\qquad
k\leq k+\beta\leq m_\ell,
\qquad
k+\beta<m_{\ell+1}.
\]
If \(\mathcal G_{\mathcal S,\ell}^{(k,\beta)}\) denotes the corresponding
feature-side localized energy, the transport identity gives
\begin{equation}
\mathcal G_{\mathcal S,\ell}^{(k,\beta)}
=
\sum_{\substack{1\leq i\leq k\\
k+\beta<j\leq r_\ell}}
\lambda_{\ell,i}\lambda_{\ell,j}
\left|
u_{\ell,i}^\top Q_\ell u_{\ell,j}
\right|^2.
\label{eq:intro-feature-localized-weighting}
\end{equation}
Thus feature-side alignment is a spectrally filtered, and potentially
non-faithful, image of internal sensitivity alignment: mixing into
low-variance directions is suppressed, whereas mixing into the null space of
the covariance is invisible.

The preceding localization describes the geometry at a fixed time. Along
training, the dynamical problem has two coupled components. The covariance
subspaces move, so the evolution of their projectors, boundary gaps, and
separated spectral clusters must be controlled. Simultaneously, the
activation operators evolve relative to these moving subspaces. For smooth
activations, the gates can be differentiated along the training trajectory.
For ReLU networks, by contrast, the gates change discontinuously when
preactivations cross zero, and their finite-time variation is controlled by
symmetric differences of the data regions on which the corresponding neurons
are active.

This leads to a central distinction between \emph{stabilization} and
\emph{alignment}. Under the appropriate gap and regularity assumptions,
finite variation of the covariance and activation geometries can force a
localized energy to approach a limit, but it neither identifies that limit
nor forces it to vanish. Genuine decay requires an additional compatibility
mechanism, a geometric error bound coupling the defect to risk dissipation, or
intrinsic transverse damping. Likewise, localized gate alignment propagates
through \eqref{eq:intro-commutator-hierarchy} only when the imbalance,
sensitivity fluctuations, operator scales, and relevant spectral gaps are
also controlled. The Lyapunov theory below distinguishes risk-based transfer
under a geometric error bound from decay generated directly by intrinsic
damping; neither mechanism follows from gradient flow alone.

The numerical experiments support the need for these qualifications. At the
matched finite-time risk levels attained by the tested synthetic ensembles, a
reduction in relative risk by approximately three orders of magnitude is not
accompanied by decay of the normalized global gate commutator, and the leading
localized sensitivity defect may increase before stabilizing. Across the
tested widths, greater width is associated with smaller internal and
feature-side defects but not with a smaller gate defect. At interior layers
of the deeper networks, the Frobenius interaction between the transport and
imbalance sources is strongly negative, leaving the total internal
commutator much smaller than the combined magnitudes of its sources. This
source-level cancellation persists across the tested depths and widths and
on two scalar-regression benchmarks. A function-preserving rescaling of
adjacent layers also changes the source decomposition without changing the
initial function represented by the network. These finite-time observations
support selective, mechanism-dependent alignment rather than universal
commutator decay driven by risk minimization.

\subsection{Our principal contributions}

The principal contributions of the paper are the following.
\begin{enumerate}[labelindent=0.75em,leftmargin=*]

\item
We derive the exact transport identity
\eqref{eq:intro-transport-identity} and the recursive four-source decomposition
\eqref{eq:intro-four-source-recursion} of the internal
sensitivity-covariance commutator. Together with the associated energy
identities, these formulas make downstream transport, adjacent-layer
imbalance, pointwise sensitivity fluctuations, nonlinear gate-covariance
interactions, and the resulting reinforcement and cancellation explicit at
finite width.

\item
We develop a common localization theory relative to the evolving spectrum of
\(P_\ell\). Buffered energies isolate direct coupling between separated
spectral blocks while leaving mixing through intermediate modes unrestricted.
Spectral-gap bounds relate these localized defects to the global commutator
energies, projector and cluster estimates control the selected covariance
subspaces, and singular-value weights quantify the transport of internal
mixing to feature space.

\item
We treat smooth and ReLU activations separately. For smooth activations, we
derive differential estimates for the localized gate energies. For ReLU
networks, where the gates change discontinuously, we formulate their
finite-time evolution through activation regions and gate-switching estimates
based on symmetric differences of these regions.

\item
We establish conditional stabilization, propagation, and Lyapunov principles.
Under finite variation, gap, and regularity hypotheses, selected localized
energies converge to finite limits. Quantitative decay is obtained under
explicit compatibility, geometric error-bound, or intrinsic damping
assumptions. We also identify the imbalance, fluctuation, gap, operator-scale,
and boundedness conditions needed to propagate alignment through the
commutator hierarchy.

\item
We analyze exactly solvable finite-width models exhibiting
spectral-activation factorization, separation between the three alignment
levels, exact cancellation among nonzero sources, and transient growth along
a constrained population-gradient-flow trajectory. These examples explain
why none of the individual commutator energies is automatically a Lyapunov
function.

\item
We introduce matched-risk numerical diagnostics for global, localized, and
source-level observables. The experiments confirm the exact identities to
numerical precision, distinguish the three alignment levels, reveal persistent
transport-imbalance cancellation, and isolate the parametrization dependence
of the source geometry through a function-preserving rescaling.

\end{enumerate}

Together, these results show why a small AGOP-NFM commutator may reflect
internal damping, singular-value filtering, structured source cancellation,
or a combination of these mechanisms. The hierarchy and its buffered
localization distinguish these possibilities at finite width.

\subsection{Organization of the paper}

The remainder of the paper is organized as follows.
Section~\ref{sec:related-work} places the commutator framework in the context
of the Neural Feature Ansatz, feature-alignment mechanisms, deep-linear
balancedness, symmetry-induced conservation laws, and related weight-Gram
dynamics.
Section~\ref{sec:Pre} introduces the network, sensitivity, covariance,
imbalance, fluctuation, and gradient-flow notation.
Section~\ref{sec:commutatorhierarchy} derives the transport identity, the
four-source internal recursion, its energy and norm estimates, and the
evolution of imbalance and sensitivity fluctuations.
Section~\ref{sec:SpectralGeometry} develops buffered localization, covariance
subspace dynamics, and transport to feature-side geometry.
Section~\ref{sec:ActivationGeometry} studies smooth activation dynamics and
ReLU switching.
Section~\ref{sec:AlignmentDynamics} distinguishes stabilization from decay
and gives conditional propagation results, while
Section~\ref{sec:lyapunov-alignment} develops Lyapunov principles for selected
buffered sensitivity defects.
Section~\ref{Analytic} presents the exactly solvable models and their
cancellation and transient-growth mechanisms.
Section~\ref{Num} develops and applies the finite-width numerical diagnostics.
Finally, Section~\ref{sec:Conclusion} summarizes the framework and briefly
indicates several directions for further work.
    \section{Relation to existing work}
\label{sec:related-work}

The present work is most closely related to the recent literature on the
Average Gradient Outer Product (AGOP) and the Neural Feature Ansatz (NFA).
The NFA was introduced in \cite{Radhakrishnan2024} as a mechanism for feature
learning, motivated by an empirical correspondence between layerwise AGOPs and
weight Gram matrices. Subsequent work connected AGOP-based feature learning to
deep neural collapse \cite{BeagleholeSukenikMondelliBelkin2024} and decomposed
the observed NFM-AGOP correlation through the singular geometry of weight
updates and preactivation tangent features
\cite{BeagleholeMitliagkasAgarwala2024}. Related forms of dynamically emerging
feature geometry include the alignment of neural tangent features with a small
number of task-relevant directions \cite{BaratinEtAl2021} and the Canonical
Representation Hypothesis of \cite{ZiyinChuangGalantiPoggio2025}, which posits
mutual alignment relations among representations, weights, and neuron
gradients. The FACT framework \cite{BoixAdseraMallinarSimonBelkin2026} provides
a complementary convergence-based perspective by deriving feature relations
from first-order optimality conditions.

Our point of departure is that an exact spectral functional relation between
the AGOP and the NFM is stronger than the eigenspace compatibility required
for spectral alignment. We therefore study the weaker condition stated in \eqref{eq:intro-feature-commutator} which retains common spectral subspaces without prescribing a relation between
the corresponding eigenvalues. The purpose is not to propose a competing
feature ansatz, but to determine how this weaker compatibility is generated,
transported, or obstructed by finite-width nonlinear dynamics. The commutator
framework accordingly separates the feature-side observable from the internal
sensitivity, covariance, activation, imbalance, and fluctuation geometries
that produce it.

Deep linear networks provide the principal exact comparison. Under gradient
flow, their adjacent-layer Gram differences are conserved; such balancedness
identities play a central role, for example, in
\cite{AroraCohenGolowichHu2019}. For balanced initialization they also lead to
exact depth-dependent NFA relations. A related implicit-regularization result
was established in \cite{JiTelgarsky2019}: for linearly separable
classification, the weight matrices become asymptotically rank one and their
adjacent singular directions align. The direct connection between
deep-linear balancedness and the NFA was developed in
\cite{TansleyMassartCartis2025}, which obtained the exponent \(1/L\) for an
\(L\)-layer linear network, proved an asymptotic relation for unbalanced
initialization with weight decay, and exhibited nonlinear counterexamples.

For bias-free homogeneous nonlinear networks, the full adjacent-layer matrix
law need not persist, although neuronwise differences between incoming and
outgoing squared norms remain invariant under gradient flow
\cite{DuHuLee2018}. Architectural symmetries and their associated conservation
laws have been studied more generally in
\cite{KuninEtAl2021,MarcotteGribonvalPeyre2023}; this literature also clarifies
their dependence on the continuous-time dynamics and on symmetry-breaking
perturbations. In the present framework, the adjacent-layer matrix imbalance
is not eliminated by assumption. It appears instead as one of the four sources
in the recursive sensitivity-covariance commutator identity. The
function-preserving rescaling experiment correspondingly changes the values of
the conserved neuronwise imbalance quantities while leaving the represented
initial function unchanged.

The closest dynamical comparisons are
\cite{BeagleholeMitliagkasAgarwala2024} and
\cite{ChaBeagleholeRadhakrishnanLee2026}. The former relates the NFM-AGOP
correlation to alignment between weight updates and preactivation tangent
features. The latter derives a Feature Learning Equation in which the weight
Gram matrix links parameter updates to a virtual evolution of hidden features.
Our analysis addresses a different level of this geometry: it derives an exact
layerwise transport identity and decomposes the internal
sensitivity-covariance commutator into downstream transport, adjacent-layer
imbalance, pointwise sensitivity fluctuations, and nonlinear gate-covariance
interactions. These identities hold along finite-width trajectories, are not
restricted to critical points, and require neither balanced initialization nor
commutator decay. They therefore permit transient growth and cancellation
among nonzero sources. The buffered localized energies further distinguish
mixing across separated covariance subspaces from mixing inside nearly
degenerate spectral clusters, leading to a selective rather than universal
notion of spectral alignment.

The present results are finite-width identities and estimates. They are
complementary to feature-learning limits in mean-field and tensor-program
parametrizations~\cite{NguyenPham,YangHu}, which address asymptotic training
dynamics as layer widths diverge.

Xiao, Sui, and Ruan~\cite{XiaoSuiRuan2026} recently proposed a flag-variety
framework for deep-network alignment. Their principal objects are the
relative positions of adjacent weight-generated subspaces, with emphasis on
intersection data invariant under reparametrization, flag geometry, and alignment
induced by ridge regularization; nonlinear activations appear there through a
commutator obstruction to exact basis alignment. The present framework is
complementary. Its basic objects are data-dependent gates, backward
sensitivities, and covariance operators, and its central conclusions are the
exact four-source recursion, cancellation among those sources, and the
AGOP-NFM transport identity. Thus the two approaches use related
commutator language but address different geometries and different dynamical
questions.

Finally, a separate future direction concerns probabilistic gate surrogates
and random-matrix universality. The nonasymptotic universality and sharp
concentration principles of \cite{BrailovskayaVanHandel2024} apply to sums of
independent random matrices. They do not apply directly to the data-dependent,
dynamically coupled gates of a trained network, but they suggest tools for
asking which commutator phenomena persist across probabilistic gate models.
No such universality principle is assumed in the deterministic hierarchy
developed here.

    \section{Preliminaries}\label{sec:Pre}

We consider scalar-valued regression on
\begin{equation}\label{eq14}
\mathcal X\times\mathcal Y=\mathbb R^d\times\mathbb R,
\qquad x\sim\mu,
\qquad y=f_\ast(x),
\end{equation}
where \(\mu\) is a probability measure on \(\mathbb R^d\) and
\(f_\ast:\mathbb R^d\to\mathbb R\) is the target function. Given a loss
\(\mathcal L:\mathbb R\times\mathbb R\to[0,\infty)\), differentiable in its
second argument, the population risk is
\begin{equation}\label{Pre2}
\mathcal R(W)
=
\int_{\mathbb R^d}
\mathcal L\bigl(f_\ast(x),F(x,W)\bigr)\,\d\mu(x).
\end{equation}
The empirical setting is obtained by replacing \(\mu\) with
\(\mu_N=N^{-1}\sum_{i=1}^N\delta_{x_i}\), so that
\begin{equation}\label{Pre:empirical-risk}
\mathcal R_N(W)
:=
\int_{\mathbb R^d}
\mathcal L\bigl(f_\ast(x),F(x,W)\bigr)\,\d\mu_N(x)
=
\frac1N\sum_{i=1}^N
\mathcal L\bigl(f_\ast(x_i),F(x_i,W)\bigr).
\end{equation}
All operator identities below then have empirical analogues with expectations
replaced by empirical averages.
The restriction to scalar-valued outputs simplifies the notation only;
vector-valued outputs are discussed in Remark~\ref{rem:vector-output}.

For a matrix \(A\), we write \(\|A\|_{\mathrm{op}}\) and \(\|A\|_F\) for its
operator and Frobenius norms, respectively, so that
\begin{equation}\label{Pre:norms}
\|A\|_{\mathrm{op}}
\leq
\|A\|_F
\leq
\sqrt{\operatorname{rank}(A)}\,\|A\|_{\mathrm{op}}.
\end{equation}
For square matrices of the same size, \([A,B]=AB-BA\). Unless stated
otherwise, dependence on \(x\), \(W\), and, along gradient flow, \(t\), is
suppressed. Gradients with respect to vector variables are understood as
column vectors.

\subsection{Feedforward networks}

Let \(F(\cdot,W):\mathbb R^d\to\mathbb R\) be a feedforward network of depth
\(L\), without bias parameters, with
\begin{equation}\label{NetworkDefinition}
W=(W_0,\ldots,W_{L-1}),
\qquad
W_\ell\in\mathbb R^{m_{\ell+1}\times m_\ell},
\quad \ell=0,\ldots,L-1,
\end{equation}
where \(m_0=d\) and \(m_L=1\). The omission of biases preserves the
homogeneous operator structure and avoids additional affine terms. Given
\(a_0=x\), define
\begin{align}
z_\ell&=W_\ell a_\ell,
&a_{\ell+1}&=\sigma(z_\ell),
&&\ell=0,\ldots,L-2,
\label{eq15}\\
z_{L-1}&=W_{L-1}a_{L-1},
&a_L&=z_{L-1}=F(x,W),
\label{eq15+}
\end{align}
where \(\sigma:\mathbb R\to\mathbb R\) acts componentwise. We assume that
\(\sigma\) is locally Lipschitz and impose additional regularity only when
the activation geometry is differentiated. ReLU is treated separately from
the smooth-activation theory. We do not include width-dependent
normalizations in \(W_\ell\); fixed layerwise normalizations can be absorbed
into the weights and associated covariance operators without changing the
identities below.

\subsection{Backpropagation and Jacobian transport}

For \(\ell=0,\ldots,L-2\), define the gate and adjoint layer Jacobian by
\begin{equation}\label{Pre4}
D_\ell(x,W)
=
\operatorname{diag}\bigl(\sigma'(z_\ell(x,W))\bigr),
\qquad
J_\ell(x,W)
=
W_\ell^\top D_\ell(x,W).
\end{equation}
Thus \(J_\ell\) is the adjoint of the differential
\(D_\ell W_\ell\) of the map \(a_\ell\mapsto a_{\ell+1}\). At the linear
output layer, set \(J_{L-1}=W_{L-1}^\top\). At points where \(\sigma\) is not
differentiable, we use a fixed measurable selection of \(\sigma'\); for ReLU,
\(\sigma'(s)=\mathbf 1_{\{s>0\}}\). Define the output and loss sensitivities by
\begin{equation}\label{Pre5+}
\widetilde\delta_\ell
=
\nabla_{z_\ell}F,
\qquad
\delta_\ell
=
\nabla_{z_\ell}\mathcal L
=
\partial_F\mathcal L\bigl(f_\ast(x),F(x,W)\bigr)
\widetilde\delta_\ell,
\qquad
\ell=0,\ldots,L-1.
\end{equation}
Since the output layer is linear,
\begin{equation}\label{Pre8}
\widetilde\delta_{L-1}=1,
\qquad
\widetilde\delta_\ell
=
D_\ell W_{\ell+1}^\top\widetilde\delta_{\ell+1},
\qquad
\ell=L-2,\ldots,0.
\end{equation}
Consequently,
\begin{equation}\label{Pre10}
\nabla_{W_\ell}\mathcal L
=
\delta_\ell a_\ell^\top,
\qquad
\nabla_{W_\ell}F
=
\widetilde\delta_\ell a_\ell^\top,
\qquad
\ell=0,\ldots,L-1,
\end{equation}
and repeated application of the chain rule gives
\begin{align}
\nabla_{a_\ell}\mathcal L
&=
\partial_F\mathcal L\bigl(f_\ast(x),F(x,W)\bigr)
J_\ell J_{\ell+1}\cdots J_{L-1},
\label{Pre11}\\
\nabla_{a_\ell}F
&=
J_\ell J_{\ell+1}\cdots J_{L-1},
\qquad
\ell=0,\ldots,L-1.
\label{Pre12}
\end{align}

\subsection{Sensitivity and covariance operators}

For a differentiable scalar function \(H\) of a vector \(v\), the gradient
outer product is \(\nabla_vH\nabla_vH^\top\), and its average over the data
distribution is the corresponding Average Gradient Outer Product (AGOP).
At the activation level, define the pointwise and population operators
\begin{equation}\label{Pre14}
S_\ell
=
\nabla_{a_\ell}F\nabla_{a_\ell}F^\top,
\qquad
\mathcal S_\ell
=
\mathbb E_\mu[S_\ell],
\qquad
\ell=0,\ldots,L.
\end{equation}
Thus \(S_\ell\) is positive semidefinite and has rank at most one, whereas
\(\mathcal S_\ell\) aggregates the pointwise sensitivity directions. Since
\(F=a_L\) and the output layer is linear,
\begin{align}
S_L&=1,\quad S_{L-1}=W_{L-1}^\top W_{L-1},\quad S_\ell
=
W_\ell^\top D_\ell S_{\ell+1}D_\ell W_\ell,\quad\ell=0,\ldots,L-2.
\label{Pre16}
\end{align}

To distinguish pointwise from averaged backward sensitivity, define the
pointwise backward sensitivity operator by
\begin{equation}\label{Pre18}
\mathsf Q_{L-1}:=I_{m_L}=1,
\qquad
\mathsf Q_\ell
:=
D_\ell S_{\ell+1}D_\ell,
\qquad
\ell=0,\ldots,L-2,
\end{equation}
and define
\begin{equation}\label{Pre18+}
Q_\ell
:=
\mathbb E_\mu[\mathsf Q_\ell],
\qquad
\ell=0,\ldots,L-1.
\end{equation}
Thus \(\mathsf Q_\ell\) depends on \(x\), while \(Q_\ell\) is its population
average and has no \(x\)-dependence. Equivalently,
\(S_\ell=W_\ell^\top\mathsf Q_\ell W_\ell\); hence averaging
\eqref{Pre16} yields
\begin{equation}\label{Pre17}
\mathcal S_\ell
=
W_\ell^\top Q_\ell W_\ell,
\qquad
\ell=0,\ldots,L-1.
\end{equation}
The forward covariance operator and Neural Feature Matrix associated with
\(W_\ell\) are
\begin{equation}\label{Pre21}
P_\ell
=
W_\ell W_\ell^\top,
\qquad
\mathrm{NFM}_\ell
=
W_\ell^\top W_\ell,
\qquad
\ell=0,\ldots,L-1.
\end{equation}
Thus \(P_\ell\in\mathbb R^{m_{\ell+1}\times m_{\ell+1}}\) and
\(\mathrm{NFM}_\ell\in\mathbb R^{m_\ell\times m_\ell}\). They have the same
nonzero eigenvalues and describe the outgoing and incoming covariance
geometries of \(W_\ell\), respectively. Together, \eqref{Pre17} and
\eqref{Pre21} exhibit \(\mathcal S_\ell\) and \(\mathrm{NFM}_\ell\) as the
congruence images under \(W_\ell\) of \(Q_\ell\) and
\(I_{m_{\ell+1}}\), respectively; this is the common transport
representation underlying the commutator hierarchy. At the final hidden layer,
\(\mathcal S_{L-1}=S_{L-1}=\mathrm{NFM}_{L-1}\).

The alignment diagnostics studied below occur at three levels:
\begin{equation}\label{Pre:three-levels}
[D_\ell(x),P_\ell],
\qquad
[Q_\ell,P_\ell],
\qquad
[\mathcal S_\ell,\mathrm{NFM}_\ell].
\end{equation}
The first is defined for \(\ell=0,\ldots,L-2\), and the latter two for
\(\ell=0,\ldots,L-1\). They measure, respectively, the compatibility of
\(D_\ell\) with \(P_\ell\), of \(Q_\ell\) with \(P_\ell\), and of
\(\mathcal S_\ell\) with \(\mathrm{NFM}_\ell\).
No decay of these commutators is assumed: training may create, reduce,
transport, or cancel the corresponding incompatibilities. The purpose of the
subsequent hierarchy is to relate the three levels and identify the mechanisms
responsible for their behavior.

The loss-weighted population AGOP satisfies
\begin{equation}\label{Pre:loss-agop}
\mathcal S_\ell^{\mathcal L}(W)
:=
\mathbb E_\mu
\bigl[
\nabla_{a_\ell}\mathcal L\,
\nabla_{a_\ell}\mathcal L^\top
\bigr]
=
\mathbb E_\mu
\bigl[
(\partial_F\mathcal L)^2S_\ell
\bigr].
\end{equation}

\begin{remark}\label{rem:vector-output}
For \(F:\mathbb R^d\to\mathbb R^M\), let
\(\mathcal J_\ell=\partial F/\partial a_\ell\in
\mathbb R^{M\times m_\ell}\) and define
\begin{equation}\label{Pre:vector-agop}
S_\ell
=
\mathcal J_\ell^\top\mathcal J_\ell
=
\sum_{r=1}^M
\nabla_{a_\ell}F_r\nabla_{a_\ell}F_r^\top.
\end{equation}
With \(m_L=M\), \(S_L=I_M\), and
\(\mathsf Q_{L-1}=Q_{L-1}=I_M\), the transport identities and hidden-space
commutator relations retain the same form.
\end{remark}

\subsection{Spectral projectors}

Since \(P_\ell\) is symmetric and positive semidefinite, write
\begin{align}
P_\ell
&=
U_\ell\Lambda_\ell U_\ell^\top,
\qquad
U_\ell
=
[u_{\ell,1},\ldots,u_{\ell,m_{\ell+1}}]
\in O(m_{\ell+1}),
\label{Pre22-}\\
P_\ell
&=
\sum_{i=1}^{m_{\ell+1}}
\lambda_{\ell,i}u_{\ell,i}u_{\ell,i}^\top,
\qquad
\lambda_{\ell,1}\geq\cdots\geq
\lambda_{\ell,m_{\ell+1}}\geq0,
\label{Pre23+}
\end{align}
where
\(\Lambda_\ell
=
\operatorname{diag}(\lambda_{\ell,1},\ldots,\lambda_{\ell,m_{\ell+1}})\).
For \(1\leq k<m_{\ell+1}\), define
\begin{equation}\label{Pre:spectral-projector}
\Pi_\ell^{(k)}
=
\sum_{i=1}^k u_{\ell,i}u_{\ell,i}^\top,
\qquad
\Pi_\ell^{(k),\perp}
=
I_{m_{\ell+1}}-\Pi_\ell^{(k)},
\qquad
\gamma_\ell^{(k)}
=
\lambda_{\ell,k}-\lambda_{\ell,k+1}.
\end{equation}
We also use the endpoint conventions
\(\Pi_\ell^{(0)}=0\) and
\(\Pi_\ell^{(m_{\ell+1})}=I_{m_{\ell+1}}\).
If \(\gamma_\ell^{(k)}>0\), then \(\Pi_\ell^{(k)}\) is uniquely determined
by \(P_\ell\); if \(\gamma_\ell^{(k)}=0\), it depends on the choice of
eigenbasis across the corresponding degenerate spectral cluster. These
projectors and gaps provide the coordinates used below to localize
activation-covariance and sensitivity-covariance incompatibility.

\subsection{Imbalance and fluctuation operators}

For \(\ell=0,\ldots,L-2\), define the adjacent-layer imbalance by
\begin{equation}\label{Pre23apa}
\Delta_\ell
:=
P_\ell-\mathrm{NFM}_{\ell+1}
=
W_\ell W_\ell^\top-W_{\ell+1}^\top W_{\ell+1}.
\end{equation}
Thus \(\Delta_\ell\) measures the difference between the outgoing covariance
of layer \(\ell\) and the incoming covariance of layer \(\ell+1\). In deep
linear networks it is conserved under gradient flow, whereas in nonlinear
networks it enters as a source in the internal commutator recursion. The
commutator \([P_\ell,\mathrm{NFM}_{\ell+1}]\) measures only the spectral
incompatibility of the adjacent covariance geometries; hence
\(\Delta_\ell=0\) implies
\([P_\ell,\mathrm{NFM}_{\ell+1}]=0\), but not conversely.

Define the centered sensitivity fluctuation by
\begin{equation}\label{Pre:fluctuation}
R_\ell(x,W)
:=
S_\ell(x,W)-\mathcal S_\ell(W),
\qquad
\mathbb E_\mu[R_\ell]=0,
\qquad
\ell=0,\ldots,L.
\end{equation}
In particular, \(R_{L-1}=R_L=0\).
The imbalance \(\Delta_\ell\), fluctuation \(R_{\ell+1}\), and the gate
commutator \([D_\ell,P_\ell]\) generate the three local source terms in the
recursive decomposition of \([Q_\ell,P_\ell]\); the fourth term transports
the downstream commutator. This decomposition is developed in
Section~\ref{sec:commutatorhierarchy}.

\subsection{Gradient flow}

We consider the population gradient flow
\begin{equation}\label{Pre24}
\dot W_\ell
=
-\nabla_{W_\ell}\mathcal R(W)
=
-\int_{\mathbb R^d}
\delta_\ell(x,W)a_\ell(x,W)^\top\,\d\mu(x),
\qquad
\ell=0,\ldots,L-1.
\end{equation}

\medskip
\noindent
\textbf{Standing trajectory convention.}
Unless a stronger hypothesis is stated, a training trajectory \(W(t)\) is
locally absolutely continuous and
satisfies the selected gradient-flow equation \eqref{Pre24} for almost every
\(t\). At a ReLU kink we use throughout the fixed selection
\(\sigma'(s)=\mathbf 1_{\{s>0\}}\). Algebraic identities involving
\(D_\ell,Q_\ell,P_\ell\), and the three commutators are pointwise identities
at every time at which these operators are defined. Differential identities
for \(W_\ell,P_\ell,Q_\ell\), or their spectral projectors are interpreted
almost everywhere on intervals on which the relevant operator-valued maps are
absolutely continuous and the stated gaps remain open. We never differentiate
a ReLU gate in time: its global variation is treated by the finite-increment
estimates of Section~\ref{sec:ActivationGeometry}. Whenever continuity of
\(D_\ell(\cdot,t)\) in \(\mathrm L^2(\mu;F)\) is invoked, we explicitly use
the no-interface-mass condition
\(\mu\{x:z_{\ell,r}(x,t)=0\}=0\) at the limiting time. These conventions also
apply to empirical gradient flow, with expectations replaced by finite
averages.

Along such a trajectory, the chain rule for locally absolutely continuous
curves gives, for almost every \(t\),
\begin{equation}\label{Pre26}
\frac{\d}{\d t}\mathcal R(W(t))
=
-\sum_{\ell=0}^{L-1}
\|\nabla_{W_\ell}\mathcal R(W(t))\|_F^2.
\end{equation}
Using \eqref{Pre10}, the dissipation identity can equivalently be written as
\begin{align}
\frac{\d}{\d t}\mathcal R(W(t))
&=
-\sum_{\ell=0}^{L-1}
\int_{\mathbb R^d}\int_{\mathbb R^d}
K_\ell(x,x',W(t))\,\d\mu(x)\,\d\mu(x'),
\label{Pre27}\\
K_\ell(x,x',W)
&:=
\langle a_\ell(x,W),a_\ell(x',W)\rangle
\langle\delta_\ell(x,W),\delta_\ell(x',W)\rangle.
\label{Pre27apa}
\end{align}
Thus gradient-flow dissipation couples forward activation correlations with
backward loss-sensitivity correlations.

\section{The internal commutator hierarchy}\label{sec:commutatorhierarchy}

We now relate the three alignment levels introduced in
\eqref{Pre:three-levels}. The feature-side commutator is the most aggregated
quantity and follows directly from the factorization
\(\mathcal S_\ell=W_\ell^\top Q_\ell W_\ell\). Its internal preimage
\([Q_\ell,P_\ell]\), however, contains the mechanisms that the feature-side
observable alone cannot distinguish. We first make this compression explicit
and then derive the exact recursion satisfied by the internal commutators. No
decay statement is implicit in either identity.

\begin{lemma}\label{lem:commutator-transfer}
Fix \(t\geq0\) and \(\ell\in\{0,\ldots,L-1\}\), and set
\[
C_\ell^{\mathrm{feat}}:=[\mathcal S_\ell,\mathrm{NFM}_\ell],
\qquad
C_\ell^{\mathrm{int}}:=[Q_\ell,P_\ell].
\]
Then
\begin{equation}\label{eq:commutator-congruence}
C_\ell^{\mathrm{feat}}
=W_\ell^\top C_\ell^{\mathrm{int}}W_\ell
\end{equation}
and
\begin{equation}\label{eq:comm-upper}
\|C_\ell^{\mathrm{feat}}\|_F
\leq
\|W_\ell\|_{\mathrm{op}}^2\|C_\ell^{\mathrm{int}}\|_F.
\end{equation}
If \(W_\ell\) has full row rank, and \(\sigma_{\min}(W_\ell)\) denotes
its smallest singular value, then
\begin{equation}\label{eq:comm-two-sided}
\sigma_{\min}(W_\ell)^2\|C_\ell^{\mathrm{int}}\|_F
\leq
\|C_\ell^{\mathrm{feat}}\|_F
\leq
\sigma_{\max}(W_\ell)^2\|C_\ell^{\mathrm{int}}\|_F,
\end{equation}
and hence \(C_\ell^{\mathrm{feat}}=0\) if and only if
\(C_\ell^{\mathrm{int}}=0\).
\end{lemma}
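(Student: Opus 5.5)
The plan is to obtain the congruence identity \eqref{eq:commutator-congruence} by direct algebra from the transport factorization \eqref{Pre17} and the definitions \eqref{Pre21}. Both norm estimates will then follow from a singular value decomposition of \(W_\ell\). I will suppress \(t\) and \(\ell\) where harmless.

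\textbf{Step 1: the congruence identity.} Using \(\mathcal S_\ell=W_\ell^\top Q_\ell W_\ell\) and \(\mathrm{NFM}_\ell=W_\ell^\top W_\ell\), I compute
\[
\mathcal S_\ell\,\mathrm{NFM}_\ell=W_\ell^\top Q_\ell (W_\ell W_\ell^\top) W_\ell=W_\ell^\top Q_\ell P_\ell W_\ell,
\]
\[
\mathrm{NFM}_\ell\,\mathcal S_\ell=W_\ell^\top (W_\ell W_\ell^\top) Q_\ell W_\ell=W_\ell^\top P_\ell Q_\ell W_\ell.
\]
Subtracting the two gives \eqref{eq:commutator-congruence}. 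At \(\ell=L-1\) the same computation applies with \(Q_{L-1}=1\), in which case both sides vanish.

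\textbf{Step 2: the SVD reduction.} Write \(W_\ell=U\Sigma V^\top\) with \(U\in O(m_{\ell+1})\), \(V\in O(m_\ell)\), and \(\Sigma\in\mathbb R^{m_{\ell+1}\times m_\ell}\) rectangular diagonal with singular values \(s_1,s_2,\ldots\). Set \(\widetilde C:=U^\top C_\ell^{\mathrm{int}}U\). Orthogonal invariance of the Frobenius norm gives \(\|\widetilde C\|_F=\|C_\ell^{\mathrm{int}}\|_F\) and
\[
\|C_\ell^{\mathrm{feat}}\|_F=\|\Sigma^\top\widetilde C\Sigma\|_F .
\]
The entries of \(\Sigma^\top\widetilde C\Sigma\) are \(s_is_j\widetilde C_{ij}\) for \(i,j\le\min(m_\ell,m_{\ell+1})\), and all other entries are zero. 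Hence
\[
\|C_\ell^{\mathrm{feat}}\|_F^2=\sum_{i,j} s_i^2 s_j^2\,|\widetilde C_{ij}|^2 .
\]

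\textbf{Step 3: the two-sided bounds.} Bounding \(s_i^2s_j^2\le\|W_\ell\|_{\mathrm{op}}^4\) in the sum of Step 2 yields \eqref{eq:comm-upper}. This is also just submultiplicativity, \(\|AB\|_F\le\|A\|_{\mathrm{op}}\|B\|_F\), and \(\|W_\ell\|_{\mathrm{op}}=\sigma_{\max}(W_\ell)\) gives the right inequality of \eqref{eq:comm-two-sided}.

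For the lower bound, full row rank means \(m_{\ell+1}\le m_\ell\) and \(s_1,\ldots,s_{m_{\ell+1}}\ge\sigma_{\min}(W_\ell)>0\). Then every index pair \((i,j)\) of \(\widetilde C\in\mathbb R^{m_{\ell+1}\times m_{\ell+1}}\) appears in the sum with weight \(s_i^2s_j^2\ge\sigma_{\min}^4\). So no entry of \(\widetilde C\) is lost to the zero padding of \(\Sigma\). This gives \(\|C_\ell^{\mathrm{feat}}\|_F\ge\sigma_{\min}(W_\ell)^2\|C_\ell^{\mathrm{int}}\|_F\). Since \(\sigma_{\min}(W_\ell)>0\), the two-sided bound immediately yields the equivalence \(C_\ell^{\mathrm{feat}}=0\iff C_\ell^{\mathrm{int}}=0\).

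The only point needing care is this lower bound: it genuinely uses full row rank. Without it, entries \(\widetilde C_{ij}\) with \(i\) or \(j\) in \(\ker W_\ell^\top\) receive zero weight, and the lower bound fails. That failure is precisely the invisibility of covariance-null mixing noted in the introduction.
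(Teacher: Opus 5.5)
Your proof is correct. The congruence step is the same as in the paper, and your upper bound is the same submultiplicativity estimate. The routes differ only in the lower bound.

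The paper inverts the congruence directly. Under full row rank, $W_\ell W_\ell^\dagger=I$, so $C_\ell^{\mathrm{int}}=(W_\ell^\dagger)^\top C_\ell^{\mathrm{feat}}W_\ell^\dagger$. Then $\|W_\ell^\dagger\|_{\mathrm{op}}=\sigma_{\min}(W_\ell)^{-1}$ gives the bound in one line.

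You instead compute $\|C_\ell^{\mathrm{feat}}\|_F^2$ exactly as the weighted sum $\sum_{i,j}s_i^2s_j^2|\widetilde C_{ij}|^2$ and read both bounds off the weights. Your route needs a bit more bookkeeping: the rectangular $\Sigma$, and the check that full row rank makes every index of $\widetilde C$ appear with weight at least $\sigma_{\min}(W_\ell)^4$. In return it gives more. It is essentially the identity \eqref{eq:comm-svd} that the paper derives separately after the lemma. It also shows exactly which components of $C_\ell^{\mathrm{int}}$ are attenuated or lost when $W_\ell$ is ill-conditioned or rank deficient.

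The pseudoinverse argument is shorter. It also states the inversion of the congruence, and its instability as $\sigma_{\min}(W_\ell)\to0$, directly at the operator level.
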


\begin{proof}
Suppressing \(\ell\) and \(t\), the identities
\(\mathcal S=W^\top QW\), \(\mathrm{NFM}=W^\top W\), and \(P=WW^\top\)
give
\[
[\mathcal S,\mathrm{NFM}]
=W^\top(QP-PQ)W
=W^\top[Q,P]W.
\]
The upper bound follows from
\(\|ABC\|_F\leq\|A\|_{\mathrm{op}}\|B\|_F
\|C\|_{\mathrm{op}}\). If \(W\) has full row rank, its Moore-Penrose
inverse satisfies \(WW^\dagger=I\), so
\([Q,P]=(W^\dagger)^\top
[\mathcal S,\mathrm{NFM}]W^\dagger\). Since
\(\|W^\dagger\|_{\mathrm{op}}=\sigma_{\min}(W)^{-1}\), this also gives the
lower bound.
\end{proof}

The transport in \eqref{eq:commutator-congruence} is a congruence, not a
similarity. More precisely, let
\[
W_\ell
=
\sum_{i=1}^{r_\ell}
\sigma_{\ell,i}u_{\ell,i}v_{\ell,i}^\top,
\qquad
\sigma_{\ell,i}>0,
\]
be a thin singular-value decomposition. Since
\(
P_\ell u_{\ell,i}
=
\sigma_{\ell,i}^2u_{\ell,i}
\),
\begin{align}
\|C_\ell^{\mathrm{feat}}\|_F^2
&=
\sum_{i,j=1}^{r_\ell}
\sigma_{\ell,i}^2\sigma_{\ell,j}^2
\left|
u_{\ell,i}^\top
C_\ell^{\mathrm{int}}
u_{\ell,j}
\right|^2=
\sum_{i,j=1}^{r_\ell}
\sigma_{\ell,i}^2\sigma_{\ell,j}^2
\bigl(
\sigma_{\ell,j}^2-\sigma_{\ell,i}^2
\bigr)^2
\left|
u_{\ell,i}^\top Q_\ell u_{\ell,j}
\right|^2.
\label{eq:comm-svd}
\end{align}
The second identity displays two distinct spectral filters. The gap factor
\(
|\sigma_{\ell,j}^2-\sigma_{\ell,i}^2|
\)
is already present in the internal commutator
\(
C_\ell^{\mathrm{int}}=[Q_\ell,P_\ell]
\),
whereas transport to feature space introduces the additional factor
\(\sigma_{\ell,i}\sigma_{\ell,j}\).

Consequently, \(C_\ell^{\mathrm{int}}\), together with \(W_\ell\),
determines \(C_\ell^{\mathrm{feat}}\), but the converse generally fails.
Indeed, the congruence discards every component of
\(C_\ell^{\mathrm{int}}\) involving
\(\ker W_\ell^\top\) and attenuates the components associated with small
singular values. Even when \(W_\ell\) has full row rank and the congruence can
be inverted, the reconstruction becomes unstable as the smallest singular
value approaches zero. A small feature-side commutator may therefore reflect
a small internal commutator, additional suppression under singular-value
transport, loss of covariance-null directions, or a combination of these
effects.

In this relative sense, \(C_\ell^{\mathrm{int}}\) is the more informative
object: it retains the full internal commutator before feature-side
compression and is the quantity governed by the four-source recursive
decomposition. This does not mean that it resolves every form of internal
mixing. Since the commutator itself weights
\(u_{\ell,i}^\top Q_\ell u_{\ell,j}\) by the covariance gap, it remains
insensitive to mixing inside an exactly degenerate eigenspace and weakly
sensitive inside a nearly degenerate cluster. The buffered localized energies
provide the finer spectral resolution needed in that regime. At the terminal layer, \(Q_{L-1}=I_{m_L}\), and hence both commutators vanish.

We next resolve \(C_\ell^{\mathrm{int}}\) into one transported contribution
and three local sources.

\begin{theorem}\label{thm:internal-commutator-hierarchy}
The terminal internal commutator satisfies
\[
C_{L-1}^{\mathrm{int}}=[Q_{L-1},P_{L-1}]=0.
\]
For \(\ell=0,\ldots,L-2\), define
\begin{align}
T_\ell^{\mathrm{tr}}
&:=
\mathbb E_\mu
\bigl[
D_\ell W_{\ell+1}^\top C_{\ell+1}^{\mathrm{int}}
W_{\ell+1}D_\ell
\bigr],
\label{eq:transport-component}\\
T_\ell^{\mathrm{imb}}
&:=
\mathbb E_\mu
\bigl[
D_\ell[\mathcal S_{\ell+1},\Delta_\ell]D_\ell
\bigr],
\label{eq:imbalance-component}\\
T_\ell^{\mathrm{fluc}}
&:=
\mathbb E_\mu
\bigl[
D_\ell[R_{\ell+1},P_\ell]D_\ell
\bigr],
\label{eq:fluctuation-component}\\
T_\ell^{\mathrm{gate}}
&:=
\mathbb E_\mu
\bigl[
[D_\ell,P_\ell]S_{\ell+1}D_\ell
+D_\ell S_{\ell+1}[D_\ell,P_\ell]
\bigr].
\label{eq:gate-component}
\end{align}
Then each component is skew-symmetric and
\begin{equation}\label{PrimitiveFormulaCorrectintro}
C_\ell^{\mathrm{int}}
=T_\ell^{\mathrm{tr}}
+T_\ell^{\mathrm{imb}}
+T_\ell^{\mathrm{fluc}}
+T_\ell^{\mathrm{gate}}.
\end{equation}
\end{theorem}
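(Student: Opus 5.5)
The proof is a purely algebraic manipulation inside the expectation. First the terminal statement: $Q_{L-1}=\mathsf Q_{L-1}=I_{m_L}$ commutes with everything, so $C_{L-1}^{\mathrm{int}}=0$. For $\ell\le L-2$ I start from $Q_\ell=\mathbb E_\mu[D_\ell S_{\ell+1}D_\ell]$, from \eqref{Pre18}--\eqref{Pre18+}. Since $P_\ell$ does not depend on $x$, it can be moved inside the expectation:
\[
C_\ell^{\mathrm{int}}=\mathbb E_\mu\bigl[D_\ell S_{\ell+1}D_\ell P_\ell-P_\ell D_\ell S_{\ell+1}D_\ell\bigr].
\]
I assume the integrability of $D_\ell S_{\ell+1}D_\ell$ already implicit in the definition of $Q_\ell$. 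I will also assume that each term below is integrable. This holds, for instance, when $\sigma'$ is bounded and $\mathbb E_\mu\|S_{\ell+1}\|<\infty$.

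The key step is to commute $P_\ell$ past the outer gates. I use
\[
D_\ell S_{\ell+1}D_\ell P_\ell=D_\ell S_{\ell+1}P_\ell D_\ell+D_\ell S_{\ell+1}[D_\ell,P_\ell],
\qquad
P_\ell D_\ell S_{\ell+1}D_\ell=D_\ell P_\ell S_{\ell+1}D_\ell-[D_\ell,P_\ell]S_{\ell+1}D_\ell .
\]
Subtracting gives, pointwise in $x$,
\[
D_\ell S_{\ell+1}D_\ell P_\ell-P_\ell D_\ell S_{\ell+1}D_\ell=D_\ell[S_{\ell+1},P_\ell]D_\ell+[D_\ell,P_\ell]S_{\ell+1}D_\ell+D_\ell S_{\ell+1}[D_\ell,P_\ell].
\]
The last two terms integrate to $T_\ell^{\mathrm{gate}}$. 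Next I split the inner commutator twice, using $S_{\ell+1}=\mathcal S_{\ell+1}+R_{\ell+1}$ from \eqref{Pre:fluctuation} and $P_\ell=\mathrm{NFM}_{\ell+1}+\Delta_\ell$ from \eqref{Pre23apa}:
\[
[S_{\ell+1},P_\ell]=[\mathcal S_{\ell+1},\mathrm{NFM}_{\ell+1}]+[\mathcal S_{\ell+1},\Delta_\ell]+[R_{\ell+1},P_\ell].
\]
By Lemma~\ref{lem:commutator-transfer} at layer $\ell+1$, the first term equals $W_{\ell+1}^\top C_{\ell+1}^{\mathrm{int}}W_{\ell+1}$. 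Conjugating by $D_\ell$ and integrating then produces $T_\ell^{\mathrm{tr}}$, $T_\ell^{\mathrm{imb}}$ and $T_\ell^{\mathrm{fluc}}$, which is exactly \eqref{PrimitiveFormulaCorrectintro}.

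Skew-symmetry follows from three facts. The commutator of two symmetric matrices is skew-symmetric. Congruence $X\mapsto A^\top XA$ (here with $A=D_\ell$ symmetric, or $A=W_{\ell+1}D_\ell$) preserves skew-symmetry. Expectation preserves it as well. Since $C_{\ell+1}^{\mathrm{int}}$, $[\mathcal S_{\ell+1},\Delta_\ell]$ and $[R_{\ell+1},P_\ell]$ are commutators of symmetric matrices, $T_\ell^{\mathrm{tr}}$, $T_\ell^{\mathrm{imb}}$ and $T_\ell^{\mathrm{fluc}}$ are skew. For the gate term, $[D_\ell,P_\ell]^\top=-[D_\ell,P_\ell]$, so the transpose of $[D_\ell,P_\ell]S_{\ell+1}D_\ell$ is $-D_\ell S_{\ell+1}[D_\ell,P_\ell]$. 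The two summands are therefore negatives of each other's transposes, and their sum is skew.

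There is no real obstacle here. The only point needing care is the gate rearrangement: the order and signs of $[D_\ell,P_\ell]$ must come out exactly in the symmetric form \eqref{eq:gate-component}. The justification for splitting the expectation into four separately integrable pieces should also be stated explicitly.
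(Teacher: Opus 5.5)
Your proposal is correct and follows the paper's proof essentially line for line. It uses the same pointwise rearrangement of $[D_\ell S_{\ell+1}D_\ell,P_\ell]$ into $D_\ell[S_{\ell+1},P_\ell]D_\ell$ plus the two gate terms, and the same splitting $[S_{\ell+1},P_\ell]=[\mathcal S_{\ell+1},\mathrm{NFM}_{\ell+1}]+[\mathcal S_{\ell+1},\Delta_\ell]+[R_{\ell+1},P_\ell]$ combined with Lemma~\ref{lem:commutator-transfer} at layer $\ell+1$, followed by the same congruence and transpose argument for skew-symmetry; your explicit integrability caveat for splitting the expectation is a harmless addition that the paper leaves implicit.
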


\begin{proof}
Since \(Q_\ell=\mathbb E_\mu[\mathsf Q_\ell]\) and
\(\mathsf Q_\ell=D_\ell S_{\ell+1}D_\ell\),
\begin{align}
C_\ell^{\mathrm{int}}
&=
\mathbb E_\mu
\bigl[
D_\ell[S_{\ell+1},P_\ell]D_\ell
+[D_\ell,P_\ell]S_{\ell+1}D_\ell
+D_\ell S_{\ell+1}[D_\ell,P_\ell]
\bigr].
\label{eq:firstsplit}
\end{align}
Using \(S_{\ell+1}=\mathcal S_{\ell+1}+R_{\ell+1}\) and
\(P_\ell=\mathrm{NFM}_{\ell+1}+\Delta_\ell\), we obtain
\begin{align*}
[S_{\ell+1},P_\ell]
&=
[\mathcal S_{\ell+1},\mathrm{NFM}_{\ell+1}]
+[\mathcal S_{\ell+1},\Delta_\ell]
+[R_{\ell+1},P_\ell]\\
&=
W_{\ell+1}^\top C_{\ell+1}^{\mathrm{int}}W_{\ell+1}
+[\mathcal S_{\ell+1},\Delta_\ell]
+[R_{\ell+1},P_\ell].
\end{align*}
Substitution into \eqref{eq:firstsplit} proves
\eqref{PrimitiveFormulaCorrectintro}. The first three components are
congruences of skew-symmetric matrices. For the last one, symmetry of
\(D_\ell\) and \(S_{\ell+1}\) gives
\[
\bigl(
[D_\ell,P_\ell]S_{\ell+1}D_\ell
+D_\ell S_{\ell+1}[D_\ell,P_\ell]
\bigr)^\top
=-
[D_\ell,P_\ell]S_{\ell+1}D_\ell
-D_\ell S_{\ell+1}[D_\ell,P_\ell].
\]
Hence all four components are skew-symmetric.
\end{proof}

The term \(T_\ell^{\mathrm{tr}}\) transports downstream internal
incompatibility. The other terms are local: \(T_\ell^{\mathrm{imb}}\) measures
the interaction with adjacent-layer imbalance,
\(T_\ell^{\mathrm{fluc}}\) measures variation of the pointwise sensitivity
around its population average, and \(T_\ell^{\mathrm{gate}}\) is the nonlinear
gate-covariance contribution. The four components need not reinforce one
another in Frobenius space and may undergo substantial cancellation; none is
asserted to decay under training.

\begin{proposition}\label{prop:component-energy-decomposition}
For
\[
\mathcal E_\ell^\nu:=\|T_\ell^\nu\|_F^2,
\qquad
\mathcal I_\ell^{\nu,\eta}
:=2\langle T_\ell^\nu,T_\ell^\eta\rangle_F,
\qquad
\nu,\eta\in
\{\mathrm{tr},\mathrm{imb},\mathrm{fluc},\mathrm{gate}\},
\]
one has
\begin{equation}\label{eq:component-energy-decomposition}
\|C_\ell^{\mathrm{int}}\|_F^2
=
\sum_\nu\mathcal E_\ell^\nu
+
\sum_{\nu<\eta}\mathcal I_\ell^{\nu,\eta}.
\end{equation}
If \(\sum_\nu\|T_\ell^\nu\|_F>0\), the aggregate cancellation ratio
\begin{equation}\label{eq:aggregate-cancellation-ratio}
\chi_\ell
:=
\frac{\|C_\ell^{\mathrm{int}}\|_F}
{\sum_\nu\|T_\ell^\nu\|_F}
\end{equation}
satisfies \(0\leq\chi_\ell\leq1\). Small \(\chi_\ell\) indicates strong
matrix-valued cancellation among the source components. If the denominator in
\eqref{eq:aggregate-cancellation-ratio} vanishes, then every component and
\(C_\ell^{\mathrm{int}}\) vanish, and \(\chi_\ell\) is left undefined.
\end{proposition}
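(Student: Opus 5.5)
The plan is to read everything off the decomposition \eqref{PrimitiveFormulaCorrectintro} of Theorem~\ref{thm:internal-commutator-hierarchy} and the Hilbert-space structure of the Frobenius inner product $\langle A,B\rangle_F=\operatorname{tr}(A^\top B)$ on real $m_{\ell+1}\times m_{\ell+1}$ matrices. No further properties of the network are needed; the skew-symmetry of the components is not even required, although it guarantees that all four components lie in the same subspace.

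\emph{Energy identity.} First, expand $\|C_\ell^{\mathrm{int}}\|_F^2=\bigl\langle \sum_\nu T_\ell^\nu,\sum_\eta T_\ell^\eta\bigr\rangle_F$ by bilinearity. The inner product is symmetric, so each cross pair $(\nu,\eta)$ with $\nu\neq\eta$ appears twice and combines into $2\langle T_\ell^\nu,T_\ell^\eta\rangle_F=\mathcal I_\ell^{\nu,\eta}$. The diagonal terms give $\mathcal E_\ell^\nu$. This yields \eqref{eq:component-energy-decomposition} for any fixed ordering of the four labels used to interpret $\nu<\eta$.

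\emph{Bounds on $\chi_\ell$.} Second, apply the triangle inequality for $\|\cdot\|_F$ to \eqref{PrimitiveFormulaCorrectintro}, which gives $\|C_\ell^{\mathrm{int}}\|_F\le\sum_\nu\|T_\ell^\nu\|_F$. Dividing by the positive denominator gives $\chi_\ell\le1$, and nonnegativity of norms gives $\chi_\ell\ge0$.

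\emph{Degenerate case.} Third, suppose the denominator vanishes. A sum of nonnegative numbers is zero only if each term is zero, so every $\|T_\ell^\nu\|_F=0$ and hence every $T_\ell^\nu=0$. The decomposition then forces $C_\ell^{\mathrm{int}}=0$. The interpretive sentence about small $\chi_\ell$ follows directly: $\chi_\ell$ is small exactly when the norm of the sum is much smaller than the sum of the norms. In view of \eqref{eq:component-energy-decomposition}, this requires negative interaction terms $\mathcal I_\ell^{\nu,\eta}$.

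There is no genuine obstacle. The only point needing care is the bookkeeping of the factor $2$ in the cross terms, which relies on the symmetry of the real Frobenius inner product.
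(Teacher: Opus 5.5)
Your proposal is correct and follows essentially the same route as the paper, which also just expands the squared Frobenius norm of the four-source identity \eqref{PrimitiveFormulaCorrectintro} and applies the triangle inequality. Your explicit handling of the degenerate case and your remark that small \(\chi_\ell\) forces some negative interaction terms \(\mathcal I_\ell^{\nu,\eta}\) are accurate elaborations of that one-line argument.
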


\begin{proof}
Both statements follow by expanding the squared Frobenius norm in
\eqref{PrimitiveFormulaCorrectintro} and applying the triangle inequality.
\end{proof}

Thus a small internal commutator has two distinct explanations: the
components may be individually small, or large components may cancel.
The following norm estimate controls the first mechanism but necessarily
discards the second.

Define the gate-covariance energy
\begin{equation}\label{eq:gate-covariance-energy}
\mathcal G_\ell(t)
:=
\mathbb E_\mu\|[D_\ell(t),P_\ell(t)]\|_F^2,
\qquad \ell=0,\ldots,L-2.
\end{equation}

\begin{lemma}\label{lem:recursive-estimate}
For \(\ell=0,\ldots,L-2\), set
\begin{align*}
d_\ell
&:=\mathbb E_\mu\|D_\ell\|_{\mathrm{op}}^2,
&
v_\ell
&:=\mathbb E_\mu
\bigl[\|D_\ell\|_{\mathrm{op}}^2\|R_{\ell+1}\|_F\bigr],
&
s_\ell
&:=\mathbb E_\mu
\bigl[\|S_{\ell+1}\|_{\mathrm{op}}^2
\|D_\ell\|_{\mathrm{op}}^2\bigr].
\end{align*}
Then
\begin{align}
\|C_\ell^{\mathrm{int}}\|_F^2
&\leq
4\|W_{\ell+1}\|_{\mathrm{op}}^4d_\ell^2
\|C_{\ell+1}^{\mathrm{int}}\|_F^2
+16d_\ell^2\|\Delta_\ell\|_{\mathrm{op}}^2
\|\mathcal S_{\ell+1}\|_F^2
+16\|W_\ell\|_{\mathrm{op}}^4v_\ell^2
+16\mathcal G_\ell s_\ell.
\label{esta+}
\end{align}
\end{lemma}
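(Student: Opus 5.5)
The proof starts from the exact four-source decomposition \eqref{PrimitiveFormulaCorrectintro} of Theorem~\ref{thm:internal-commutator-hierarchy} and the elementary inequality
\[
\|A_1+A_2+A_3+A_4\|_F^2\leq 4\sum_{i=1}^4\|A_i\|_F^2 ,
\]
which follows from Cauchy--Schwarz in $\mathbb R^4$. This accounts for the overall factor $4$ in front of every term of \eqref{esta+}. It then suffices to show
\begin{align*}
\|T_\ell^{\mathrm{tr}}\|_F&\leq \|W_{\ell+1}\|_{\mathrm{op}}^2\,d_\ell\,\|C_{\ell+1}^{\mathrm{int}}\|_F,
&
\|T_\ell^{\mathrm{imb}}\|_F&\leq 2d_\ell\,\|\Delta_\ell\|_{\mathrm{op}}\|\mathcal S_{\ell+1}\|_F,\\
\|T_\ell^{\mathrm{fluc}}\|_F&\leq 2\|W_\ell\|_{\mathrm{op}}^2\,v_\ell,
&
\|T_\ell^{\mathrm{gate}}\|_F&\leq 2\,\mathcal G_\ell^{1/2}s_\ell^{1/2}.
\end{align*}
Squaring these and multiplying by $4$ gives exactly the coefficients $4,16,16,16$ in \eqref{esta+}.

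For the first three bounds, move the Frobenius norm inside the expectation, $\|\mathbb E_\mu[\cdot]\|_F\leq\mathbb E_\mu\|\cdot\|_F$. Then apply $\|ABC\|_F\leq\|A\|_{\mathrm{op}}\|B\|_F\|C\|_{\mathrm{op}}$ with the outer factors $D_\ell$ on both sides, which gives $\|D_\ell\|_{\mathrm{op}}^2$.

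\emph{Transport term.} The middle factor is $W_{\ell+1}^\top C_{\ell+1}^{\mathrm{int}}W_{\ell+1}$. Its Frobenius norm is bounded by $\|W_{\ell+1}\|_{\mathrm{op}}^2\|C_{\ell+1}^{\mathrm{int}}\|_F$, which is deterministic. Taking expectations produces $d_\ell$.

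\emph{Imbalance term.} Use $\|[A,B]\|_F\leq 2\|A\|_F\|B\|_{\mathrm{op}}$ with $A=\mathcal S_{\ell+1}$ and $B=\Delta_\ell$. Again only the factor $\|D_\ell\|_{\mathrm{op}}^2$ depends on $x$, so its expectation gives $d_\ell$.

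\emph{Fluctuation term.} The same commutator bound gives $\|[R_{\ell+1},P_\ell]\|_F\leq 2\|P_\ell\|_{\mathrm{op}}\|R_{\ell+1}\|_F$, and $\|P_\ell\|_{\mathrm{op}}=\|W_\ell\|_{\mathrm{op}}^2$. Here $R_{\ell+1}$ depends on $x$, so the expectation is the joint quantity $v_\ell$ and does not factor.

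\emph{Gate term.} Each of the two summands has Frobenius norm at most $\|[D_\ell,P_\ell]\|_F\,\|S_{\ell+1}\|_{\mathrm{op}}\|D_\ell\|_{\mathrm{op}}$. Hence
\[
\|T_\ell^{\mathrm{gate}}\|_F\leq 2\,\mathbb E_\mu\bigl[\|[D_\ell,P_\ell]\|_F\,\|S_{\ell+1}\|_{\mathrm{op}}\|D_\ell\|_{\mathrm{op}}\bigr].
\]
Cauchy--Schwarz in $L^2(\mu)$ then bounds this by $2\,\mathcal G_\ell^{1/2}s_\ell^{1/2}$, using the definitions \eqref{eq:gate-covariance-energy} of $\mathcal G_\ell$ and of $s_\ell$.

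The only step that needs care is this gate term. It is the one place where two $x$-dependent factors must be separated, and Cauchy--Schwarz (rather than a sup bound) is what yields the product $\mathcal G_\ell s_\ell$. Everything else is routine submultiplicativity. I would add a remark that the bound is informative only when $d_\ell$, $v_\ell$ and $s_\ell$ are finite; otherwise the inequality holds trivially with value $+\infty$. No further integrability hypothesis is needed for the argument itself.
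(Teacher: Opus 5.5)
Your proposal is correct and follows essentially the same route as the paper. Both arguments bound each of the four source components using $\|\mathbb E_\mu[\cdot]\|_F\leq\mathbb E_\mu\|\cdot\|_F$, submultiplicativity and the commutator bound, then split the gate expectation by Cauchy--Schwarz into $\mathcal G_\ell s_\ell$, and finally apply $(a+b+c+d)^2\leq4(a^2+b^2+c^2+d^2)$, so your constants match \eqref{esta+} exactly.
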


\begin{proof}
The four components satisfy
\begin{align*}
\|T_\ell^{\mathrm{tr}}\|_F
&\leq
\|W_{\ell+1}\|_{\mathrm{op}}^2d_\ell
\|C_{\ell+1}^{\mathrm{int}}\|_F,\qquad\|T_\ell^{\mathrm{imb}}\|_F
\leq
2d_\ell\|\Delta_\ell\|_{\mathrm{op}}
\|\mathcal S_{\ell+1}\|_F,\\
\|T_\ell^{\mathrm{fluc}}\|_F
&\leq
2\|W_\ell\|_{\mathrm{op}}^2v_\ell,\qquad
\|T_\ell^{\mathrm{gate}}\|_F
\leq
2\mathbb E_\mu
\bigl[
\|[D_\ell,P_\ell]\|_F
\|S_{\ell+1}\|_{\mathrm{op}}
\|D_\ell\|_{\mathrm{op}}
\bigr].
\end{align*}
Here we used
\(\|[A,B]\|_F\leq2\|A\|_{\mathrm{op}}\|B\|_F\) and
\(\|P_\ell\|_{\mathrm{op}}=\|W_\ell\|_{\mathrm{op}}^2\).
Applying Cauchy-Schwarz to the last expectation and
\((a+b+c+d)^2\leq4(a^2+b^2+c^2+d^2)\) proves \eqref{esta+}.
\end{proof}

\begin{corollary}\label{cor:ReLU-commutator-estimate}
If \(\sigma\) is ReLU, then
\begin{align}
\|C_\ell^{\mathrm{int}}\|_F^2
&\leq
4\|W_{\ell+1}\|_{\mathrm{op}}^4
\|C_{\ell+1}^{\mathrm{int}}\|_F^2
+16\|\Delta_\ell\|_{\mathrm{op}}^2
\|\mathcal S_{\ell+1}\|_F^2
+16\|W_\ell\|_{\mathrm{op}}^4
\mathbb E_\mu\|R_{\ell+1}\|_F^2
+16\mathcal G_\ell
\mathbb E_\mu\|S_{\ell+1}\|_{\mathrm{op}}^2.
\label{ReLUCommutatorEstimate}
\end{align}
\end{corollary}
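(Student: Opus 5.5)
The corollary follows by specializing Lemma~\ref{lem:recursive-estimate} to ReLU, where each gate is a diagonal $0/1$ matrix. The plan is to bound the three constants $d_\ell$, $v_\ell$, $s_\ell$ by quantities independent of the gates and substitute these bounds into \eqref{esta+}.

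\textbf{Step 1: bound the gate norms.} With the fixed selection $\sigma'(s)=\mathbf 1_{\{s>0\}}$, every diagonal entry of $D_\ell(x)$ lies in $\{0,1\}$. Hence $\|D_\ell(x)\|_{\mathrm{op}}\leq 1$ for every $x$, with equality unless the whole layer is inactive at $x$.

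\textbf{Step 2: bound the constants.} From Step 1 we get
\[
d_\ell=\mathbb E_\mu\|D_\ell\|_{\mathrm{op}}^2\leq1,
\]
so $d_\ell^2\leq 1$. This turns the first two terms of \eqref{esta+} into the first two terms of \eqref{ReLUCommutatorEstimate}. Likewise,
\[
s_\ell\leq \mathbb E_\mu\|S_{\ell+1}\|_{\mathrm{op}}^2,
\]
which gives the gate term. For the fluctuation term,
\[
v_\ell\leq \mathbb E_\mu\|R_{\ell+1}\|_F,
\]
and Jensen's inequality (equivalently Cauchy--Schwarz against the probability measure $\mu$) yields
\[
v_\ell^2\leq\bigl(\mathbb E_\mu\|R_{\ell+1}\|_F\bigr)^2\leq \mathbb E_\mu\|R_{\ell+1}\|_F^2.
\]
Substituting all three bounds into \eqref{esta+} gives \eqref{ReLUCommutatorEstimate} directly.

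\textbf{Technical point: integrability.} The only step needing care is Lemma~\ref{lem:recursive-estimate} itself, which only requires $\sigma$ to be locally Lipschitz with a measurable selection of $\sigma'$; ReLU with the stated selection qualifies. The integrals $\mathbb E_\mu\|R_{\ell+1}\|_F^2$ and $\mathbb E_\mu\|S_{\ell+1}\|_{\mathrm{op}}^2$ are implicitly assumed finite, since otherwise the inequality is trivial. These quantities are finite under the standing assumption that the second moments of $\mu$ are finite, because for ReLU $S_{\ell+1}$ is bounded by products of operator norms of the weights. Beyond this check, the argument is routine and has no real obstacle.
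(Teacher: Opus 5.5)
Your proposal is correct and matches the paper's proof: you use $\|D_\ell(x)\|_{\mathrm{op}}\leq1$ to get $d_\ell\leq1$, $s_\ell\leq\mathbb E_\mu\|S_{\ell+1}\|_{\mathrm{op}}^2$, and $v_\ell^2\leq(\mathbb E_\mu\|R_{\ell+1}\|_F)^2\leq\mathbb E_\mu\|R_{\ell+1}\|_F^2$, and then substitute into Lemma~\ref{lem:recursive-estimate}. One small correction to your integrability remark: the paper imposes no moment assumption on $\mu$, and none is needed, because for ReLU $\|S_{\ell+1}(x)\|_{\mathrm{op}}\leq\prod_{j=\ell+1}^{L-1}\|W_j\|_{\mathrm{op}}^2$ uniformly in $x$, so every expectation involved is automatically finite.
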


\begin{proof}
For ReLU, \(\|D_\ell\|_{\mathrm{op}}\leq1\). Hence
\(d_\ell\leq1\),
\(v_\ell^2\leq\mathbb E_\mu\|R_{\ell+1}\|_F^2\), and
\(s_\ell\leq\mathbb E_\mu\|S_{\ell+1}\|_{\mathrm{op}}^2\).
The conclusion follows from Lemma~\ref{lem:recursive-estimate}.
\end{proof}

At the first nontrivial internal level,
\[
C_{L-1}^{\mathrm{int}}=0,
\qquad
S_{L-1}=\mathcal S_{L-1}=W_{L-1}^\top W_{L-1},
\qquad
R_{L-1}=0.
\]
Consequently,
\begin{equation}\label{eq:penultimate-exact-recursion}
C_{L-2}^{\mathrm{int}}
=T_{L-2}^{\mathrm{imb}}+T_{L-2}^{\mathrm{gate}},
\end{equation}
and
\begin{equation}\label{eq:penultimate-energy-decomposition}
\|C_{L-2}^{\mathrm{int}}\|_F^2
=
\mathcal E_{L-2}^{\mathrm{imb}}
+\mathcal E_{L-2}^{\mathrm{gate}}
+\mathcal I_{L-2}^{\mathrm{imb},\mathrm{gate}}.
\end{equation}
For ReLU,
\begin{equation}\label{eq:penultimate-ReLU-estimate}
\|C_{L-2}^{\mathrm{int}}\|_F^2
\leq
16\|\Delta_{L-2}\|_{\mathrm{op}}^2\|S_{L-1}\|_F^2
+16\mathcal G_{L-2}\|S_{L-1}\|_{\mathrm{op}}^2.
\end{equation}
Thus the hierarchy is initiated by imbalance and gate geometry, after which
the resulting internal incompatibility is transported toward the input and
supplemented by all three local sources. This is not an implication chain
from \([D_\ell,P_\ell]\) to \(C_\ell^{\mathrm{int}}\) to
\(C_\ell^{\mathrm{feat}}\): the gate commutator is only one local source of
\(C_\ell^{\mathrm{int}}\), while
\(C_\ell^{\mathrm{feat}}=W_\ell^\top C_\ell^{\mathrm{int}}W_\ell\) is a
weighted compression of the combined internal geometry.

\begin{remark}
\label{rem:choice-source-decomposition}
The four-source decomposition in
\eqref{PrimitiveFormulaCorrectintro} is exact but not algebraically unique.
Its form is chosen to separate the mechanisms relevant to the commutator
hierarchy. The transport term is the only component containing the downstream
commutator \(C_{\ell+1}^{\mathrm{int}}\); the imbalance term isolates the
interaction of the population sensitivity \(\mathcal S_{\ell+1}\) with the
adjacent-layer mismatch \(\Delta_\ell\); the fluctuation term is the part of
the central commutator contribution
\(D_\ell[S_{\ell+1},P_\ell]D_\ell\) generated by the centered field
\(R_{\ell+1}\); and the gate term contains the product-rule contributions
generated by \([D_\ell,P_\ell]\), weighted by the full pointwise sensitivity
\(S_{\ell+1}\). Thus the labels describe the algebraic origin of the terms
rather than mutually exclusive dependence on the underlying quantities. This
convention keeps every component skew-symmetric and provides the source
normalization used in the energy identities and numerical diagnostics below. For a coarser description of propagation, the three local sources can be
combined. Define
\[
\mathcal T_\ell(A)
:=
\mathbb E_\mu
\bigl[
D_\ell W_{\ell+1}^\top A
W_{\ell+1}D_\ell
\bigr]
\]
and
\[
T_\ell^{\mathrm{loc}}
:=
T_\ell^{\mathrm{imb}}
+
T_\ell^{\mathrm{fluc}}
+
T_\ell^{\mathrm{gate}}.
\]
Then \eqref{PrimitiveFormulaCorrectintro} becomes
\[
C_\ell^{\mathrm{int}}
=
\mathcal T_\ell
\bigl(
C_{\ell+1}^{\mathrm{int}}
\bigr)
+
T_\ell^{\mathrm{loc}}.
\]
Since \(C_{L-1}^{\mathrm{int}}=0\), iteration gives
\[
C_\ell^{\mathrm{int}}
=
\sum_{j=\ell}^{L-2}
\bigl(
\mathcal T_\ell\circ\cdots\circ\mathcal T_{j-1}
\bigr)
\bigl(
T_j^{\mathrm{loc}}
\bigr),
\]
where the composition is interpreted as the identity when \(j=\ell\).
Thus every internal incompatibility is generated locally at the same or a
deeper layer and, when generated deeper, is transported backward through the
intervening layers. Finer decompositions are possible. For example,
\[
[R_{\ell+1},P_\ell]
=
[R_{\ell+1},\mathrm{NFM}_{\ell+1}]
+
[R_{\ell+1},\Delta_\ell]
\]
splits the fluctuation source into a contribution relative to the downstream
NFM and a mixed fluctuation-imbalance contribution. The gate source could
similarly be divided by writing
\(S_{\ell+1}=\mathcal S_{\ell+1}+R_{\ell+1}\). Such refinements introduce
additional source energies and pairwise interactions without changing the
underlying identity. We retain the four-source formulation because it separates the mechanisms
studied here without introducing additional mixed source terms and pairwise
interactions. Accordingly, all source-level energies and cancellation
diagnostics below refer to this fixed convention.
\end{remark}
\subsection{Evolution of the adjacent-layer imbalance}

We next record how one local source evolves under gradient flow. Set
\[
G_j(t):=\nabla_{W_j}\mathcal R(W(t))
\]
and
\begin{align}
F_\ell
&:=
-G_\ell W_\ell^\top-W_\ell G_\ell^\top
+G_{\ell+1}^\top W_{\ell+1}
+W_{\ell+1}^\top G_{\ell+1}.
\label{eq:imbalance-forcing}
\end{align}

\begin{proposition}\label{prop:ImbalanceEvolution}
Under the unregularized gradient flow \(\dot W_j=-G_j\),
\begin{equation}\label{ImbalanceFlow}
\dot\Delta_\ell=F_\ell.
\end{equation}
Moreover,
\begin{align}
\|F_\ell\|_{\mathrm{op}}
&\leq
2\|W_\ell\|_{\mathrm{op}}\|G_\ell\|_{\mathrm{op}}
+2\|W_{\ell+1}\|_{\mathrm{op}}
\|G_{\ell+1}\|_{\mathrm{op}}.
\label{ImbalanceEstimate}
\end{align}
Under the regularized flow \(\dot W_j=-G_j-\lambda W_j\), \(\lambda>0\),
with \(G_j=\nabla_{W_j}\mathcal R\) as above,
\begin{equation}\label{RegularizedImbalanceFlow}
\dot\Delta_\ell+2\lambda\Delta_\ell=F_\ell.
\end{equation}
\end{proposition}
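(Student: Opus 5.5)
The plan is to differentiate the definition \(\Delta_\ell=W_\ell W_\ell^\top-W_{\ell+1}^\top W_{\ell+1}\) directly along the trajectory, using the product rule for locally absolutely continuous matrix-valued curves. This is valid almost everywhere under the standing trajectory convention, since products of locally absolutely continuous maps are locally absolutely continuous. We have
\[
\dot\Delta_\ell
=\dot W_\ell W_\ell^\top+W_\ell\dot W_\ell^\top
-\dot W_{\ell+1}^\top W_{\ell+1}-W_{\ell+1}^\top\dot W_{\ell+1}.
\]
Substituting \(\dot W_j=-G_j\) gives exactly
\[
-G_\ell W_\ell^\top-W_\ell G_\ell^\top+G_{\ell+1}^\top W_{\ell+1}+W_{\ell+1}^\top G_{\ell+1}=F_\ell,
\]
which is \eqref{ImbalanceFlow}. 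No structural property of \(G_j\) (such as the backpropagation form \eqref{Pre10}) is needed. The identity is purely algebraic, and this is what allows nonlinear networks to have \(F_\ell\neq0\).

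For the bound \eqref{ImbalanceEstimate}, I would apply the triangle inequality to the four terms of \(F_\ell\). Submultiplicativity of the operator norm and \(\|A^\top\|_{\mathrm{op}}=\|A\|_{\mathrm{op}}\) bound each of the two \(\ell\)-terms by \(\|W_\ell\|_{\mathrm{op}}\|G_\ell\|_{\mathrm{op}}\), and each of the two \((\ell+1)\)-terms by \(\|W_{\ell+1}\|_{\mathrm{op}}\|G_{\ell+1}\|_{\mathrm{op}}\). Summing these four bounds gives the stated estimate.

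For the regularized flow, I would substitute \(\dot W_j=-G_j-\lambda W_j\) into the same product-rule expansion. The \(G\)-parts reproduce \(F_\ell\). The \(\lambda\)-parts contribute
\[
-\lambda W_\ell W_\ell^\top-\lambda W_\ell W_\ell^\top+\lambda W_{\ell+1}^\top W_{\ell+1}+\lambda W_{\ell+1}^\top W_{\ell+1}=-2\lambda\Delta_\ell.
\]
Rearranging yields \eqref{RegularizedImbalanceFlow}.

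There is no real obstacle here. The only points needing care are the sign bookkeeping on the \(W_{\ell+1}\) terms, which enter \(\Delta_\ell\) with a minus sign, and the remark that all identities hold for almost every \(t\) in the sense of the standing convention.
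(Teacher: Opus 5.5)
Your proposal is correct and follows essentially the same route as the paper. The paper also differentiates $\Delta_\ell=W_\ell W_\ell^\top-W_{\ell+1}^\top W_{\ell+1}$, substitutes the relevant flow (with the $\lambda$-terms producing $-2\lambda\Delta_\ell$), and obtains the norm bound from the triangle inequality and submultiplicativity.
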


\begin{proof}
Differentiate
\(\Delta_\ell=W_\ell W_\ell^\top-W_{\ell+1}^\top W_{\ell+1}\)
and substitute the corresponding gradient flow. The norm estimate follows
from the triangle inequality and submultiplicativity.
\end{proof}
\begin{remark}
\label{rem:deep-linear-balancedness}
For a deep linear network and any differentiable loss depending on the weights
only through the end-to-end product, the chain rule gives
\[
G_\ell W_\ell^\top
=
W_{\ell+1}^\top G_{\ell+1},
\qquad
W_\ell G_\ell^\top
=
G_{\ell+1}^\top W_{\ell+1}.
\]
Consequently, \(F_\ell=0\). Under unregularized gradient flow, this yields $\Delta_\ell(t)=\Delta_\ell(0)$, whereas under quadratic weight decay with the same coefficient \(\lambda>0\)
at every layer, $\Delta_\ell(t)
=
e^{-2\lambda t}\Delta_\ell(0)$. These are the classical matrix balancedness identities; see, for example,
\cite{AroraCohenGolowichHu2019}. In nonlinear networks, the gates generally
destroy the full matrix identities, and \(F_\ell\) measures the corresponding
failure of the deep-linear conservation law. For bias-free networks with
positively homogeneous activations, however, a diagonal neuronwise
conservation law survives, as explained in
Remark~\ref{rem:neuronwise-balancedness} below.
\end{remark}

\begin{corollary}\label{cor:ImbalanceLongTime}
Suppose that
\begin{equation}\label{eq:imbalance-integrability}
\int_0^\infty
\bigl(
\|W_\ell(t)\|_{\mathrm{op}}\|G_\ell(t)\|_{\mathrm{op}}
+\|W_{\ell+1}(t)\|_{\mathrm{op}}
\|G_{\ell+1}(t)\|_{\mathrm{op}}
\bigr)\,\d t<\infty.
\end{equation}
Then, for the unregularized flow, \(\Delta_\ell(t)\) converges to a symmetric
matrix \(\Delta_\ell^\infty\), and
\begin{align}
\|\Delta_\ell^\infty-\Delta_\ell(t)\|_{\mathrm{op}}
&\leq
2\int_t^\infty
\bigl(
\|W_\ell(s)\|_{\mathrm{op}}\|G_\ell(s)\|_{\mathrm{op}}
+\|W_{\ell+1}(s)\|_{\mathrm{op}}
\|G_{\ell+1}(s)\|_{\mathrm{op}}
\bigr)\,\d s.
\label{eq:imbalance-tail-estimate}
\end{align}
For the regularized flow,
\begin{align}
\|\Delta_\ell(t)\|_{\mathrm{op}}
&\leq
e^{-2\lambda t}\|\Delta_\ell(0)\|_{\mathrm{op}}+
2\int_0^t e^{-2\lambda(t-s)}
\bigl(
\|W_\ell(s)\|_{\mathrm{op}}\|G_\ell(s)\|_{\mathrm{op}}
+\|W_{\ell+1}(s)\|_{\mathrm{op}}
\|G_{\ell+1}(s)\|_{\mathrm{op}}
\bigr)\,\d s,
\label{RegularizedImbalanceIntegralEstimate}
\end{align}
and \(\Delta_\ell(t)\to0\) as \(t\to\infty\).
\end{corollary}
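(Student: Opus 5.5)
The plan is to integrate the matrix ODEs of Proposition~\ref{prop:ImbalanceEvolution} and bound the forcing with the operator-norm estimate \eqref{ImbalanceEstimate}. Throughout, we work on the locally absolutely continuous trajectory fixed by the standing convention. Then \(\Delta_\ell(t)=W_\ell W_\ell^\top-W_{\ell+1}^\top W_{\ell+1}\) is locally absolutely continuous, being a product of such maps. Moreover, \eqref{ImbalanceFlow} and \eqref{RegularizedImbalanceFlow} hold for almost every \(t\). Write
\[
h(s):=\|W_\ell(s)\|_{\mathrm{op}}\|G_\ell(s)\|_{\mathrm{op}}+\|W_{\ell+1}(s)\|_{\mathrm{op}}\|G_{\ell+1}(s)\|_{\mathrm{op}},
\]
so that \(\|F_\ell(s)\|_{\mathrm{op}}\leq2h(s)\) by \eqref{ImbalanceEstimate}. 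By \eqref{eq:imbalance-integrability}, \(h\in L^1(0,\infty)\).

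\emph{Unregularized flow.} Absolute continuity gives \(\Delta_\ell(t)-\Delta_\ell(s)=\int_s^tF_\ell(\tau)\,\d\tau\). Taking operator norms, we get \(\|\Delta_\ell(t)-\Delta_\ell(s)\|_{\mathrm{op}}\leq2\int_s^th\). The right-hand side tends to \(0\) as \(s,t\to\infty\) by integrability, so \(\Delta_\ell(t)\) is Cauchy and converges to some \(\Delta_\ell^\infty\). This limit is symmetric because the symmetric matrices form a closed set. Letting the upper endpoint tend to infinity then yields \eqref{eq:imbalance-tail-estimate}.

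\emph{Regularized flow.} Here we use an integrating factor. The map \(e^{2\lambda t}\Delta_\ell(t)\) is locally absolutely continuous, and its derivative equals \(e^{2\lambda t}F_\ell(t)\) almost everywhere. This gives the variation-of-constants formula
\[
\Delta_\ell(t)=e^{-2\lambda t}\Delta_\ell(0)+\int_0^te^{-2\lambda(t-s)}F_\ell(s)\,\d s.
\]
Applying the triangle inequality and \(\|F_\ell\|_{\mathrm{op}}\leq2h\) gives \eqref{RegularizedImbalanceIntegralEstimate}.

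\emph{Decay to zero.} The first term of \eqref{RegularizedImbalanceIntegralEstimate} tends to \(0\). For the convolution term, split the integral at \(t/2\). On \([0,t/2]\), the kernel is at most \(e^{-\lambda t}\), so that piece is bounded by \(e^{-\lambda t}\|h\|_{L^1}\). On \([t/2,t]\), the kernel is at most \(1\), so that piece is bounded by \(\int_{t/2}^\infty h\). Both pieces tend to \(0\). Equivalently, this is dominated convergence applied to \(\mathbf 1_{[0,t]}(s)e^{-2\lambda(t-s)}h(s)\).

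The main subtlety is interpretive rather than technical. In the regularized case, hypothesis \eqref{eq:imbalance-integrability} must be read with \(G_j=\nabla_{W_j}\mathcal R\) along the regularized trajectory, not with the full velocity \(\dot W_j\). With that reading, the convolution argument in the previous paragraph applies directly. Without integrability, boundedness of \(h\) together with \(h\to0\) would also suffice, but we do not need this. We also need the limit interchange in the tail estimate to be legitimate; it is, because \(s\mapsto\Delta_\ell(s)\) converges in norm.
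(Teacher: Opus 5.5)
Your proposal is correct and takes essentially the paper's route. In the unregularized case you integrate \(\dot\Delta_\ell=F_\ell\), bound the forcing by \eqref{ImbalanceEstimate}, and apply the Cauchy criterion; in the regularized case you use variation of constants and then show that the convolution of the \(\mathrm{L}^1\) forcing with \(e^{-2\lambda t}\) tends to zero, where your split of that integral at \(t/2\) simply spells out the last step the paper states without detail.
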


\begin{proof}
In the unregularized case, \eqref{ImbalanceFlow}-\eqref{ImbalanceEstimate}
make \(\dot\Delta_\ell\) integrable in operator norm; the Cauchy criterion
and integration over \((t,\infty)\) give the conclusion. In the regularized
case, variation of constants in \eqref{RegularizedImbalanceFlow} and
\eqref{ImbalanceEstimate} give the displayed estimate. The convolution of an
\(\mathrm{L}^1(0,\infty)\) forcing with \(e^{-2\lambda t}\) converges to zero.
\end{proof}

Risk dissipation controls the time integral of squared gradient norms, but it
does not by itself imply \eqref{eq:imbalance-integrability}, which contains an
additional weight factor and is an \(\mathrm{L}^1\)-in-time condition.
Accordingly,
unregularized gradient flow may stabilize the imbalance at a nonzero limit;
the result above does not assert full matrix balancing.

\begin{remark}
\label{rem:neuronwise-balancedness}
Assume that the network is bias-free and that the activation is positively
homogeneous of degree one, as for the ReLU activation. The corresponding
neuronwise rescaling invariance implies that, along the unregularized gradient
flow,
\[
\frac{\d}{\d t}
\left(
\|W_\ell[i,:](t)\|_2^2
-
\|W_{\ell+1}[:,i](t)\|_2^2
\right)
=0.
\]
Since
\[
(\Delta_\ell)_{ii}
=
\|W_\ell[i,:]\|_2^2
-
\|W_{\ell+1}[:,i]\|_2^2,
\]
it follows that
\[
\operatorname{Diag}\Delta_\ell(t)
=
\operatorname{Diag}\Delta_\ell(0);
\]
see \cite{DuHuLee2018}. In particular,
\[
\|W_\ell(t)\|_F^2
-
\|W_{\ell+1}(t)\|_F^2
=
\operatorname{tr}\Delta_\ell(0).
\]
Thus, writing
\[
\Delta_\ell(t)
=
\operatorname{Diag}\Delta_\ell(0)
+
\operatorname{Off}\Delta_\ell(t),
\]
the diagonal part is fixed by the initialization, whereas the off-diagonal
part may be generated and modified by the nonlinear dynamics. This contrasts
with the deep linear setting, in which the full matrix \(\Delta_\ell\) is
conserved. The distinction is directly relevant to the imbalance source in the
commutator recursion, since
\[
[S_{\ell+1},\Delta_\ell]
=
[S_{\ell+1},\operatorname{Diag}\Delta_\ell(0)]
+
[S_{\ell+1},\operatorname{Off}\Delta_\ell].
\]
Consequently, neuronwise balanced initialization,
\(\operatorname{Diag}\Delta_\ell(0)=0\), eliminates the conserved diagonal
contribution but does not prevent the nonlinear dynamics from generating an
off-diagonal imbalance source.
Moreover, scalar multiples of the identity commute with every matrix, so
only the anisotropic part
\[
\Delta_\ell^\circ
=
\Delta_\ell
-
\frac{\operatorname{tr}\Delta_\ell}{m_{\ell+1}}I
\]
contributes to this commutator. Full matrix balancedness is therefore neither
preserved nor required for the commutator hierarchy; what matters is the part
of the imbalance that is spectrally incompatible with \(S_{\ell+1}\).
\end{remark}

\subsection{Sensitivity fluctuations across the data distribution}

The fluctuation source is governed by
\begin{equation}\label{eq:fluctuation-recall}
R_{\ell+1}(x,t)
=S_{\ell+1}(x,t)-\mathcal S_{\ell+1}(t),
\qquad
\mathbb E_\mu[R_{\ell+1}]=0,
\end{equation}
and its natural size is
\begin{equation}\label{eq:fluctuation-variance}
\mathcal V_{\ell+1}(t)
:=\mathbb E_\mu\|R_{\ell+1}(t)\|_F^2.
\end{equation}

\begin{proposition}\label{prop:fluctuation-identities}
Let \(x,x'\) be independent with law \(\mu\). Then
\begin{align}
\mathcal V_{\ell+1}
&=
\mathbb E_\mu\|S_{\ell+1}\|_F^2
-\|\mathcal S_{\ell+1}\|_F^2\\
&=
\frac12\mathbb E_{\mu\otimes\mu}
\|S_{\ell+1}(x)-S_{\ell+1}(x')\|_F^2.
\label{eq:pairwise-fluctuation}
\end{align}
If \(g_{\ell+1}=\nabla_{a_{\ell+1}}F\), so that
\(S_{\ell+1}=g_{\ell+1}g_{\ell+1}^\top\), then
\begin{align}
\mathcal V_{\ell+1}
&=
\mathbb E_\mu\|g_{\ell+1}\|^4
-\left\|
\mathbb E_\mu[g_{\ell+1}g_{\ell+1}^\top]
\right\|_F^2
\label{eq:rank-one-fluctuation}\\
&=
\frac12\mathbb E_{\mu\otimes\mu}
\bigl[
\|g_{\ell+1}(x)\|^4+\|g_{\ell+1}(x')\|^4
-2\langle g_{\ell+1}(x),g_{\ell+1}(x')\rangle^2
\bigr].
\label{eq:pairwise-sensitivity-fluctuation}
\end{align}
In particular, \(\mathcal V_{\ell+1}=0\) if and only if
\(S_{\ell+1}(x)=\mathcal S_{\ell+1}\) for \(\mu\)-almost every \(x\).
\end{proposition}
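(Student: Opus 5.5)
The plan is to derive all four identities from the Frobenius inner product together with $\mathbb E_\mu[R_{\ell+1}]=0$. Throughout I assume $\mathbb E_\mu\|S_{\ell+1}\|_F^2<\infty$, which is implicit in the definition of $\mathcal V_{\ell+1}$.

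\emph{First identity.} Write $S_{\ell+1}=\mathcal S_{\ell+1}+R_{\ell+1}$ and expand
\[
\|S_{\ell+1}\|_F^2=\|\mathcal S_{\ell+1}\|_F^2+2\langle\mathcal S_{\ell+1},R_{\ell+1}\rangle_F+\|R_{\ell+1}\|_F^2 .
\]
Since $\mathcal S_{\ell+1}$ does not depend on $x$, taking expectations kills the cross term, because $\mathbb E_\mu\langle\mathcal S_{\ell+1},R_{\ell+1}\rangle_F=\langle\mathcal S_{\ell+1},\mathbb E_\mu R_{\ell+1}\rangle_F=0$. Rearranging gives the first line.

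\emph{Pairwise identity.} Take independent $x,x'$. Since $\mathcal S_{\ell+1}$ cancels in the difference,
\[
S_{\ell+1}(x)-S_{\ell+1}(x')=R_{\ell+1}(x)-R_{\ell+1}(x').
\]
Expanding the squared norm, the cross term has expectation
\[
-2\langle\mathbb E R_{\ell+1}(x),\mathbb E R_{\ell+1}(x')\rangle_F=0
\]
by independence. Hence the expected squared difference equals $2\mathcal V_{\ell+1}$. Integrability of the product follows from Cauchy--Schwarz and the finiteness of $\mathbb E\|R_{\ell+1}\|_F^2$.

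\emph{Rank-one formulas.} With $S_{\ell+1}=g g^\top$ (writing $g=g_{\ell+1}$), two facts suffice:
\[
\|gg^\top\|_F^2=\operatorname{tr}(gg^\top gg^\top)=\|g\|^4,
\qquad
\langle g(x)g(x)^\top,g(x')g(x')^\top\rangle_F=\langle g(x),g(x')\rangle^2 .
\]
Substituting the first fact into the first identity gives \eqref{eq:rank-one-fluctuation}. For \eqref{eq:pairwise-sensitivity-fluctuation}, expand $\|S(x)-S(x')\|_F^2$ using both facts inside \eqref{eq:pairwise-fluctuation}. As a consistency check, the mixed term also satisfies
\[
\mathbb E_{\mu\otimes\mu}\langle g(x),g(x')\rangle^2=\|\mathbb E_\mu[gg^\top]\|_F^2 ,
\]
by bilinearity and independence.

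\emph{Vanishing criterion.} We have $\mathcal V_{\ell+1}=\mathbb E_\mu\|R_{\ell+1}\|_F^2$, which is the integral of a nonnegative function. It vanishes exactly when $R_{\ell+1}=0$ holds $\mu$-a.e., that is, when $S_{\ell+1}(x)=\mathcal S_{\ell+1}$ for $\mu$-almost every $x$.

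No step is a real obstacle. The only point needing care is the integrability bookkeeping that justifies splitting the expectations and applying Fubini in the pairwise identities, and the moment assumption above covers it.
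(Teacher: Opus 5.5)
Your proposal is correct and takes essentially the same route as the paper. Both expand $\mathbb E_\mu\|S_{\ell+1}-\mathcal S_{\ell+1}\|_F^2$ using $\mathbb E_\mu[R_{\ell+1}]=0$, invoke the independent-copy variance identity for the pairwise form, and derive the rank-one formulas from $\|gg^\top\|_F^2=\|g\|^4$ and $\langle gg^\top,hh^\top\rangle_F=\langle g,h\rangle^2$; your explicit square-integrability assumption and your derivation of the vanishing criterion from $\mathcal V_{\ell+1}=\mathbb E_\mu\|R_{\ell+1}\|_F^2$ supply details the paper leaves implicit.
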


\begin{proof}
The first identity follows by expanding
\(\mathbb E_\mu\|S_{\ell+1}-\mathcal S_{\ell+1}\|_F^2\); the second is the
independent-copy variance identity in the Frobenius space. The final two
follow from
\(\|gg^\top\|_F^2=\|g\|^4\) and
\(\langle gg^\top,hh^\top\rangle_F=\langle g,h\rangle^2\).
\end{proof}

The fluctuation component satisfies
\begin{align}
\|T_\ell^{\mathrm{fluc}}\|_F^2
&\leq
4\|W_\ell\|_{\mathrm{op}}^4
\left(
\mathbb E_\mu
\bigl[
\|D_\ell\|_{\mathrm{op}}^2\|R_{\ell+1}\|_F
\bigr]
\right)^2\leq
4\|W_\ell\|_{\mathrm{op}}^4
\mathbb E_\mu\|D_\ell\|_{\mathrm{op}}^4
\,\mathcal V_{\ell+1}.
\label{FluctuationReduction}
\end{align}
For ReLU gates,
\begin{equation}\label{eq:ReLU-fluctuation-component}
\|T_\ell^{\mathrm{fluc}}\|_F^2
\leq
4\|W_\ell\|_{\mathrm{op}}^4\mathcal V_{\ell+1}.
\end{equation}
At the final hidden layer,
\(S_{L-1}=\mathcal S_{L-1}\) and \(\mathcal V_{L-1}=0\), which explains the
absence of a fluctuation term in \eqref{eq:penultimate-exact-recursion}. At
earlier layers, \(\mathcal V_{\ell+1}\) measures variation generated by the
downstream activation pattern. It has no closed monotonicity law under
gradient flow in general; its smallness is an additional hypothesis or an
empirical observation, not a consequence of the hierarchy.

\begin{remark}\label{rem:Poincare-fluctuation-control}
Suppose that \(\mu\) satisfies the \(\mathrm{L}^2(\mu)\) Poincar\'e inequality
\[
\|f-\mathbb E_\mu[f]\|_{\mathrm{L}^2(\mu)}^2
\leq
C_{\mathrm P}
\|\nabla f\|_{\mathrm{L}^2(\mu)}^2
\]
for every sufficiently regular scalar function \(f\). Assume that
\(x\mapsto S_{\ell+1}(x,t)\) belongs entrywise to
\(\mathrm{H}^1(\mu)\). Applying the scalar Poincar\'e inequality to the
matrix entries and summing gives
\begin{equation}
\mathcal V_{\ell+1}(t)
\leq
C_{\mathrm P}\,
\mathbb E_\mu
\left\|
\nabla_x S_{\ell+1}(x,t)
\right\|_F^2,
\label{eq:Poincare-fluctuation}
\end{equation}
where the norm on the right denotes the Hilbert-Schmidt norm over both the
matrix and input-variable indices. If
\(S_{\ell+1}=g_{\ell+1}g_{\ell+1}^\top\), then
\[
\partial_{x_j}S_{\ell+1}
=
(\partial_{x_j}g_{\ell+1})g_{\ell+1}^\top
+
g_{\ell+1}(\partial_{x_j}g_{\ell+1})^\top,
\]
and consequently
\[
\left\|
\nabla_x S_{\ell+1}
\right\|_F^2
\leq
4\|g_{\ell+1}\|^2
\left\|
\nabla_x g_{\ell+1}
\right\|_F^2.
\]
Thus,
\begin{equation}
\mathcal V_{\ell+1}(t)
\leq
4C_{\mathrm P}\,
\mathbb E_\mu
\left[
\|g_{\ell+1}(x,t)\|^2
\left\|
\nabla_x g_{\ell+1}(x,t)
\right\|_F^2
\right].
\label{eq:Poincare-sensitivity-fluctuation}
\end{equation}
For smooth activations, this reduces the estimation of the fluctuation source
to the spatial regularity of the backward sensitivity. For ReLU networks,
however, \(g_{\ell+1}\) is generally piecewise smooth and may jump across
downstream activation boundaries. It therefore need not belong to
\(\mathrm{H}^1(\mu)\), and its almost-everywhere derivative does not detect
the jumps between activation regions. Hence
\eqref{eq:Poincare-fluctuation} cannot be applied naively in the ReLU setting.
There the pairwise identity \eqref{eq:pairwise-fluctuation} is more robust:
it suggests estimating \(\mathcal V_{\ell+1}\) through the frequency and size
of changes in the downstream activation pattern. In particular,
\[
\mathcal V_{\ell+1}
\leq
\frac12\,
\mathbb E_{\mu\otimes\mu}
\left[
\bigl(
\|g_{\ell+1}(x)\|+\|g_{\ell+1}(x')\|
\bigr)^2
\|g_{\ell+1}(x)-g_{\ell+1}(x')\|^2
\right].
\]
These estimates provide possible sufficient conditions for small
fluctuations, but they do not imply that
\(\mathcal V_{\ell+1}(t)\) is small or decreasing along gradient flow.
\end{remark}

\section{Spectral localization and covariance geometry}
\label{sec:SpectralGeometry}

The three commutators in \eqref{Pre:three-levels} measure different forms of
spectral compatibility. The gate and internal commutators act on the covariance
space of \(P_\ell\) and may therefore be localized relative to the same
spectral decomposition, whereas the feature commutator is their congruence
image under \(W_\ell\). We develop this common localization, quantify the
motion of the covariance subspaces, and identify the singular-value weights
introduced by transport to the feature side. These results describe spectral
resolution and stability; they do not imply decay of any commutator.

\subsection{Common and buffered spectral localization}

Fix a layer \(\ell\), suppress the time variable, and use the spectral
decomposition \eqref{Pre22-}-\eqref{Pre23+}. The following elementary identity
is the basis of the localization theory.

\begin{proposition}\label{prop:common-spectral-localization}
Let \(A_\ell\) be symmetric on \(\mathbb R^{m_{\ell+1}}\). Then
\begin{equation}\label{eq:common-commutator-expansion}
\|[A_\ell,P_\ell]\|_F^2
=
\sum_{i,j=1}^{m_{\ell+1}}
(\lambda_{\ell,i}-\lambda_{\ell,j})^2
|u_{\ell,i}^\top A_\ell u_{\ell,j}|^2
=
2\sum_{i<j}
(\lambda_{\ell,i}-\lambda_{\ell,j})^2
|u_{\ell,i}^\top A_\ell u_{\ell,j}|^2.
\end{equation}
If \(I_{\mathsf a},I_{\mathsf b}\) are disjoint spectral index sets,
\(
\Pi_{\ell,\mathsf a}=\sum_{i\in I_{\mathsf a}}
u_{\ell,i}u_{\ell,i}^\top
\),
\(
\Pi_{\ell,\mathsf b}=\sum_{j\in I_{\mathsf b}}
u_{\ell,j}u_{\ell,j}^\top
\), and
\begin{equation}\label{eq:cluster-distance}
\delta_\ell(\mathsf a,\mathsf b)
:=
\min_{i\in I_{\mathsf a},\,j\in I_{\mathsf b}}
|\lambda_{\ell,i}-\lambda_{\ell,j}|,
\end{equation}
then
\begin{equation}\label{eq:common-cluster-bound}
\|[A_\ell,P_\ell]\|_F^2
\geq
2\delta_\ell(\mathsf a,\mathsf b)^2
\|\Pi_{\ell,\mathsf a}A_\ell\Pi_{\ell,\mathsf b}\|_F^2.
\end{equation}
\end{proposition}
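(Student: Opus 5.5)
The plan is to work entirely in the orthonormal eigenbasis of \(P_\ell\) from \eqref{Pre22-}--\eqref{Pre23+}. Since \(U_\ell\in O(m_{\ell+1})\) and the Frobenius norm is invariant under orthogonal conjugation, I first compute
\(\|[A_\ell,P_\ell]\|_F^2=\|U_\ell^\top[A_\ell,P_\ell]U_\ell\|_F^2\).
Writing \(\widetilde A=U_\ell^\top A_\ell U_\ell\), with entries \(\widetilde A_{ij}=u_{\ell,i}^\top A_\ell u_{\ell,j}\), the conjugated commutator is \([\widetilde A,\Lambda_\ell]\). Its entries are \(\widetilde A_{ij}(\lambda_{\ell,j}-\lambda_{\ell,i})\). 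Summing squares gives the first equality in \eqref{eq:common-commutator-expansion}.

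For the second equality, the diagonal terms vanish because of the factor \((\lambda_{\ell,i}-\lambda_{\ell,i})^2=0\). Symmetry of \(A_\ell\) gives \(|\widetilde A_{ij}|=|\widetilde A_{ji}|\), so the \((i,j)\) and \((j,i)\) summands coincide. This yields the factor \(2\) over \(i<j\). Symmetry of \(A_\ell\) is used only here.

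For \eqref{eq:common-cluster-bound}, I note that \(\Pi_{\ell,\mathsf a}A_\ell\Pi_{\ell,\mathsf b}=\sum_{i\in I_{\mathsf a},j\in I_{\mathsf b}}\widetilde A_{ij}\,u_{\ell,i}u_{\ell,j}^\top\). Since the rank-one matrices \(u_{\ell,i}u_{\ell,j}^\top\) are Frobenius-orthonormal,
\(\|\Pi_{\ell,\mathsf a}A_\ell\Pi_{\ell,\mathsf b}\|_F^2=\sum_{i\in I_{\mathsf a},j\in I_{\mathsf b}}|\widetilde A_{ij}|^2\).

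Next I restrict the full double sum in the first form of \eqref{eq:common-commutator-expansion} to the pairs in \(I_{\mathsf a}\times I_{\mathsf b}\) and in \(I_{\mathsf b}\times I_{\mathsf a}\). These two index sets are disjoint because \(I_{\mathsf a}\cap I_{\mathsf b}=\emptyset\), and all summands are nonnegative, so dropping the rest gives a lower bound. On both sets each weight is at least \(\delta_\ell(\mathsf a,\mathsf b)^2\). Using \(|\widetilde A_{ji}|=|\widetilde A_{ij}|\), the two restricted sums are each at least \(\delta_\ell(\mathsf a,\mathsf b)^2\|\Pi_{\ell,\mathsf a}A_\ell\Pi_{\ell,\mathsf b}\|_F^2\), which gives the factor \(2\).

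There is no real obstacle; the argument is elementary. The only point requiring care is obtaining the factor \(2\) correctly. That requires noting that the two off-diagonal blocks are disjoint in the index sum and have equal norms by symmetry. Both facts need \(I_{\mathsf a}\cap I_{\mathsf b}=\emptyset\), since otherwise pairs would be double counted.
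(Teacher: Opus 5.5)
Your proposal is correct and follows the paper's proof essentially step for step. Like the paper, you conjugate by \(U_\ell\), read off the entries \((\lambda_{\ell,j}-\lambda_{\ell,i})(\widetilde A_\ell)_{ij}\), use symmetry to get the factor \(2\), and restrict the sum to \(I_{\mathsf a}\times I_{\mathsf b}\) and its transpose. One small slip in your commentary: symmetry of \(A_\ell\) is not used only for the second equality; you use it again in the cluster bound to identify the \(I_{\mathsf b}\times I_{\mathsf a}\) block with the \(I_{\mathsf a}\times I_{\mathsf b}\) block.
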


\begin{proof}
Set \(\widetilde A_\ell=U_\ell^\top A_\ell U_\ell\). In the eigenbasis of
\(P_\ell\),
\[
\bigl(U_\ell^\top[A_\ell,P_\ell]U_\ell\bigr)_{ij}
=(\lambda_{\ell,j}-\lambda_{\ell,i})(\widetilde A_\ell)_{ij}.
\]
Orthogonal invariance of the Frobenius norm gives the first identity, and
symmetry gives the second. Restricting the sum to
\(I_{\mathsf a}\times I_{\mathsf b}\) and its transpose yields
\eqref{eq:common-cluster-bound}.
\end{proof}

Thus the commutator weights the mixing coefficient
\(|u_{\ell,i}^\top A_\ell u_{\ell,j}|\) by the spectral separation of the
corresponding covariance directions. It weakly detects mixing within a nearly
degenerate cluster and strongly detects mixing across separated clusters.

Fix integers \(1\leq k\leq k+\beta<m_{\ell+1}\), \(\beta\geq0\), and assume
\(\gamma_\ell^{(k)},\gamma_\ell^{(k+\beta)}>0\). Define the buffered gap
\begin{equation}\label{eq:buffered-gap}
\gamma_\ell^{(k,\beta)}
:=
\lambda_{\ell,k}-\lambda_{\ell,k+\beta+1}.
\end{equation}
The buffer removes the intermediate directions
\(u_{\ell,k+1},\ldots,u_{\ell,k+\beta}\) and retains only mixing from the
leading \(k\)-dimensional subspace into the complement of the leading
\(k+\beta\) directions. Set
\begin{align}
\mathcal G_{D,\ell}^{(k,\beta)}(t)
&:=
\int_{\mathbb R^d}
\|\Pi_\ell^{(k)}(t)D_\ell(x,t)
\Pi_\ell^{(k+\beta),\perp}(t)\|_F^2\,\d\mu(x),
\label{eq:localized-gate-energy}\\
\mathcal G_{Q,\ell}^{(k,\beta)}(t)
&:=
\|\Pi_\ell^{(k)}(t)Q_\ell(t)
\Pi_\ell^{(k+\beta),\perp}(t)\|_F^2.
\label{eq:localized-sensitivity-energy}
\end{align}
For \(\beta=0\), we omit \(\beta\) from the superscript. In addition to the
global gate energy \(\mathcal G_\ell\) from
\eqref{eq:gate-covariance-energy}, define
\begin{equation}\label{eq:global-sensitivity-energy}
\mathcal G_{Q,\ell}(t)
:=
\|[Q_\ell(t),P_\ell(t)]\|_F^2
=\|C_\ell^{\mathrm{int}}(t)\|_F^2.
\end{equation}
Proposition~\ref{prop:common-spectral-localization}, applied pointwise to
\(D_\ell\) and directly to \(Q_\ell\), gives
\begin{align}
\mathcal G_\ell(t)
&\geq
2\bigl(\gamma_\ell^{(k,\beta)}(t)\bigr)^2
\mathcal G_{D,\ell}^{(k,\beta)}(t),
\label{eq:localized-gate-gap-bound}\\
\mathcal G_{Q,\ell}(t)
&\geq
2\bigl(\gamma_\ell^{(k,\beta)}(t)\bigr)^2
\mathcal G_{Q,\ell}^{(k,\beta)}(t).
\label{eq:localized-sensitivity-gap-bound}
\end{align}
The first localized energy measures the pointwise nonlinear mixing that enters
the internal recursion as a local source; the second measures the accumulated
incompatibility between backward sensitivity and covariance geometry. For
\(\beta=0\), their vanishing is equivalent, respectively, to invariance of
\(\operatorname{Ran}\Pi_\ell^{(k)}\) under \(D_\ell(x,t)\) for
\(\mu\)-almost every \(x\), and under \(Q_\ell(t)\). For \(\beta>0\),
vanishing excludes only direct coupling to the remote complement; mixing
through the buffered directions may remain.

The same definitions apply to any two disjoint spectral clusters. This is the
natural formulation near repeated eigenvalues: the cluster projector is
intrinsic even when the eigenvectors inside the cluster are not, and
\eqref{eq:common-cluster-bound} depends only on the separation between the
clusters.

For later use, let \(A_\ell(x,t)\) be symmetric and define
\begin{align*}
\mathcal E_{A,\ell}^{(k,\beta)}
&:=
\int\|\Pi_\ell^{(k)}A_\ell
\Pi_\ell^{(k+\beta),\perp}\|_F^2\,\d\mu,\\
\mathcal C_{A,\ell}^{(k,\beta)}
&:=
\int\|[\Pi_\ell^{(k)}A_\ell
\Pi_\ell^{(k+\beta),\perp},P_\ell]\|_F^2\,\d\mu.
\end{align*}
The eigenbasis expansion gives the exact equivalence
\begin{equation}\label{eq:weighted-localized-equivalence}
\bigl(\gamma_\ell^{(k,\beta)}\bigr)^2
\mathcal E_{A,\ell}^{(k,\beta)}
\leq
\mathcal C_{A,\ell}^{(k,\beta)}
\leq
(\lambda_{\ell,1}-\lambda_{\ell,m_{\ell+1}})^2
\mathcal E_{A,\ell}^{(k,\beta)}.
\end{equation}
Thus \(\mathcal E_{A,\ell}^{(k,\beta)}\) measures unweighted mixing across
the buffered interface, whereas \(\mathcal C_{A,\ell}^{(k,\beta)}\) records
its spectrally weighted contribution to the commutator.

\subsection{Motion and stability of covariance subspaces}

Write \(G_\ell=\nabla_{W_\ell}\mathcal R(W)\). We consider both
\(\dot W_\ell=-G_\ell\) and
\(\dot W_\ell=-G_\ell-\lambda W_\ell\), \(\lambda\geq0\).

\begin{proposition}\label{prop:covariance-evolution-spectral}
Under unregularized gradient flow,
\begin{equation}\label{eq:covariance-evolution-spectral}
\dot P_\ell
=-G_\ell W_\ell^\top-W_\ell G_\ell^\top,
\qquad
\|\dot P_\ell\|_{\mathrm{op}}
\leq2\|W_\ell\|_{\mathrm{op}}\|G_\ell\|_{\mathrm{op}}.
\end{equation}
Under gradient flow with weight decay,
\begin{equation}\label{eq:regularized-covariance-evolution}
\dot P_\ell+2\lambda P_\ell
=-G_\ell W_\ell^\top-W_\ell G_\ell^\top=:H_\ell,
\qquad
\|H_\ell\|_{\mathrm{op}}
\leq2\|W_\ell\|_{\mathrm{op}}\|G_\ell\|_{\mathrm{op}}.
\end{equation}
\end{proposition}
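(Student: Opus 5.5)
The plan is to differentiate the definition \(P_\ell=W_\ell W_\ell^\top\) along the trajectory and then bound the resulting expression by submultiplicativity.

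Under the standing trajectory convention, \(W_\ell\) is locally absolutely continuous. The product of two locally absolutely continuous matrix-valued maps is again locally absolutely continuous. The product rule therefore gives, for almost every \(t\),
\[
\dot P_\ell=\dot W_\ell W_\ell^\top+W_\ell\dot W_\ell^\top .
\]

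\emph{Unregularized flow.} Substituting \(\dot W_\ell=-G_\ell\) yields
\[
\dot P_\ell=-G_\ell W_\ell^\top-W_\ell G_\ell^\top,
\]
which is the first identity in \eqref{eq:covariance-evolution-spectral}.

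\emph{Regularized flow.} Substituting \(\dot W_\ell=-G_\ell-\lambda W_\ell\) produces the same two gradient terms together with
\[
-\lambda W_\ell W_\ell^\top-\lambda W_\ell W_\ell^\top=-2\lambda P_\ell .
\]
Moving this term to the left-hand side gives \eqref{eq:regularized-covariance-evolution}, with \(H_\ell\) defined there.

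\emph{Norm bound.} In both cases the bound follows from the triangle inequality, submultiplicativity of the operator norm, and \(\|A^\top\|_{\mathrm{op}}=\|A\|_{\mathrm{op}}\):
\[
\|G_\ell W_\ell^\top+W_\ell G_\ell^\top\|_{\mathrm{op}}\le 2\|W_\ell\|_{\mathrm{op}}\|G_\ell\|_{\mathrm{op}} .
\]
This is the same computation as the one behind Proposition~\ref{prop:ImbalanceEvolution}, restricted to the first two terms of \(F_\ell\).

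There is no real obstacle. The only point needing care is that the identities hold almost everywhere in \(t\). This is justified by the absolute-continuity convention, under which \(G_\ell(t)\) is defined for almost every \(t\).
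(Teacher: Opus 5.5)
Your proposal is correct and follows the paper's proof: differentiate \(P_\ell=W_\ell W_\ell^\top\) by the product rule, substitute the respective flow (the weight-decay term contributing \(-2\lambda P_\ell\)), and bound \(\|G_\ell W_\ell^\top+W_\ell G_\ell^\top\|_{\mathrm{op}}\) by the triangle inequality and submultiplicativity. Your remark that the identities hold for almost every \(t\) under the absolute-continuity convention is an appropriate refinement of the paper's brief argument.
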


\begin{proof}
Differentiate \(P_\ell=W_\ell W_\ell^\top\), insert the corresponding
gradient-flow equation, and use submultiplicativity.
\end{proof}

Assume that \(P_\ell(t)\) is continuously differentiable on an interval
\(I\) and that
\[
\gamma_\ell^{(k)}(t)
=
\lambda_{\ell,k}(t)-\lambda_{\ell,k+1}(t)>0,
\qquad t\in I.
\]
Fix \(t_0\in I\). By continuity of the spectrum, there exist a neighborhood
\(J\subset I\) of \(t_0\) and a positively oriented simple closed contour
\(\Gamma_\ell^{(k)}\) that lies in the resolvent set of \(P_\ell(t)\) for
every \(t\in J\), encloses
\[
\lambda_{\ell,1}(t),\ldots,\lambda_{\ell,k}(t),
\]
and excludes the remaining eigenvalues. The contour can therefore be held
fixed for \(t\in J\), and the leading spectral projector admits the Riesz
representation; see, for example, \cite{Kato1995}.
\begin{align}
\Pi_\ell^{(k)}(t)
&=
\frac{1}{2\pi i}
\oint_{\Gamma_\ell^{(k)}}
(zI-P_\ell(t))^{-1}\,\d z.
\label{eq:Riesz-projector}
\end{align}
Because \(P_\ell(t)\) is real symmetric, this Riesz projector coincides with
the orthogonal projector onto the leading \(k\)-dimensional covariance
subspace. The construction depends only on the separation between the leading
and trailing spectral clusters; no simplicity assumption is required for the
eigenvalues inside either cluster.

For \(z\in\Gamma_\ell^{(k)}\), differentiation of the resolvent gives
\[
\frac{\d}{\d t}
(zI-P_\ell(t))^{-1}
=
(zI-P_\ell(t))^{-1}
\dot P_\ell(t)
(zI-P_\ell(t))^{-1}.
\]
Differentiating under the contour integral therefore yields
\begin{align}
\dot\Pi_\ell^{(k)}(t)
&=
\frac{1}{2\pi i}
\oint_{\Gamma_\ell^{(k)}}
(zI-P_\ell(t))^{-1}
\dot P_\ell(t)
(zI-P_\ell(t))^{-1}\,\d z.
\label{eq:Riesz-projector-derivative}
\end{align}
In particular, \(\Pi_\ell^{(k)}(t)\) remains continuously differentiable as
long as the boundary gap remains open. Differentiating
\((\Pi_\ell^{(k)})^2=\Pi_\ell^{(k)}\) also gives
\[
\Pi_\ell^{(k)}
\dot\Pi_\ell^{(k)}
\Pi_\ell^{(k)}
=
0,
\qquad
\Pi_\ell^{(k),\perp}
\dot\Pi_\ell^{(k)}
\Pi_\ell^{(k),\perp}
=
0,
\]
and hence
\[
\dot\Pi_\ell^{(k)}
=
\Pi_\ell^{(k),\perp}
\dot\Pi_\ell^{(k)}
\Pi_\ell^{(k)}
+
\Pi_\ell^{(k)}
\dot\Pi_\ell^{(k)}
\Pi_\ell^{(k),\perp}.
\]
Thus the projector derivative records only motion between the leading
subspace and its complement; rotations of a chosen eigenbasis within either
spectral cluster do not affect it. The two resolvent factors in
\eqref{eq:Riesz-projector-derivative} also explain the inverse dependence of
the projector velocity on the separating spectral gap.

\begin{proposition}\label{prop:projector-velocity}
There is a universal constant \(C>0\) such that
\begin{equation}\label{eq:projector-velocity-general}
\|\dot\Pi_\ell^{(k)}(t)\|_{\mathrm{op}}
\leq
\frac{C}{\gamma_\ell^{(k)}(t)}
\|\dot P_\ell(t)\|_{\mathrm{op}}
\end{equation}
whenever \(P_\ell\) is differentiable and the gap remains open. Under either
gradient flow,
\begin{equation}\label{eq:projector-velocity-gradient}
\|\dot\Pi_\ell^{(k)}(t)\|_{\mathrm{op}}
\leq
\frac{2C}{\gamma_\ell^{(k)}(t)}
\|W_\ell(t)\|_{\mathrm{op}}\|G_\ell(t)\|_{\mathrm{op}}.
\end{equation}
\end{proposition}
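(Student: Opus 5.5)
The plan is to bound the projector velocity block by block, using the decomposition $\dot\Pi = \Pi^\perp\dot\Pi\,\Pi + \Pi\,\dot\Pi\,\Pi^\perp$ recorded just above, and then to estimate each off-diagonal block via a Sylvester equation rather than through the contour integral. Routing through the Sylvester equation gives the constant independently of the contour's geometry, which is the main point to get right: a naive contour estimate would pick up the contour length over the squared distance to the spectrum, and that ratio can grow with the spread of the spectrum rather than depend on the gap alone.

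The first step is to differentiate the commutation relation $P\Pi=\Pi P$, where $\Pi=\Pi_\ell^{(k)}$ is $C^1$ by \eqref{eq:Riesz-projector-derivative}. This gives
\[
\dot P\,\Pi + P\dot\Pi = \dot\Pi P + \Pi\dot P .
\]
Multiplying on the left by $\Pi^\perp$ and on the right by $\Pi$, and writing $X=\Pi^\perp\dot\Pi\,\Pi$, yields the Sylvester equation
\[
P_\perp X - X P_\parallel = -\Pi^\perp\dot P\,\Pi .
\]
Here $P_\parallel=\Pi P\Pi$ has spectrum in $[\lambda_{k},\lambda_1]$ on $\operatorname{Ran}\Pi$, and $P_\perp=\Pi^\perp P\Pi^\perp$ has spectrum in $[\lambda_{m},\lambda_{k+1}]$ on $\operatorname{Ran}\Pi^\perp$.

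The second step solves this equation in the eigenbases of the two blocks. Entrywise it reads
\[
(\lambda_j-\lambda_i)X_{ji} = -(\Pi^\perp\dot P\,\Pi)_{ji}, \qquad j>k\ge i,
\]
and $\lambda_i-\lambda_j\ge\gamma_\ell^{(k)}$ for every such pair. Because the two spectra are separated by an interval of length $\gamma$, the Bhatia–Davis–McIntosh (or Davis–Kahan $\sin\Theta$) theorem for separated spectra gives
\[
\|X\|_{\mathrm{op}}\le \frac{c}{\gamma}\,\|\Pi^\perp\dot P\,\Pi\|_{\mathrm{op}}\le \frac{c}{\gamma}\,\|\dot P\|_{\mathrm{op}},
\]
with a universal constant. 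For separation by an interval one may even take $c=1$, but any universal constant suffices. Bounding the operator norm is the only delicate point: in Frobenius norm the entrywise division gives the bound immediately, but that would introduce a $\sqrt{\operatorname{rank}}$ factor, which is why the $\sin\Theta$-type estimate is needed here.

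The third step treats the other block. Since $\dot\Pi$ is symmetric, $\Pi\dot\Pi\,\Pi^\perp = X^\top$. The two blocks map between orthogonal subspaces, so
\[
\|\dot\Pi\|_{\mathrm{op}}=\max\bigl(\|X\|_{\mathrm{op}},\|X^\top\|_{\mathrm{op}}\bigr)=\|X\|_{\mathrm{op}}.
\]
This proves \eqref{eq:projector-velocity-general}. Finally, \eqref{eq:projector-velocity-gradient} follows by inserting the bound on $\|\dot P_\ell\|_{\mathrm{op}}$ from Proposition~\ref{prop:covariance-evolution-spectral}. In the regularized case, $\dot P=H-2\lambda P$, and the term $-2\lambda P$ commutes with $\Pi$, so it drops out of $\Pi^\perp\dot P\,\Pi$. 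Hence only $\|H_\ell\|_{\mathrm{op}}\le 2\|W_\ell\|_{\mathrm{op}}\|G_\ell\|_{\mathrm{op}}$ enters, which gives the stated bound for either flow.
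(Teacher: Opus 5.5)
Your proof is correct, and it takes a somewhat different route from the paper's. The paper proves \eqref{eq:projector-velocity-general} in one line: it estimates the Riesz derivative formula \eqref{eq:Riesz-projector-derivative} with a resolvent bound and calls the result the differential form of Davis--Kahan. You use the Riesz formula only to get differentiability of $\Pi$. You then differentiate $P\Pi=\Pi P$, which gives a Sylvester equation for $X=\Pi^\perp\dot\Pi\,\Pi$ whose coefficient spectra are separated by the gap.

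The two routes meet at the entrywise formula $u_i^\top\dot\Pi\,u_j=u_i^\top\dot P\,u_j/(\lambda_i-\lambda_j)$ for $i\le k<j$; residues of the Riesz integral give it directly. In both, the real content is operator-norm control of this divided-difference multiplier.

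Your formulation makes that control explicit and sharp. One spectrum lies below $\lambda_{k+1}$ and the other above $\lambda_k$, so the solution, which is unique because the spectra are disjoint, is
\[
X=\int_0^\infty e^{sP}\,\Pi^\perp\dot P\,\Pi\,e^{-sP}\,\mathrm{d}s,
\qquad
\|e^{sP}\Pi^\perp\|_{\mathrm{op}}\,\|\Pi\,e^{-sP}\|_{\mathrm{op}}\le e^{-s\gamma}.
\]
This gives $C=1$ without citing any theorem. The constant is attained: for $P(t)=\left(\begin{smallmatrix}2&t\\t&1\end{smallmatrix}\right)$ at $t=0$ one has equality.

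Your argument also turns the weight-decay step into an identity. Since $\Pi^\perp P\,\Pi=0$, the radial term drops out of the only block that enters. The paper instead argues informally that this term moves eigenvalues but not eigenspaces.

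Your warning about contours is accurate for a bounded contour hugging the leading cluster, which costs an extra factor of order $(\lambda_1-\lambda_k+\gamma)/\gamma$. The paper's route still yields a universal constant with a better contour. Take the vertical line $\operatorname{Re}z=(\lambda_k+\lambda_{k+1})/2$, closed at infinity. On it, $\|(zI-P)^{-1}\|_{\mathrm{op}}^2\le\bigl((\gamma/2)^2+(\operatorname{Im}z)^2\bigr)^{-1}$, and integrating again gives $\|\dot P\|_{\mathrm{op}}/\gamma$.

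The contour route's advantage is economy: one representation gives both differentiability and the bound. Yours is self-contained and shows exactly which block of $\dot P$ matters.
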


\begin{proof}
The Riesz formula and the resolvent bound for a separated invariant subspace
give \eqref{eq:projector-velocity-general}; this is the differential form of
the standard invariant-subspace perturbation estimate
\cite{DavisKahan1970,Kato1995}. Equation
\eqref{eq:covariance-evolution-spectral} then gives the unregularized case of
\eqref{eq:projector-velocity-gradient}. Under weight decay, the radial term
\(-2\lambda P_\ell\) commutes with \(P_\ell\) and changes its eigenvalues but
not its eigenspaces, so only \(H_\ell\) contributes to the projector velocity.
The bound follows from \eqref{eq:regularized-covariance-evolution}.
\end{proof}

Consequently,
\begin{equation}\label{eq:projector-stability}
\|\Pi_\ell^{(k)}(t)-\Pi_\ell^{(k)}(s)\|_{\mathrm{op}}
\leq
2C\int_s^t
\frac{\|W_\ell(\tau)\|_{\mathrm{op}}
\|G_\ell(\tau)\|_{\mathrm{op}}}
{\gamma_\ell^{(k)}(\tau)}\,\d\tau.
\end{equation}
In particular, \(\Pi_\ell^{(k)}(t)\) converges in operator norm if
\begin{equation}\label{eq:projector-integrability}
\int_0^\infty
\frac{\|W_\ell(t)\|_{\mathrm{op}}
\|G_\ell(t)\|_{\mathrm{op}}}
{\gamma_\ell^{(k)}(t)}\,\d t<\infty.
\end{equation}

The inverse gap in these estimates makes spectral separation a dynamical
condition rather than a fixed background assumption.

\begin{proposition}\label{prop:gap-variation}
If \(P_\ell\) is continuously differentiable, then, for \(0\leq s\leq t\),
\begin{equation}\label{eq:gap-variation-general}
|\gamma_\ell^{(k)}(t)-\gamma_\ell^{(k)}(s)|
\leq
2\|P_\ell(t)-P_\ell(s)\|_{\mathrm{op}}
\leq
2\int_s^t\|\dot P_\ell(\tau)\|_{\mathrm{op}}\,\d\tau.
\end{equation}
Under unregularized gradient flow,
\begin{equation}\label{eq:gap-variation-gradient}
\gamma_\ell^{(k)}(t)
\geq
\gamma_\ell^{(k)}(s)
-4\int_s^t
\|W_\ell(\tau)\|_{\mathrm{op}}
\|G_\ell(\tau)\|_{\mathrm{op}}\,\d\tau.
\end{equation}
The same estimate holds for the buffered gap
\(\gamma_\ell^{(k,\beta)}\). Under weight decay it holds after replacing
\(P_\ell\) and its gaps by
\(\overline P_\ell(t)=e^{2\lambda t}P_\ell(t)\) and
\(\overline\gamma_\ell(t)=e^{2\lambda t}\gamma_\ell(t)\), with the
integrand on the right multiplied by \(e^{2\lambda\tau}\).
\end{proposition}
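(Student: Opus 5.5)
The proof rests on Weyl's inequality for eigenvalues of symmetric matrices. For symmetric \(A,B\) of the same size with eigenvalues ordered decreasingly, one has \(|\lambda_i(A)-\lambda_i(B)|\leq\|A-B\|_{\mathrm{op}}\) for every \(i\). Apply this with \(A=P_\ell(t)\), \(B=P_\ell(s)\) to the two indices \(k\) and \(k+1\). The triangle inequality on \(\gamma_\ell^{(k)}=\lambda_{\ell,k}-\lambda_{\ell,k+1}\) then gives
\[
|\gamma_\ell^{(k)}(t)-\gamma_\ell^{(k)}(s)|\leq|\lambda_{\ell,k}(t)-\lambda_{\ell,k}(s)|+|\lambda_{\ell,k+1}(t)-\lambda_{\ell,k+1}(s)|\leq2\|P_\ell(t)-P_\ell(s)\|_{\mathrm{op}}.
\]
For the second inequality in \eqref{eq:gap-variation-general}, write \(P_\ell(t)-P_\ell(s)=\int_s^t\dot P_\ell(\tau)\,\d\tau\), which is valid since \(P_\ell\) is continuously differentiable, and move the operator norm inside the Bochner integral. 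The buffered gap \(\gamma_\ell^{(k,\beta)}=\lambda_{\ell,k}-\lambda_{\ell,k+\beta+1}\) is handled identically, using the indices \(k\) and \(k+\beta+1\). No eigenvector regularity or gap-openness hypothesis is needed, because the ordered eigenvalues are \(1\)-Lipschitz in operator norm regardless of multiplicities.

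For \eqref{eq:gap-variation-gradient}, I would insert the bound \(\|\dot P_\ell\|_{\mathrm{op}}\leq2\|W_\ell\|_{\mathrm{op}}\|G_\ell\|_{\mathrm{op}}\) from Proposition~\ref{prop:covariance-evolution-spectral} and keep only the lower side of the absolute value, which yields the factor \(4\). Under the standing convention \(W\) is only locally absolutely continuous, and then \(P_\ell=W_\ell W_\ell^\top\) is locally absolutely continuous as well. The fundamental theorem of calculus still holds almost everywhere, so the same integral representation goes through; I would note this in place of the \(C^1\) assumption.

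For the weight-decay statement, set \(\overline P_\ell(t)=e^{2\lambda t}P_\ell(t)\). Using \eqref{eq:regularized-covariance-evolution},
\[
\dot{\overline P}_\ell=e^{2\lambda t}\bigl(\dot P_\ell+2\lambda P_\ell\bigr)=e^{2\lambda t}H_\ell,
\]
so \(\|\dot{\overline P}_\ell\|_{\mathrm{op}}\leq2e^{2\lambda t}\|W_\ell\|_{\mathrm{op}}\|G_\ell\|_{\mathrm{op}}\). The eigenvalues of \(\overline P_\ell\) are \(e^{2\lambda t}\lambda_{\ell,i}(t)\) in the same order, because multiplying by the positive scalar \(e^{2\lambda t}\) preserves the ordering. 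Hence its gaps are exactly \(\overline\gamma_\ell=e^{2\lambda t}\gamma_\ell\). Applying the first part to \(\overline P_\ell\) then gives the stated estimate, with integrand weighted by \(e^{2\lambda\tau}\). There is no real obstacle here: the only point requiring care is the ordering convention in Weyl's inequality, and citing it with eigenvalues sorted decreasingly, as in \eqref{Pre23+}, settles it.
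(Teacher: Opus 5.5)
Your proposal is correct and follows the paper's proof essentially step for step: Weyl's inequality at the two relevant ordered indices, the integral bound on \(\|P_\ell(t)-P_\ell(s)\|_{\mathrm{op}}\), insertion of \eqref{eq:covariance-evolution-spectral}, and the identity \(\dot{\overline P}_\ell=e^{2\lambda t}H_\ell\) for weight decay. Your remark that local absolute continuity of \(W\) suffices in place of the \(C^1\) hypothesis is a harmless refinement, consistent with the paper's standing trajectory convention.
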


\begin{proof}
Weyl's inequality bounds the variation of each ordered eigenvalue by
\(\|P_\ell(t)-P_\ell(s)\|_{\mathrm{op}}\); subtracting two eigenvalues gives
\eqref{eq:gap-variation-general}. Equation
\eqref{eq:covariance-evolution-spectral} yields
\eqref{eq:gap-variation-gradient}. For weight decay,
\(\dot{\overline P}_\ell=e^{2\lambda t}H_\ell\), and the same argument
applies.
\end{proof}

In particular, a gap present at time \(s\) remains open on \([s,t]\) if the
corresponding right-hand variation in \eqref{eq:gap-variation-gradient} is
strictly smaller than its value at \(s\). No simplicity assumption is needed
inside either separated spectral cluster.

The projector motion also enters the localized energies. For a matrix-valued
function \(A\), write
\[
\|A\|_{\mathrm L^2(\mu;F)}
:=
\left(\int\|A(x)\|_F^2\,\d\mu(x)\right)^{1/2}.
\]

\begin{proposition}\label{prop:localized-energy-evolution}
Let \(A_\ell(\cdot,t),\partial_tA_\ell(\cdot,t)\in
\mathrm L^2(\mu;F)\), and set
\[
\mathcal E_{A,\ell}^{(k,\beta)}(t)
:=
\int\|\Pi_\ell^{(k)}(t)A_\ell(x,t)
\Pi_\ell^{(k+\beta),\perp}(t)\|_F^2\,\d\mu(x).
\]
Whenever the two projectors are differentiable,
\begin{align}
\left|\frac{\d}{\d t}\mathcal E_{A,\ell}^{(k,\beta)}(t)\right|
&\leq
2\sqrt{\mathcal E_{A,\ell}^{(k,\beta)}(t)}
\Bigl(
\|\partial_tA_\ell(\cdot,t)\|_{\mathrm L^2(\mu;F)}+
\bigl(\|\dot\Pi_\ell^{(k)}(t)\|_{\mathrm{op}}
+\|\dot\Pi_\ell^{(k+\beta)}(t)\|_{\mathrm{op}}\bigr)
\|A_\ell(\cdot,t)\|_{\mathrm L^2(\mu;F)}
\Bigr).
\label{eq:localized-energy-evolution}
\end{align}
\end{proposition}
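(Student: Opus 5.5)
We differentiate the localized energy directly under the integral sign and estimate the result with Cauchy--Schwarz. Abbreviate \(\Pi=\Pi_\ell^{(k)}(t)\), \(\Pi'=\Pi_\ell^{(k+\beta),\perp}(t)\), \(A=A_\ell(\cdot,t)\), and set \(B(x,t):=\Pi A(x)\Pi'\), so that \(\mathcal E_{A,\ell}^{(k,\beta)}=\int\|B\|_F^2\,\d\mu\). Since \(\Pi'=I-\Pi_\ell^{(k+\beta)}\), we have \(\dot\Pi'=-\dot\Pi_\ell^{(k+\beta)}\), and therefore \(\|\dot\Pi'\|_{\mathrm{op}}=\|\dot\Pi_\ell^{(k+\beta)}\|_{\mathrm{op}}\).

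By the product rule, pointwise in \(x\),
\[
\partial_tB=\dot\Pi A\Pi'+\Pi(\partial_tA)\Pi'+\Pi A\dot\Pi'.
\]
Hence \(\partial_t\|B\|_F^2=2\langle B,\partial_tB\rangle_F\). Because \(\Pi\) and \(\Pi'\) are orthogonal projectors, \(\|\Pi\|_{\mathrm{op}},\|\Pi'\|_{\mathrm{op}}\leq1\). Using \(\|XYZ\|_F\leq\|X\|_{\mathrm{op}}\|Y\|_F\|Z\|_{\mathrm{op}}\), we obtain
\[
\|\partial_tB\|_F\leq\|\partial_tA\|_F+\bigl(\|\dot\Pi\|_{\mathrm{op}}+\|\dot\Pi'\|_{\mathrm{op}}\bigr)\|A\|_F.
\]
Integrating \(2\langle B,\partial_tB\rangle_F\) over \(\mu\) and applying Cauchy--Schwarz in \(\mathrm L^2(\mu;F)\) bounds the absolute value of the integral by \(2\sqrt{\mathcal E_{A,\ell}^{(k,\beta)}}\) times the \(\mathrm L^2(\mu;F)\) norm of \(\partial_tB\). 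The triangle inequality in \(\mathrm L^2(\mu;F)\), together with the fact that the projector derivatives do not depend on \(x\), then gives exactly the bracket in \eqref{eq:localized-energy-evolution}.

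The main obstacle is justifying that \(\frac{\d}{\d t}\int\|B\|_F^2\,\d\mu=\int\partial_t\|B\|_F^2\,\d\mu\). We read the hypothesis \(A_\ell,\partial_tA_\ell\in\mathrm L^2(\mu;F)\) as saying that \(t\mapsto A_\ell(\cdot,t)\) is differentiable as an \(\mathrm L^2(\mu;F)\)-valued map with derivative \(\partial_tA_\ell\). The map \(t\mapsto B(\cdot,t)=\Pi(t)A(\cdot,t)\Pi'(t)\) is then a product of an \(\mathrm L^2\)-differentiable map with two operator-norm differentiable matrix functions, so it is \(\mathrm L^2\)-differentiable. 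Its derivative is the expression for \(\partial_tB\) above, with the difference quotient controlled through bounded multiplication by \(\Pi\) and \(\Pi'\). The squared Hilbert-space norm \(\|B\|_{\mathrm L^2(\mu;F)}^2\) is then differentiable with derivative \(2\langle B,\partial_tB\rangle_{\mathrm L^2(\mu;F)}\), which bypasses any pointwise dominated-convergence argument. If only a pointwise-in-\(x\) derivative is available, we would instead invoke dominated convergence, which requires a local integrable bound on the difference quotients of \(A_\ell\). Finally, where \(\mathcal E_{A,\ell}^{(k,\beta)}(t)=0\), the same inner-product formula still applies, and the bound holds trivially.
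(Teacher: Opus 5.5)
Your proof is correct and follows the paper's argument: the same product-rule formula for \(\partial_t B\) (with \(\dot\Pi_\ell^{(k+\beta),\perp}=-\dot\Pi_\ell^{(k+\beta)}\)), followed by Cauchy--Schwarz in \(\mathrm L^2(\mu;F)\) and submultiplicativity using \(\|\Pi\|_{\mathrm{op}}\leq1\). You read the hypothesis as \(\mathrm L^2(\mu;F)\)-valued differentiability of \(t\mapsto A_\ell(\cdot,t)\), which justifies differentiating \(\|B\|_{\mathrm L^2(\mu;F)}^2\); this makes explicit a regularity point that the paper leaves implicit.
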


\begin{proof}
For
\(B_\ell=\Pi_\ell^{(k)}A_\ell
\Pi_\ell^{(k+\beta),\perp}\),
\[
\partial_tB_\ell
=
\dot\Pi_\ell^{(k)}A_\ell\Pi_\ell^{(k+\beta),\perp}
+\Pi_\ell^{(k)}\partial_tA_\ell\Pi_\ell^{(k+\beta),\perp}
-\Pi_\ell^{(k)}A_\ell\dot\Pi_\ell^{(k+\beta)}.
\]
Differentiate \(\|B_\ell\|_{\mathrm L^2(\mu;F)}^2\) and apply
Cauchy-Schwarz and submultiplicativity.
\end{proof}

For \(A_\ell=Q_\ell\), the integral is omitted and the
\(\mathrm L^2(\mu;F)\)-norm is replaced by the Frobenius norm. Combining
\eqref{eq:localized-energy-evolution} with
\eqref{eq:projector-velocity-gradient} displays explicitly the dependence on
the two boundary gaps. The estimate is two-sided in sign and gives stability,
not monotonicity. For smooth activations it applies to \(A_\ell=D_\ell\) under
the stated regularity assumptions. A ReLU gate changes discontinuously when
an activation boundary is crossed, so its evolution must instead be treated
through switching estimates or finite-time increments.

\subsection{Transport to feature-side geometry}

Write a thin singular-value decomposition
\[
W_\ell
=
\sum_{i=1}^{r_\ell}\sqrt{\lambda_{\ell,i}}
u_{\ell,i}v_{\ell,i}^\top,
\qquad
\lambda_{\ell,1}\geq\cdots\geq\lambda_{\ell,r_\ell}>0,
\]
where \(r_\ell=\operatorname{rank}(W_\ell)\). The nonzero eigenvalues of
\(P_\ell\) and \(\mathrm{NFM}_\ell\) are the same. Extend
\(v_{\ell,1},\ldots,v_{\ell,r_\ell}\) to an orthonormal eigenbasis of
\(\mathrm{NFM}_\ell\), with the remaining vectors spanning
\(\ker W_\ell\), and set
\[
\Pi_{\ell,N}^{(j)}
:=
\sum_{i=1}^j v_{\ell,i}v_{\ell,i}^\top,
\qquad
\Pi_{\ell,N}^{(0)}=0,
\qquad
\Pi_{\ell,N}^{(m_\ell)}=I_{m_\ell}.
\]
For \(1\leq k\leq r_\ell\) and
\(k\leq k+\beta\leq m_\ell\), define
\begin{equation}\label{eq:feature-localized-energy}
\mathcal G_{\mathcal S,\ell}^{(k,\beta)}
:=
\|\Pi_{\ell,N}^{(k)}\mathcal S_\ell
\Pi_{\ell,N}^{(k+\beta),\perp}\|_F^2.
\end{equation}
When the three localization levels are compared at the same cutoff, we also
require \(k+\beta<m_{\ell+1}\). The extension through the nullspace resolves
the endpoint convention without assigning a positive singular value to a
null direction.

\begin{proposition}\label{prop:feature-side-localization}
For \(1\leq i,j\leq r_\ell\),
\begin{equation}\label{eq:feature-matrix-element}
v_{\ell,i}^\top\mathcal S_\ell v_{\ell,j}
=
\sqrt{\lambda_{\ell,i}\lambda_{\ell,j}}
\,u_{\ell,i}^\top Q_\ell u_{\ell,j}.
\end{equation}
Consequently, for the preceding range of \((k,\beta)\),
\begin{align}
\mathcal G_{\mathcal S,\ell}^{(k,\beta)}
&=
\sum_{\substack{1\leq i\leq k\\k+\beta<j\leq r_\ell}}
\lambda_{\ell,i}\lambda_{\ell,j}
|u_{\ell,i}^\top Q_\ell u_{\ell,j}|^2,
\label{eq:feature-localized-weighting}\\
\|[\mathcal S_\ell,\mathrm{NFM}_\ell]\|_F^2
&=
\sum_{i,j=1}^{r_\ell}
(\lambda_{\ell,i}-\lambda_{\ell,j})^2
\lambda_{\ell,i}\lambda_{\ell,j}
|u_{\ell,i}^\top Q_\ell u_{\ell,j}|^2.
\label{eq:feature-commutator-weighting}
\end{align}
Moreover,
\begin{align}
&\|[\Pi_{\ell,N}^{(k)}\mathcal S_\ell
\Pi_{\ell,N}^{(k+\beta),\perp},\mathrm{NFM}_\ell]\|_F^2=
\sum_{\substack{1\leq i\leq k\\k+\beta<j\leq r_\ell}}
(\lambda_{\ell,i}-\lambda_{\ell,j})^2
\lambda_{\ell,i}\lambda_{\ell,j}
|u_{\ell,i}^\top Q_\ell u_{\ell,j}|^2.
\label{eq:feature-side-weighted-localization}
\end{align}
\end{proposition}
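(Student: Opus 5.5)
The plan is to reduce every identity to the single matrix-element formula \eqref{eq:feature-matrix-element} and then expand each Frobenius norm in orthonormal bases adapted to the singular value decomposition.

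\textbf{Matrix elements.} By \eqref{Pre17}, \(\mathcal S_\ell=W_\ell^\top Q_\ell W_\ell\). The thin SVD gives \(W_\ell v_{\ell,j}=\sqrt{\lambda_{\ell,j}}\,u_{\ell,j}\) for \(j\leq r_\ell\) and \(W_\ell v_{\ell,j}=0\) for \(j>r_\ell\). Hence
\[
v_{\ell,i}^\top\mathcal S_\ell v_{\ell,j}=(W_\ell v_{\ell,i})^\top Q_\ell(W_\ell v_{\ell,j}).
\]
This equals \(\sqrt{\lambda_{\ell,i}\lambda_{\ell,j}}\,u_{\ell,i}^\top Q_\ell u_{\ell,j}\) when \(i,j\leq r_\ell\), which is \eqref{eq:feature-matrix-element}. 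It vanishes whenever either index exceeds \(r_\ell\). This vanishing is the key observation.

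\textbf{Localized feature energy.} The completed family \(\{v_{\ell,i}\}_{i=1}^{m_\ell}\) is orthonormal, and \(\Pi_{\ell,N}^{(k)}\) and \(\Pi_{\ell,N}^{(k+\beta),\perp}\) are coordinate projectors in this basis. Therefore
\[
\mathcal G_{\mathcal S,\ell}^{(k,\beta)}=\sum_{i\leq k,\ j>k+\beta}|v_{\ell,i}^\top\mathcal S_\ell v_{\ell,j}|^2.
\]
Since \(k\leq r_\ell\), every \(i\) in the range satisfies \(i\leq r_\ell\). Terms with \(j>r_\ell\) vanish by the observation above, so the sum truncates to \(k+\beta<j\leq r_\ell\). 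Substituting \eqref{eq:feature-matrix-element} gives \eqref{eq:feature-localized-weighting}. The edge cases are harmless: when \(k+\beta\geq r_\ell\) the sum is empty and both sides are zero, and when \(k+\beta=m_\ell\) the projector \(\Pi_{\ell,N}^{(m_\ell),\perp}=0\).

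\textbf{Commutator identities.} For \eqref{eq:feature-commutator-weighting}, note that \(\mathrm{NFM}_\ell\) is diagonal in the \(v\)-basis with eigenvalues \(\lambda_{\ell,i}\) for \(i\leq r_\ell\) and \(0\) otherwise. The eigenbasis computation in the proof of Proposition~\ref{prop:common-spectral-localization} then applies verbatim on the feature side:
\[
\|[\mathcal S_\ell,\mathrm{NFM}_\ell]\|_F^2=\sum_{i,j=1}^{m_\ell}(\mu_i-\mu_j)^2|v_{\ell,i}^\top\mathcal S_\ell v_{\ell,j}|^2,
\]
where \(\mu_i\) denotes the \(i\)-th eigenvalue of \(\mathrm{NFM}_\ell\). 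Entries with an index beyond \(r_\ell\) vanish, so the sum restricts to \(i,j\leq r_\ell\), and \eqref{eq:feature-matrix-element} produces \eqref{eq:feature-commutator-weighting}. Alternatively, this follows directly from \eqref{eq:comm-svd}.

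For \eqref{eq:feature-side-weighted-localization}, set \(B=\Pi_{\ell,N}^{(k)}\mathcal S_\ell\Pi_{\ell,N}^{(k+\beta),\perp}\). This matrix is not symmetric, but the coordinate computation still holds: \((V^\top[B,\mathrm{NFM}_\ell]V)_{ij}=(\mu_j-\mu_i)\,(V^\top BV)_{ij}\), where \(V\) is the orthogonal matrix with columns \(v_{\ell,i}\). The entries of \(V^\top BV\) are supported on \(i\leq k\), \(j>k+\beta\), and by the same vanishing they are nonzero only for \(j\leq r_\ell\). Substituting \eqref{eq:feature-matrix-element} finishes the argument.

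\textbf{Main obstacle.} I expect the only delicate point to be the bookkeeping of indices beyond the rank. One must check that the completion through \(\ker W_\ell\) makes every such entry vanish, which relies on \(W_\ell v_{\ell,j}=0\). One must also check that the assumption \(k\leq r_\ell\) prevents the leading block from containing null directions. Everything else is routine orthonormal expansion.
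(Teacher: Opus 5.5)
Your proposal is correct and follows the paper's own proof. Both use \(\mathcal S_\ell=W_\ell^\top Q_\ell W_\ell\) together with \(W_\ell v_{\ell,i}=\sqrt{\lambda_{\ell,i}}\,u_{\ell,i}\), then sum the resulting matrix elements over the indicated blocks of the \(v\)-basis; your explicit check that entries with an index beyond \(r_\ell\) vanish (since \(W_\ell v_{\ell,j}=0\) on the kernel completion) spells out the bookkeeping the paper leaves implicit in ``summation over the indicated blocks.''
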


\begin{proof}
By \eqref{Pre17},
\(\mathcal S_\ell=W_\ell^\top Q_\ell W_\ell\), and
\(W_\ell v_{\ell,i}=\sqrt{\lambda_{\ell,i}}u_{\ell,i}\), which proves
\eqref{eq:feature-matrix-element}. Summation over the indicated blocks gives
the remaining identities.
\end{proof}

\begin{remark}
\label{rem:feature-localized-filtering}
Formula~\eqref{eq:feature-localized-weighting} explains why the
feature-side commutator may vanish more frequently than the corresponding
internal commutators. Indeed, each internal mixing entry
\(u_{\ell,i}^{\top}Q_\ell u_{\ell,j}\) is transported to the feature side
with the spectral weight
\(\lambda_{\ell,i}\lambda_{\ell,j}\). Thus the feature-side observable only
detects sensitivity mixing between directions that are visible through the
covariance operator \(P_\ell\). In particular, if
\(k+\beta\geq r_\ell\), then
\[
\operatorname{Ran}\Pi_{\ell,N}^{(k+\beta),\perp}
\subseteq\ker W_\ell,
\]
and hence
\[
\mathcal G_{\mathcal S,\ell}^{(k,\beta)}=0.
\]
If, in addition, \(k+\beta<m_{\ell+1}\), then at the corresponding
covariance-side cutoff one may still have
\[
\Pi_\ell^{(k)}
Q_\ell
\Pi_\ell^{(k+\beta),\perp}
\neq 0.
\]
Hence the feature-side commutator can indicate exact alignment while
nontrivial internal sensitivity mixing persists in covariance-null
directions. More generally, whenever the common cutoff is admissible,
\begin{equation}
\mathcal G_{\mathcal S,\ell}^{(k,\beta)}
\leq
\lambda_{\ell,1}\lambda_{\ell,k+\beta+1}
\mathcal G_{Q,\ell}^{(k,\beta)},
\label{eq:feature-localized-upper-bound}
\end{equation}
with the convention that the right-hand side is zero once
\(k+\beta\geq r_\ell\). Thus mixing into low-variance directions is strongly
suppressed. The feature-side commutator is therefore a spectrally filtered,
and potentially non-faithful, image of the internal sensitivity commutator.
This provides a structural explanation for the more frequent vanishing of
feature-side commutators observed in the numerical experiments: feature-side
alignment need not imply full internal spectral alignment.
\end{remark}

For a common cutoff satisfying \(k+\beta<m_{\ell+1}\),
Proposition~\ref{prop:common-spectral-localization}, now on the feature side,
also gives
\begin{equation}\label{eq:localized-feature-gap-bound}
\|[\mathcal S_\ell,\mathrm{NFM}_\ell]\|_F^2
\geq
2\bigl(\gamma_\ell^{(k,\beta)}\bigr)^2
\mathcal G_{\mathcal S,\ell}^{(k,\beta)}.
\end{equation}
The internal and feature-side localizations detect the same off-diagonal
entries of \(Q_\ell\), but the latter multiplies each entry by
\(\lambda_{\ell,i}\lambda_{\ell,j}\). Internal mixing involving small
singular directions may therefore remain visible in
\(\mathcal G_{Q,\ell}^{(k,\beta)}\) while being suppressed on the feature
side, and directions annihilated by \(W_\ell^\top\) are invisible there. This
is the localized form of the congruence identity
\eqref{eq:commutator-congruence}; under full row rank, the converse control is
quantified by \eqref{eq:comm-two-sided}.

\begin{remark}\label{rem:pointwise-feature-localization}
The pointwise feature energy
\[
\widetilde{\mathcal G}_{S,\ell}^{(k,\beta)}
:=
\int\|\Pi_{\ell,N}^{(k)}S_\ell(x,t)
\Pi_{\ell,N}^{(k+\beta),\perp}\|_F^2\,\d\mu(x)
\]
splits, by \(S_\ell=\mathcal S_\ell+R_\ell\) and
\(\mathbb E_\mu R_\ell=0\), as
\begin{align*}
\widetilde{\mathcal G}_{S,\ell}^{(k,\beta)}
&=
\mathcal G_{\mathcal S,\ell}^{(k,\beta)}
+\int\|\Pi_{\ell,N}^{(k)}R_\ell(x,t)
\Pi_{\ell,N}^{(k+\beta),\perp}\|_F^2\,\d\mu(x).
\end{align*}
Pointwise AGOP alignment is therefore stronger than population AGOP
alignment: it also requires control of the localized sensitivity
fluctuations.
\end{remark}

The three localized quantities thus resolve different parts of the same
geometry: \(\mathcal G_{D,\ell}^{(k,\beta)}\) records nonlinear gate mixing,
\(\mathcal G_{Q,\ell}^{(k,\beta)}\) records accumulated internal mixing, and
\(\mathcal G_{\mathcal S,\ell}^{(k,\beta)}\) records the part visible after
singular-value-weighted transport. They should be compared, not identified;
none is forced by the preceding identities to decrease during training.

\begin{remark}\label{rem:higher-order-commutators}
For \(q\geq1\), define
\(
\operatorname{ad}_{P_\ell}^{1}(A)=[A,P_\ell]
\) and
\(
\operatorname{ad}_{P_\ell}^{q+1}(A)
=[\operatorname{ad}_{P_\ell}^{q}(A),P_\ell]
\). Then
\[
\|\operatorname{ad}_{P_\ell}^{q}(A)\|_F^2
=
\sum_{i,j}|\lambda_{\ell,i}-\lambda_{\ell,j}|^{2q}
|u_{\ell,i}^\top A u_{\ell,j}|^2.
\]
Moreover, the congruence identity extends inductively to
\begin{equation}\label{eq:higher-order-transport-identity}
\operatorname{ad}_{\mathrm{NFM}_\ell}^{\,q}(\mathcal S_\ell)
=
W_\ell^\top\operatorname{ad}_{P_\ell}^{\,q}(Q_\ell)W_\ell.
\end{equation}
Higher order therefore changes the spectral weighting by emphasizing widely
separated modes; it does not create a scale-independent monotone hierarchy.
We do not develop its dynamics here.
\end{remark}

\section{Activation geometry and gate switching}
\label{sec:ActivationGeometry}

The pointwise gate \(D_\ell(x,t)\) and the averaged sensitivity
\(Q_\ell(t)\) are localized relative to the same covariance
\(P_\ell(t)\), but their dynamics are different. The latter is generated by
backward transport and population averaging, whereas
\[
D_\ell(x,t)=\operatorname{diag}\bigl(\sigma'(z_\ell(x,t))\bigr)
\]
is a pointwise nonlinear function of the preactivation. We isolate here the
activation-side dynamics needed below. For smooth activations the gate has a
classical velocity; for ReLU it evolves by switching between diagonal
projections. Neither description contains an intrinsic alignment mechanism.

\subsection{Smooth activation geometry}

Assume first that \(\sigma\in C^2(\mathbb R)\). Along a continuously
differentiable weight trajectory, differentiation of the forward recursion
gives, for \(\ell=0,\ldots,L-2\),
\begin{equation}\label{eq:activation-velocity-recursion}
\partial_ta_0=0,
\qquad
\partial_tz_\ell=\dot W_\ell a_\ell+W_\ell\partial_ta_\ell,
\qquad
\partial_ta_{\ell+1}=D_\ell\partial_tz_\ell.
\end{equation}
Thus gate motion at layer \(\ell\) depends on the local weight velocity and
on the activation dynamics transported from the preceding layers.

\begin{proposition}\label{prop:smooth-gate-evolution}
Let the weight trajectory be continuously differentiable on an interval
\(I\), and suppose that
\(M_\sigma:=\|\sigma''\|_{L^\infty(\mathbb R)}<\infty\). Then, for
\(\ell=0,\ldots,L-2\) and \(t\in I\),
\begin{equation}\label{eq:smooth-gate-evolution}
\partial_tD_\ell(x,t)
=
\operatorname{diag}\bigl(
\sigma''(z_\ell(x,t))\odot\partial_tz_\ell(x,t)
\bigr),
\end{equation}
and
\begin{align}
\|\partial_tD_\ell(x,t)\|_{\mathrm{op}}
&\leq M_\sigma\|\partial_tz_\ell(x,t)\|_\infty,
\label{eq:smooth-gate-op-bound}\\
\|\partial_tD_\ell(x,t)\|_F
&\leq M_\sigma\|\partial_tz_\ell(x,t)\|_2,
\label{eq:smooth-gate-F-bound}\\
\|\partial_tD_\ell(\cdot,t)\|_{\mathrm L^2(\mu;F)}
&\leq
M_\sigma
\|\partial_tz_\ell(\cdot,t)\|_{
\mathrm L^2(\mu;\mathbb R^{m_{\ell+1}})}.
\label{eq:smooth-gate-L2-bound}
\end{align}
\end{proposition}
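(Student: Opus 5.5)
The identity \eqref{eq:smooth-gate-evolution} follows from the chain rule applied entrywise. The gate \(D_\ell(x,t)\) is diagonal with \(r\)-th entry \(\sigma'(z_{\ell,r}(x,t))\). Since \(\sigma\in C^2(\mathbb R)\), the map \(\sigma'\) is \(C^1\). We first check that \(t\mapsto z_\ell(x,t)\) is continuously differentiable for each fixed \(x\), by induction on \(\ell\) using the recursion \eqref{eq:activation-velocity-recursion}. For \(\ell=0\), \(z_0=W_0x\) is \(C^1\) because \(W_0\) is. Suppose \(z_{\ell-1}\) is \(C^1\). Then \(a_\ell=\sigma(z_{\ell-1})\) is \(C^1\), and so is the product \(z_\ell=W_\ell a_\ell\). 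With \(z_\ell\) differentiable in \(t\), the chain rule gives
\[
\partial_t\sigma'(z_{\ell,r})=\sigma''(z_{\ell,r})\,\partial_tz_{\ell,r}
\]
for each \(r\). Collecting these entries into a diagonal matrix gives \eqref{eq:smooth-gate-evolution}.

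The norm bounds use only the diagonal structure. For a diagonal matrix, the operator norm is the largest absolute diagonal entry, so
\[
\|\partial_tD_\ell\|_{\mathrm{op}}=\max_r|\sigma''(z_{\ell,r})\partial_tz_{\ell,r}|\leq M_\sigma\|\partial_tz_\ell\|_\infty .
\]
The Frobenius norm is the Euclidean norm of the diagonal, so
\[
\|\partial_tD_\ell\|_F^2=\sum_r|\sigma''(z_{\ell,r})|^2|\partial_tz_{\ell,r}|^2\leq M_\sigma^2\|\partial_tz_\ell\|_2^2 .
\]
Squaring this pointwise bound \eqref{eq:smooth-gate-F-bound}, integrating against \(\mu\), and taking square roots gives \eqref{eq:smooth-gate-L2-bound}. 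The right-hand side is understood in \([0,\infty]\), so no separate integrability assumption is needed for the inequality.

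The main obstacle is interpretive rather than computational. The identity is pointwise in \(x\) for every \(t\in I\), and the \(\mathrm L^2\) bound is an inequality between integrals evaluated at fixed \(t\). Neither statement asserts that one may differentiate under the integral in \(x\), so no domination argument is needed here. Any later use of these bounds together with Proposition~\ref{prop:localized-energy-evolution} will require \(\partial_tz_\ell(\cdot,t)\in\mathrm L^2(\mu)\); that can be obtained from \eqref{eq:activation-velocity-recursion}, for instance when \(\sigma'\) is bounded and \(\mu\) has second moments, but it is not needed for the statement itself.
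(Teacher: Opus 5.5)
Your proposal is correct and follows essentially the same route as the paper. The paper also applies the chain rule to each diagonal entry \(\sigma'(z_{\ell,r})\), reads off the operator and Frobenius bounds from the \(\ell^\infty\)- and \(\ell^2\)-norms of the diagonal, and integrates in \(x\); your inductive check that \(t\mapsto z_\ell(x,t)\) is \(C^1\) is a correct detail that the paper leaves implicit.
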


\begin{proof}
The \(r\)-th diagonal entry of \(D_\ell\) is
\(\sigma'(z_{\ell,r})\), whose derivative is
\(\sigma''(z_{\ell,r})\partial_tz_{\ell,r}\). The pointwise norm bounds
follow from the corresponding \(\ell^\infty\)- and \(\ell^2\)-norms of the
diagonal, and integration gives \eqref{eq:smooth-gate-L2-bound}.
\end{proof}

These estimates control gate variation through preactivation variation; they
have no preferred sign and do not imply decay of a gate-covariance
commutator.

\subsection{ReLU regions and gate switching}

For ReLU, \(\sigma'(r)=\mathbf 1_{\{r>0\}}\). For
\(r=1,\ldots,m_{\ell+1}\), define
\begin{equation}\label{eq:relu-activation-regions}
A_{\ell,r}(t):=\{x:z_{\ell,r}(x,t)>0\},
\qquad
Z_{\ell,r}(t):=\{x:z_{\ell,r}(x,t)=0\}.
\end{equation}
Then
\begin{equation}\label{eq:relu-gate-regions}
D_\ell(x,t)
=
\operatorname{diag}\bigl(
\mathbf 1_{A_{\ell,1}(t)}(x),\ldots,
\mathbf 1_{A_{\ell,m_{\ell+1}}(t)}(x)
\bigr).
\end{equation}
If \(z_{\ell,r}(\cdot,t)\) is continuous, the activation interface
\(\partial A_{\ell,r}(t)\) is contained in \(Z_{\ell,r}(t)\), with equality
under the usual sign-change nondegeneracy.

The next identity expresses the gate geometry in the covariance eigenbasis.

\begin{proposition}\label{prop:activation-region-representation}
Let
\(P_\ell(t)=U_\ell(t)\Lambda_\ell(t)U_\ell(t)^\top\) and define
\(\widetilde D_\ell(x,t)
:=U_\ell(t)^\top D_\ell(x,t)U_\ell(t)\). Then
\begin{align}
(\widetilde D_\ell(x,t))_{ij}
&=
\sum_{r=1}^{m_{\ell+1}}
u_{\ell,ri}(t)u_{\ell,rj}(t)
\mathbf 1_{A_{\ell,r}(t)}(x),
\label{eq:rotated-relu-gate}\\
\mathbb E_\mu\!\left[|(\widetilde D_\ell(x,t))_{ij}|^2\right]
&=
\sum_{r,s=1}^{m_{\ell+1}}
u_{\ell,ri}(t)u_{\ell,rj}(t)u_{\ell,si}(t)u_{\ell,sj}(t)
\mu\bigl(A_{\ell,r}(t)\cap A_{\ell,s}(t)\bigr).
\label{eq:activation-intersection-formula}
\end{align}
\end{proposition}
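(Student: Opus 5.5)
The plan is a direct computation in the covariance eigenbasis, using only the diagonal structure \eqref{eq:relu-gate-regions} of the ReLU gate and the definition of \(\widetilde D_\ell\).

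\textbf{Step 1: the entry formula.} Write \(D_\ell(x,t)\) as a sum of rank-one coordinate projectors,
\[
D_\ell(x,t)=\sum_{r=1}^{m_{\ell+1}}\mathbf 1_{A_{\ell,r}(t)}(x)\,e_re_r^\top .
\]
Conjugating by \(U_\ell(t)\) gives
\[
(\widetilde D_\ell)_{ij}=u_{\ell,i}^\top D_\ell u_{\ell,j}=\sum_r \mathbf 1_{A_{\ell,r}}(x)\,(e_r^\top u_{\ell,i})(e_r^\top u_{\ell,j}).
\]
Since \(u_{\ell,ri}=e_r^\top u_{\ell,i}\) is the \(r\)-th coordinate of the \(i\)-th eigenvector, this is \eqref{eq:rotated-relu-gate}.

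\textbf{Step 2: the second moment.} The entry in \eqref{eq:rotated-relu-gate} is real, so its square is the double sum
\[
\sum_{r,s}u_{\ell,ri}u_{\ell,rj}u_{\ell,si}u_{\ell,sj}\,\mathbf 1_{A_{\ell,r}}(x)\mathbf 1_{A_{\ell,s}}(x).
\]
Two facts finish the step.
\begin{itemize}
\item The product of indicators is \(\mathbf 1_{A_{\ell,r}\cap A_{\ell,s}}\).
\item The coefficients depend on \(t\) but not on \(x\).
\end{itemize}
Taking \(\mathbb E_\mu\) term by term, which is legitimate because the sum is finite and every term is bounded and measurable, yields \eqref{eq:activation-intersection-formula}.

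\textbf{Main obstacle.} Measurability of \(A_{\ell,r}(t)\) must be checked. It follows because \(z_{\ell,r}(\cdot,t)\) is continuous in \(x\), being a composition of linear maps with ReLU, so \(A_{\ell,r}(t)\) is open. The fixed selection \(\sigma'(0)=0\) ensures that the zero sets \(Z_{\ell,r}\) contribute nothing, so no no-interface-mass assumption is needed.

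Finally, the identity does not depend on the choice of eigenbasis in degenerate clusters, since any orthonormal eigenbasis \(U_\ell\) may be used. Beyond this point, everything is routine bookkeeping.
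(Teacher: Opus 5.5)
Your proposal is correct and follows the paper's proof: insert the indicator representation \eqref{eq:relu-gate-regions} into \(U_\ell^\top D_\ell U_\ell\), square the resulting finite sum using \(\mathbf 1_{A}\mathbf 1_{B}=\mathbf 1_{A\cap B}\), and integrate term by term. Your extra remarks are also correct and are left implicit in the paper: \(A_{\ell,r}(t)\) is measurable because it is open, and the selection \(\sigma'(0)=0\) makes the identity hold for every \(x\) without a no-interface-mass assumption.
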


\begin{proof}
Insert \eqref{eq:relu-gate-regions} into
\(U_\ell^\top D_\ell U_\ell\), then square the resulting scalar expression
and integrate.
\end{proof}

Together with \eqref{eq:common-commutator-expansion}, this shows that the
gate-covariance energy couples spectral separation in \(P_\ell\) to the
overlap geometry of the activation regions. Its time dependence is measured
without differentiating the ReLU gate.

\begin{proposition}\label{prop:relu-switching-distance}
For every \(s,t\),
\begin{equation}\label{eq:relu-switching-distance}
\|D_\ell(\cdot,t)-D_\ell(\cdot,s)\|_{
\mathrm L^2(\mu;F)}^2
=
\sum_{r=1}^{m_{\ell+1}}
\mu\bigl(A_{\ell,r}(t)\triangle A_{\ell,r}(s)\bigr).
\end{equation}
Moreover, if \(z_{\ell,r}(x,t_n)\to z_{\ell,r}(x,t)\) for
\(\mu\)-almost every \(x\) and
\begin{equation}\label{eq:no-mass-on-interface}
\mu\bigl(Z_{\ell,r}(t)\bigr)=0,
\qquad r=1,\ldots,m_{\ell+1},
\end{equation}
then \(D_\ell(\cdot,t_n)\to D_\ell(\cdot,t)\) in
\(\mathrm L^2(\mu;F)\).
\end{proposition}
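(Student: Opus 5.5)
The plan is to exploit the fact that, for ReLU, the gate is a diagonal matrix of indicator functions, so the Frobenius difference separates entrywise. By \eqref{eq:relu-gate-regions}, for every \(x\),
\[
\|D_\ell(x,t)-D_\ell(x,s)\|_F^2
=
\sum_{r=1}^{m_{\ell+1}}
\bigl|\mathbf 1_{A_{\ell,r}(t)}(x)-\mathbf 1_{A_{\ell,r}(s)}(x)\bigr|^2 .
\]
Each summand is itself an indicator function. It equals \(1\) exactly when \(x\) lies in one region but not the other, so it is \(\mathbf 1_{A_{\ell,r}(t)\triangle A_{\ell,r}(s)}(x)\). Integrating against \(\mu\) and exchanging the finite sum with the integral gives \eqref{eq:relu-switching-distance} directly. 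Measurability of the regions follows from the measurability of \(x\mapsto z_{\ell,r}(x,t)\), which is a composition of linear maps and the continuous activation.

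For the continuity statement, I apply the identity with \(s=t_n\). It then suffices to show, for each fixed \(r\), that \(\mu(A_{\ell,r}(t_n)\triangle A_{\ell,r}(t))\to0\). I write this measure as \(\int|\mathbf 1_{\{z_{\ell,r}(x,t_n)>0\}}-\mathbf 1_{\{z_{\ell,r}(x,t)>0\}}|\,\d\mu(x)\) and use dominated convergence, with dominating function \(1\) and \(\mu\) a probability measure.

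The step needing care is the pointwise convergence of the integrand. Take \(x\) outside the exceptional null set where \(z_{\ell,r}(x,t_n)\not\to z_{\ell,r}(x,t)\), and also outside \(Z_{\ell,r}(t)\), which is null by \eqref{eq:no-mass-on-interface}. Then either \(z_{\ell,r}(x,t)>0\) or \(z_{\ell,r}(x,t)<0\). In either case, pointwise convergence makes \(z_{\ell,r}(x,t_n)\) carry the same strict sign for all large \(n\). Hence the indicators eventually agree and the integrand tends to \(0\) \(\mu\)-almost everywhere.

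This is exactly where the no-interface-mass condition enters. On \(Z_{\ell,r}(t)\), the fixed selection \(\sigma'(0)=0\) can disagree with nearby positive values indefinitely, so continuity can fail if that set carries mass. Summing over the finitely many \(r\) then yields \(D_\ell(\cdot,t_n)\to D_\ell(\cdot,t)\) in \(\mathrm L^2(\mu;F)\).
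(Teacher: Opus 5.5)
Your proposal is correct and follows essentially the same route as the paper: you use the entrywise identity that the squared difference of the two indicators is the indicator of the symmetric difference, and you then combine almost-everywhere sign convergence off the null interface \(Z_{\ell,r}(t)\) with dominated convergence. Your argument simply spells out in more detail the steps that the paper's proof compresses into two lines.
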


\begin{proof}
For each \(r\),
\[
\bigl(
\mathbf 1_{A_{\ell,r}(t)}
-
\mathbf 1_{A_{\ell,r}(s)}
\bigr)^2
=
\mathbf 1_{A_{\ell,r}(t)\triangle A_{\ell,r}(s)},
\]
which gives \eqref{eq:relu-switching-distance}. Under
\eqref{eq:no-mass-on-interface}, the signs converge almost everywhere, and
the second assertion follows by dominated convergence.
\end{proof}

Thus the ReLU gate may be continuous in the matrix-valued
\(\mathrm L^2\)-space without being classically differentiable. If
\(z_{\ell,r}\) is \(C^1\) in \((x,t)\) and
\(\nabla_xz_{\ell,r}\neq0\) on the interface, its normal velocity in the
direction \(\nabla_xz_{\ell,r}/\|\nabla_xz_{\ell,r}\|_2\) is
\[
v_{\ell,r}^{\mathrm n}
=
-\frac{\partial_tz_{\ell,r}}
{\|\nabla_xz_{\ell,r}\|_2}.
\]
The symmetric-difference identity provides a robust global formulation that
does not require this regularity.

\subsection{Localized gate-energy variation}

Fix integers \(1\leq k\leq k+\beta<m_{\ell+1}\), \(\beta\geq0\), and set
\begin{equation}\label{eq:localized-gate-block-activation}
B_{D,\ell}^{(k,\beta)}(x,t)
:=
\Pi_\ell^{(k)}(t)D_\ell(x,t)
\Pi_\ell^{(k+\beta),\perp}(t).
\end{equation}
Then
\(\mathcal G_{D,\ell}^{(k,\beta)}
=\|B_{D,\ell}^{(k,\beta)}\|_{
\mathrm L^2(\mu;F)}^2\). For a smooth activation, the common evolution
estimate from Proposition~\ref{prop:localized-energy-evolution} gives the
following specialization.

\begin{corollary}\label{cor:smooth-localized-gate-energy}
Assume the hypotheses of Proposition~\ref{prop:smooth-gate-evolution}, with
\(D_\ell(\cdot,t)\in\mathrm L^2(\mu;F)\) and
\(\partial_tz_\ell(\cdot,t)\in
\mathrm L^2(\mu;\mathbb R^{m_{\ell+1}})\), and suppose that
\(\gamma_\ell^{(k)},\gamma_\ell^{(k+\beta)}>0\) on the interval under
consideration. Then
\begin{align}
\left|
\frac{\d}{\d t}\mathcal G_{D,\ell}^{(k,\beta)}(t)
\right|
&\leq
2\sqrt{\mathcal G_{D,\ell}^{(k,\beta)}(t)}
\Bigl[
M_\sigma
\|\partial_tz_\ell(\cdot,t)\|_{
\mathrm L^2(\mu;\mathbb R^{m_{\ell+1}})}
\notag\\
&\qquad+
\bigl(
\|\dot\Pi_\ell^{(k)}(t)\|_{\mathrm{op}}
+\|\dot\Pi_\ell^{(k+\beta)}(t)\|_{\mathrm{op}}
\bigr)
\|D_\ell(\cdot,t)\|_{
\mathrm L^2(\mu;F)}
\Bigr].
\label{eq:smooth-localized-gate-energy}
\end{align}
Under unregularized gradient flow or gradient flow with weight decay, the
projector contribution inside the brackets is bounded by
\begin{align}
&2C\|W_\ell(t)\|_{\mathrm{op}}\|G_\ell(t)\|_{\mathrm{op}}
\left(
\frac{1}{\gamma_\ell^{(k)}(t)}
+\frac{1}{\gamma_\ell^{(k+\beta)}(t)}
\right)
\|D_\ell(\cdot,t)\|_{\mathrm L^2(\mu;F)}.
\label{eq:smooth-projector-contribution}
\end{align}
\end{corollary}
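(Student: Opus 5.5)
The plan is to specialize Proposition~\ref{prop:localized-energy-evolution} to the choice \(A_\ell=D_\ell\) and then bound the two terms inside its bracket separately. First I would check its hypotheses. \(D_\ell(\cdot,t)\in\mathrm L^2(\mu;F)\) is assumed. For \(\partial_tD_\ell(\cdot,t)\in\mathrm L^2(\mu;F)\), I would use \eqref{eq:smooth-gate-evolution} together with \eqref{eq:smooth-gate-L2-bound} and the assumption \(\partial_tz_\ell(\cdot,t)\in\mathrm L^2(\mu;\mathbb R^{m_{\ell+1}})\). Differentiability of the two projectors \(\Pi_\ell^{(k)}\) and \(\Pi_\ell^{(k+\beta)}\) follows from the Riesz representation \eqref{eq:Riesz-projector}--\eqref{eq:Riesz-projector-derivative}. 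This requires \(P_\ell=W_\ell W_\ell^\top\) to be \(C^1\), which holds because the weight trajectory is continuously differentiable, and it requires the gaps \(\gamma_\ell^{(k)},\gamma_\ell^{(k+\beta)}\) to stay open, which is assumed. Since \(\mathcal G_{D,\ell}^{(k,\beta)}=\mathcal E_{D,\ell}^{(k,\beta)}\) with \(A_\ell=D_\ell\), Proposition~\ref{prop:localized-energy-evolution} then gives \eqref{eq:localized-energy-evolution} with \(A_\ell=D_\ell\).

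Next I would replace \(\|\partial_tD_\ell(\cdot,t)\|_{\mathrm L^2(\mu;F)}\) by \(M_\sigma\|\partial_tz_\ell(\cdot,t)\|_{\mathrm L^2}\) using \eqref{eq:smooth-gate-L2-bound}. The prefactor \(2\sqrt{\mathcal G_{D,\ell}^{(k,\beta)}}\) is nonnegative, so this substitution preserves the inequality, and it yields \eqref{eq:smooth-localized-gate-energy}.

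For the projector contribution I would apply \eqref{eq:projector-velocity-gradient} twice, once with cutoff \(k\) and once with cutoff \(k+\beta\). Each application gives \(\|\dot\Pi_\ell^{(j)}\|_{\mathrm{op}}\leq 2C\|W_\ell\|_{\mathrm{op}}\|G_\ell\|_{\mathrm{op}}/\gamma_\ell^{(j)}\). Proposition~\ref{prop:projector-velocity} already covers the weight-decay case: the term \(-2\lambda P_\ell\) commutes with \(P_\ell\), so only \(H_\ell\) moves the eigenspaces. Summing the two bounds and multiplying by \(\|D_\ell\|_{\mathrm L^2(\mu;F)}\) gives \eqref{eq:smooth-projector-contribution}.

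The only delicate point is regularity. Proposition~\ref{prop:localized-energy-evolution} differentiates an integral in time, so I would justify exchanging \(\frac{\d}{\d t}\) with \(\int\cdot\,\d\mu\). The cleanest route is to regard \(t\mapsto B_{D,\ell}^{(k,\beta)}(\cdot,t)\) as a differentiable curve in the Hilbert space \(\mathrm L^2(\mu;F)\). Its derivative is the product-rule expression from the proof of Proposition~\ref{prop:localized-energy-evolution}, and it exists in \(\mathrm L^2\) because the pointwise difference quotients of \(D_\ell\) are dominated. Indeed, by the mean value theorem and \(|\sigma''|\leq M_\sigma\), they are bounded by \(M_\sigma\) times the corresponding difference quotients of \(z_\ell\). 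Where \(\mathcal G_{D,\ell}^{(k,\beta)}\) vanishes, the bound is to be read in the almost-everywhere or Dini-derivative sense, as in the standing convention.
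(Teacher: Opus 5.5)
Your proposal is correct and follows the paper's proof exactly: apply \eqref{eq:localized-energy-evolution} with \(A_\ell=D_\ell\), replace \(\|\partial_tD_\ell\|_{\mathrm L^2(\mu;F)}\) using \eqref{eq:smooth-gate-L2-bound}, and bound the two projector velocities by \eqref{eq:projector-velocity-gradient} at the cutoffs \(k\) and \(k+\beta\). Your regularity remark goes beyond what the paper does, with two caveats: the mean-value bound only moves the domination problem to the difference quotients of \(z_\ell\), which need their own \(\mathrm L^2\) majorant (for example, differentiability of \(t\mapsto z_\ell(\cdot,t)\) in \(\mathrm L^2\)); and no Dini reading is needed where \(\mathcal G_{D,\ell}^{(k,\beta)}=0\), because there \(\frac{\d}{\d t}\|B_{D,\ell}^{(k,\beta)}\|^2=2\langle B_{D,\ell}^{(k,\beta)},\partial_tB_{D,\ell}^{(k,\beta)}\rangle=0\).
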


\begin{proof}
Apply \eqref{eq:localized-energy-evolution} with \(A_\ell=D_\ell\), use
\eqref{eq:smooth-gate-L2-bound}, and then apply
\eqref{eq:projector-velocity-gradient} at the two cutoff indices.
\end{proof}

For ReLU, the corresponding control is finite-time.

\begin{proposition}\label{prop:relu-localized-energy-variation}
Assume that the relevant spectral projectors have been chosen at \(s\) and
\(t\). Then
\begin{align}
\left|
\sqrt{\mathcal G_{D,\ell}^{(k,\beta)}(t)}
-
\sqrt{\mathcal G_{D,\ell}^{(k,\beta)}(s)}
\right|
&\leq
\bigl(
\sum_{r=1}^{m_{\ell+1}}
\mu\bigl(A_{\ell,r}(t)\triangle A_{\ell,r}(s)\bigr)
\bigr)^{1/2}
\notag\\
&\quad+
\sqrt{m_{\ell+1}}
\Bigl(
\|\Pi_\ell^{(k)}(t)-\Pi_\ell^{(k)}(s)\|_{\mathrm{op}}+
\|\Pi_\ell^{(k+\beta)}(t)
-\Pi_\ell^{(k+\beta)}(s)\|_{\mathrm{op}}
\Bigr).
\label{eq:relu-localized-energy-variation}
\end{align}
If, in addition, the weights follow either gradient flow considered above and
the relevant gaps remain positive on \([s,t]\), the projector differences are
controlled by \eqref{eq:projector-stability}.
\end{proposition}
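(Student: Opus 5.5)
The plan is to view \(\sqrt{\mathcal G_{D,\ell}^{(k,\beta)}(t)}\) as the \(\mathrm L^2(\mu;F)\)-norm of the block \(B_{D,\ell}^{(k,\beta)}(\cdot,t)\) from \eqref{eq:localized-gate-block-activation}. The reverse triangle inequality in \(\mathrm L^2(\mu;F)\) then gives
\[
\left|\sqrt{\mathcal G_{D,\ell}^{(k,\beta)}(t)}-\sqrt{\mathcal G_{D,\ell}^{(k,\beta)}(s)}\right|
\leq
\|B_{D,\ell}^{(k,\beta)}(\cdot,t)-B_{D,\ell}^{(k,\beta)}(\cdot,s)\|_{\mathrm L^2(\mu;F)} .
\]
The task therefore reduces to bounding this increment. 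I will use the finite-increment analogue of the product rule from the proof of Proposition~\ref{prop:localized-energy-evolution}, and never differentiate the ReLU gate in time.

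Abbreviate \(\Pi_t=\Pi_\ell^{(k)}(t)\), \(\Pi'_t=\Pi_\ell^{(k+\beta),\perp}(t)\), and \(D_t=D_\ell(\cdot,t)\). Telescoping gives
\[
\Pi_tD_t\Pi'_t-\Pi_sD_s\Pi'_s
=\Pi_t(D_t-D_s)\Pi'_t+(\Pi_t-\Pi_s)D_s\Pi'_t+\Pi_sD_s(\Pi'_t-\Pi'_s).
\]
\begin{itemize}
\item For the first term, both projections have operator norm at most one and \(\|XYZ\|_F\leq\|X\|_{\mathrm{op}}\|Y\|_F\|Z\|_{\mathrm{op}}\), so its \(\mathrm L^2(\mu;F)\)-norm is at most \(\|D_t-D_s\|_{\mathrm L^2(\mu;F)}\). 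By Proposition~\ref{prop:relu-switching-distance} this equals the square root of the sum of the symmetric-difference measures.
\item For the other two terms, use \(\|XYZ\|_F\leq\|X\|_{\mathrm{op}}\|Y\|_F\|Z\|_{\mathrm{op}}\) again, now with the projector difference in operator norm. Since \(D_s(x)\) is a diagonal \(0/1\) matrix, \(\|D_s(x)\|_F\leq\sqrt{m_{\ell+1}}\) pointwise, and hence \(\|D_s\|_{\mathrm L^2(\mu;F)}\leq\sqrt{m_{\ell+1}}\) because \(\mu\) is a probability measure.
\item Finally, \(\Pi'_t-\Pi'_s=-(\Pi_\ell^{(k+\beta)}(t)-\Pi_\ell^{(k+\beta)}(s))\), so its operator norm is exactly the second projector difference in \eqref{eq:relu-localized-energy-variation}.
\end{itemize}
Summing the three bounds yields \eqref{eq:relu-localized-energy-variation}.

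For the additional assertion, when the gaps \(\gamma_\ell^{(k)}\) and \(\gamma_\ell^{(k+\beta)}\) stay positive on \([s,t]\) under either gradient flow, I will apply \eqref{eq:projector-stability} at each of the two cutoff indices. Proposition~\ref{prop:projector-velocity} covers the weight-decay case as well, since there only \(H_\ell\) drives the projector.

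The only delicate point is bookkeeping rather than analysis. The projectors must be the chosen ones at each time, which is automatic if the gaps are open and otherwise is fixed by the hypothesis. The ReLU gate must be handled purely through the \(\mathrm L^2\) increment, with no time derivative. I expect no real obstacle. The step deserving the most care is the crude bound \(\sqrt{m_{\ell+1}}\) for \(\|D_s\|_{\mathrm L^2(\mu;F)}\), which is where the dimension factor in the statement comes from.
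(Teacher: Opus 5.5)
Your proposal is correct and follows the paper's proof essentially step for step: the reverse triangle inequality in \(\mathrm L^2(\mu;F)\), the same three-term telescoping of \(\Pi_\ell^{(k)}D_\ell\Pi_\ell^{(k+\beta),\perp}\), the bound \(\|D_\ell(\cdot,\tau)\|_{\mathrm L^2(\mu;F)}\leq\sqrt{m_{\ell+1}}\) for the two projector terms, and Proposition~\ref{prop:relu-switching-distance} for the gate increment. The paper states the dimension bound as \(\|D_\ell(\cdot,\tau)\|_{\mathrm L^2(\mu;F)}^2=\sum_r\mu(A_{\ell,r}(\tau))\leq m_{\ell+1}\), which is the same estimate as your pointwise Frobenius bound.
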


\begin{proof}
The reverse triangle inequality and a three-term telescoping decomposition
give
\begin{align*}
\bigl|
\|B_{D,\ell}^{(k,\beta)}(t)\|_{
\mathrm L^2(\mu;F)}
-
\|B_{D,\ell}^{(k,\beta)}(s)\|_{
\mathrm L^2(\mu;F)}
\bigr|
&\leq
\|D_\ell(t)-D_\ell(s)\|_{
\mathrm L^2(\mu;F)}
\\
&\quad+
\sqrt{m_{\ell+1}}
\|\Pi_\ell^{(k)}(t)-\Pi_\ell^{(k)}(s)\|_{\mathrm{op}}
\\
&\quad+
\sqrt{m_{\ell+1}}
\|\Pi_\ell^{(k+\beta)}(t)
-\Pi_\ell^{(k+\beta)}(s)\|_{\mathrm{op}},
\end{align*}
because
\(\|D_\ell(\cdot,\tau)\|_{\mathrm L^2(\mu;F)}^2
=\sum_r\mu(A_{\ell,r}(\tau))\leq m_{\ell+1}\).
Now use Proposition~\ref{prop:relu-switching-distance}.
\end{proof}

The smooth and ReLU estimates separate activation motion from covariance
subspace motion. They control variation, not its sign. Hence convergence of
the projectors and stabilization of the gates imply convergence of the
localized energies, but do not force their limits to vanish; alignment
requires an additional mechanism controlling the limiting cross blocks.

\section{Stabilization and conditional propagation of spectral alignment}
\label{sec:AlignmentDynamics}

The preceding sections introduced activation, internal sensitivity, and
feature-side alignment and identified the exact relations between them. These
relations do not form an unconditional decay chain. The gate commutator is one
local source in the recursion for the internal commutator, while the
feature-side commutator is a singular-value-weighted congruence image of the
combined internal geometry. We now distinguish stabilization of the localized
energies from decay to zero and give sufficient conditions under which
alignment propagates between the three levels.

Throughout the section, fix a hidden layer \(0\leq\ell\leq L-2\) and integers
\[
1\leq k\leq k+\beta<m_{\ell+1},
\qquad \beta\geq0.
\]
We use the buffered energies
\(\mathcal G_{D,\ell}^{(k,\beta)}\) and
\(\mathcal G_{Q,\ell}^{(k,\beta)}\) from
\eqref{eq:localized-gate-energy} and
\eqref{eq:localized-sensitivity-energy}; when \(\beta=0\), we omit \(\beta\)
from the superscript.

\subsection{Stabilization and limiting compatibility}

For smooth activations, Corollary~\ref{cor:smooth-localized-gate-energy}
implies
\begin{equation}\label{eq:sqrt-energy-stability}
\left|
\sqrt{\mathcal G_{D,\ell}^{(k,\beta)}(t)}
-
\sqrt{\mathcal G_{D,\ell}^{(k,\beta)}(s)}
\right|
\leq
\int_s^t b_{D,\ell}^{(k,\beta)}(\tau)\,\d\tau,
\end{equation}
where
\begin{align}
b_{D,\ell}^{(k,\beta)}
&:=
\|\partial_tD_\ell\|_{\mathrm L^2(\mu;F)}
+
\bigl(
\|\dot\Pi_\ell^{(k)}\|_{\mathrm{op}}
+
\|\dot\Pi_\ell^{(k+\beta)}\|_{\mathrm{op}}
\bigr)
\|D_\ell\|_{\mathrm L^2(\mu;F)}.
\label{eq:alignment-variation-coefficient}
\end{align}
Thus \(b_{D,\ell}^{(k,\beta)}\in\mathrm L^1(0,\infty)\) implies convergence
of the localized gate energy. For ReLU, the same conclusion follows from
Proposition~\ref{prop:relu-localized-energy-variation} if the two covariance
projectors converge and \(D_\ell(\cdot,t)\) is Cauchy in
\(\mathrm L^2(\mu;F)\). By
Proposition~\ref{prop:relu-switching-distance}, the latter condition is
equivalent to
\[
\lim_{T\to\infty}
\sup_{s,t\geq T}
\sum_{r=1}^{m_{\ell+1}}
\mu\bigl(A_{\ell,r}(t)\triangle A_{\ell,r}(s)\bigr)
=0.
\]
Likewise,
\(\mathcal G_{Q,\ell}^{(k,\beta)}(t)\) converges whenever \(Q_\ell(t)\) is
Cauchy in Frobenius norm and the two projectors are Cauchy in operator norm.
These assertions control late-time variation, not the value of the limiting
cross block.

\begin{theorem}\label{thm:limiting-spectral-compatibility}
Assume that
\[
\Pi_\ell^{(k)}(t)\longrightarrow\Pi_\ell^{(k),\infty},
\qquad
\Pi_\ell^{(k+\beta)}(t)
\longrightarrow\Pi_\ell^{(k+\beta),\infty}
\quad\text{in operator norm}.
\]
If
\(D_\ell(\cdot,t)\to D_\ell^\infty(\cdot)\) in
\(\mathrm L^2(\mu;F)\), then
\begin{align}
\mathcal G_{D,\ell}^{(k,\beta)}(t)
&\longrightarrow
\left\|
\Pi_\ell^{(k),\infty}D_\ell^\infty(\cdot)
\Pi_\ell^{(k+\beta),\infty,\perp}
\right\|_{\mathrm L^2(\mu;F)}^2.
\label{eq:limiting-gate-cross-block}
\end{align}
If \(Q_\ell(t)\to Q_\ell^\infty\) in Frobenius norm, then
\begin{align}
\mathcal G_{Q,\ell}^{(k,\beta)}(t)
&\longrightarrow
\left\|
\Pi_\ell^{(k),\infty}Q_\ell^\infty
\Pi_\ell^{(k+\beta),\infty,\perp}
\right\|_F^2.
\label{eq:limiting-sensitivity-cross-block}
\end{align}
Hence either localized energy converges to zero if and only if its limiting
buffered cross block vanishes. For \(\beta=0\), these conditions are
equivalent, respectively, to
\begin{equation}\label{eq:limiting-unbuffered-compatibility}
[D_\ell^\infty(x),\Pi_\ell^{(k),\infty}]=0
\quad\text{for \(\mu\)-a.e. \(x\)},
\qquad
[Q_\ell^\infty,\Pi_\ell^{(k),\infty}]=0.
\end{equation}
\end{theorem}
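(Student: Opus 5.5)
The argument is a continuity statement for the trilinear map \((\Pi,A,\Pi')\mapsto\Pi A\Pi'^{\perp}\), followed by an elementary block characterization of commutation with a projector. For the gate energy, write \(B(t)=\Pi_\ell^{(k)}(t)D_\ell(\cdot,t)\Pi_\ell^{(k+\beta),\perp}(t)\) and \(B^\infty=\Pi_\ell^{(k),\infty}D_\ell^\infty\Pi_\ell^{(k+\beta),\infty,\perp}\), where \(\Pi^{(k+\beta),\infty,\perp}=I-\Pi^{(k+\beta),\infty}\). Then telescope
\[
B(t)-B^\infty
=(\Pi^{(k)}(t)-\Pi^{(k),\infty})D_\ell(t)\Pi^{(k+\beta),\perp}(t)
+\Pi^{(k),\infty}(D_\ell(t)-D_\ell^\infty)\Pi^{(k+\beta),\perp}(t)
+\Pi^{(k),\infty}D_\ell^\infty(\Pi^{(k+\beta),\infty}-\Pi^{(k+\beta)}(t)),
\]
pointwise in \(x\). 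Then apply \(\|ABC\|_F\le\|A\|_{\mathrm{op}}\|B\|_F\|C\|_{\mathrm{op}}\) and integrate over \(\mu\). Projectors and their complements have operator norm at most one, so this gives
\[
\|B(t)-B^\infty\|_{\mathrm L^2(\mu;F)}\le\|\Pi^{(k)}(t)-\Pi^{(k),\infty}\|_{\mathrm{op}}\|D_\ell(t)\|_{\mathrm L^2}+\|D_\ell(t)-D_\ell^\infty\|_{\mathrm L^2}+\|D_\ell^\infty\|_{\mathrm L^2}\|\Pi^{(k+\beta)}(t)-\Pi^{(k+\beta),\infty}\|_{\mathrm{op}}.
\]
Convergence in \(\mathrm L^2(\mu;F)\) makes \(\|D_\ell(t)\|_{\mathrm L^2}\) bounded for large \(t\), so the right-hand side tends to zero. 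The reverse triangle inequality gives \(\sqrt{\mathcal G_{D,\ell}^{(k,\beta)}(t)}\to\|B^\infty\|\), and squaring yields \eqref{eq:limiting-gate-cross-block}. The sensitivity statement \eqref{eq:limiting-sensitivity-cross-block} is the same computation with \(Q_\ell\) in place of \(D_\ell\), the \(x\)-integral omitted, and the \(\mathrm L^2(\mu;F)\) norm replaced by the Frobenius norm. The equivalence "converges to zero iff the limiting cross block vanishes" is then immediate, in the \(\mathrm L^2\) sense for the gate, that is, for \(\mu\)-a.e. \(x\).

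For \(\beta=0\), set \(\Pi=\Pi_\ell^{(k),\infty}\). Operator-norm limits of orthogonal projectors are orthogonal projectors, since \(\Pi^2=\Pi\) and \(\Pi^\top=\Pi\) pass to the limit. For symmetric \(A\), the identity \(A=\Pi A\Pi+\Pi A\Pi^\perp+\Pi^\perp A\Pi+\Pi^\perp A\Pi^\perp\) gives
\[
[A,\Pi]=\Pi^\perp A\Pi-\Pi A\Pi^\perp,
\qquad
\|[A,\Pi]\|_F^2=\|\Pi^\perp A\Pi\|_F^2+\|\Pi A\Pi^\perp\|_F^2=2\|\Pi A\Pi^\perp\|_F^2.
\]
The cross terms vanish by orthogonality of the ranges, and \((\Pi A\Pi^\perp)^\top=\Pi^\perp A\Pi\) by symmetry. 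Hence \(\Pi A\Pi^\perp=0\) if and only if \([A,\Pi]=0\). Apply this to \(A=D_\ell^\infty(x)\) for \(\mu\)-a.e. \(x\). This needs \(D_\ell^\infty(x)\) symmetric, which holds after passing to a subsequence converging a.e., since the diagonal matrices form a closed set. Apply it also to \(A=Q_\ell^\infty\), which is symmetric as a Frobenius limit of symmetric matrices. This gives \eqref{eq:limiting-unbuffered-compatibility}.

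There is no real obstacle; the whole proof is routine continuity. The only points needing care are bookkeeping. First, the \(\mathrm L^2(\mu;F)\) boundedness of \(D_\ell(t)\) must be in hand before using the telescoping estimate. Second, the limiting projectors must be orthogonal projectors, and the limiting operators symmetric, for the block identity to apply. Notably, no gap assumption at \(t=\infty\) is needed, because the limiting projectors are given directly by hypothesis.
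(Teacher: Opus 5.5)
Your proposal is correct and follows the paper's proof. A three-term telescoping decomposition together with continuity of matrix multiplication gives convergence of the cross blocks in $\mathrm L^2(\mu;F)$ (Frobenius norm for $Q_\ell$), and for $\beta=0$ symmetry identifies vanishing of the cross block with block diagonality, hence with commutation; you additionally make explicit bookkeeping the paper leaves implicit (the telescoping bound, boundedness of $\|D_\ell(\cdot,t)\|_{\mathrm L^2(\mu;F)}$, orthogonality of the limiting projectors, and a.e.\ diagonality of $D_\ell^\infty$).
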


\begin{proof}
A three-term telescoping decomposition and continuity of matrix
multiplication give
\[
\Pi_\ell^{(k)}(t)D_\ell(\cdot,t)
\Pi_\ell^{(k+\beta),\perp}(t)
\longrightarrow
\Pi_\ell^{(k),\infty}D_\ell^\infty(\cdot)
\Pi_\ell^{(k+\beta),\infty,\perp}
\]
in \(\mathrm L^2(\mu;F)\); the argument for \(Q_\ell\) is identical. When
\(\beta=0\), symmetry shows that vanishing of one cross block is equivalent
to block diagonality relative to \(\Pi_\ell^{(k),\infty}\), hence to the
corresponding commutation relation.
\end{proof}

The theorem separates stabilization from alignment. Stabilization means that
the cross block approaches a limit; alignment is the additional assertion
that this limit vanishes. No convergence estimate above supplies that
compatibility by itself.

\subsection{A dissipation criterion for localized decay}

The following abstract criterion applies to either the gate or the averaged
sensitivity localization. It is stated separately from the network-specific
problem of proving the required coercivity estimate.

\begin{theorem}\label{thm:dissipation-driven-alignment}
Let \(\mathscr G:[0,\infty)\to[0,\infty)\) be one of
\(\mathcal G_{D,\ell}^{(k,\beta)}\) or
\(\mathcal G_{Q,\ell}^{(k,\beta)}\), and assume that \(\mathscr G\) is
uniformly continuous. Suppose that \(W(t)\) follows gradient flow,
\(\dot W=-\nabla\mathcal R(W)\), that \(\mathcal R\) is bounded below along
the trajectory, and set
\[
\|\nabla\mathcal R(W)\|_F^2
:=
\sum_{j=0}^{L-1}
\|\nabla_{W_j}\mathcal R(W)\|_F^2.
\]
Assume that, for some \(c>0\) and nonnegative
\(r\in\mathrm L^1(0,\infty)\),
\begin{equation}\label{eq:alignment-coercivity}
\mathscr G(t)\leq c\|\nabla\mathcal R(W(t))\|_F^2+r(t)
\end{equation}
for almost every \(t\geq0\). Then \(\mathscr G(t)\to0\) as
\(t\to\infty\).
\end{theorem}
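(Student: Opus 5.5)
The argument is the standard Barbalat-type lemma: an integrable, uniformly continuous, nonnegative function tends to zero.

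\textbf{Step 1: integrability.} By the dissipation identity \eqref{Pre26}, which holds for almost every \(t\) along a locally absolutely continuous gradient-flow trajectory, we have for every \(T>0\)
\[
\int_0^T\|\nabla\mathcal R(W(t))\|_F^2\,\d t=\mathcal R(W(0))-\mathcal R(W(T))\leq\mathcal R(W(0))-\inf_{t\geq0}\mathcal R(W(t)).
\]
The right-hand side is finite because \(\mathcal R\) is bounded below along the trajectory. Hence \(\|\nabla\mathcal R(W(\cdot))\|_F^2\in\mathrm L^1(0,\infty)\). Integrating the coercivity bound \eqref{eq:alignment-coercivity} and using \(r\in\mathrm L^1\) and \(\mathscr G\geq0\) then gives \(\mathscr G\in\mathrm L^1(0,\infty)\). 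Only the almost-everywhere validity of \eqref{eq:alignment-coercivity} is needed here.

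\textbf{Step 2: Barbalat.} We argue by contradiction. Suppose \(\mathscr G(t)\not\to0\). Then there are \(\varepsilon>0\) and times \(t_n\to\infty\), which we may take with \(t_{n+1}\geq t_n+1\), such that \(\mathscr G(t_n)\geq\varepsilon\). Uniform continuity supplies \(\eta\in(0,1/2)\) such that \(|t-s|\leq\eta\) implies \(|\mathscr G(t)-\mathscr G(s)|\leq\varepsilon/2\). Hence \(\mathscr G\geq\varepsilon/2\) on each interval \([t_n,t_n+\eta]\). These intervals are pairwise disjoint, so
\[
\int_0^\infty\mathscr G\,\d t\geq\sum_n\eta\varepsilon/2=\infty,
\]
which contradicts Step 1. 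Therefore \(\mathscr G(t)\to0\).

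\textbf{Main obstacle.} The only delicate point is justifying Step 1 under the standing trajectory convention. We need \(t\mapsto\mathcal R(W(t))\) to be locally absolutely continuous, so that \eqref{Pre26} integrates to the energy identity; the convention stated before \eqref{Pre26} provides exactly this. Measurability of \(\mathscr G\) is automatic, since \(\mathscr G\) is assumed continuous. Uniform continuity is what rules out thin spikes. Note that the theorem imposes it as a hypothesis; it does not follow from the estimates of the previous sections unless the variation coefficients there are bounded.
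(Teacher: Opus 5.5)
Your proposal is correct and follows the paper's proof. The dissipation identity together with the lower bound on the risk makes the squared gradient norm integrable, and integrating the coercivity estimate then gives \(\mathscr G\in\mathrm L^1(0,\infty)\); the conclusion follows because a nonnegative, uniformly continuous, integrable function tends to zero. The only difference is that you prove this last Barbalat-type fact explicitly by the disjoint-intervals contradiction argument, whereas the paper simply invokes it.
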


\begin{proof}
The dissipation identity and the lower bound for the risk give
\[
\int_0^\infty\|\nabla\mathcal R(W(t))\|_F^2\,\d t<\infty.
\]
Integrating \eqref{eq:alignment-coercivity} yields
\(\mathscr G\in\mathrm L^1(0,\infty)\). A nonnegative, uniformly continuous,
integrable function on \([0,\infty)\) converges to zero, which proves the
claim.
\end{proof}

\begin{remark}\label{rem:alignment-coercivity-not-automatic}
For smooth activations, uniform bounds on
\(\|D_\ell(\cdot,t)\|_{\mathrm L^2(\mu;F)}\) and on the variation
coefficient in \eqref{eq:alignment-variation-coefficient} imply uniform
continuity of the localized gate energy; uniform cutoff gaps and the bounds
entering \eqref{eq:smooth-projector-contribution} provide one way to control
the projector part of that coefficient. For ReLU
networks, the corresponding uniform continuity may instead be verified using
the switching and projector moduli in
\eqref{eq:relu-localized-energy-variation}, while the sensitivity energy can
be treated directly through the deterministic operator \(Q_\ell(t)\). The
substantive assumption in
Theorem~\ref{thm:dissipation-driven-alignment} is therefore the coercive
estimate~\eqref{eq:alignment-coercivity}. In the ReLU case, boundedness of the
gate operators implies that
\(\mathcal G_{D,\ell}^{(k,\beta)}\) is uniformly bounded, but it does not imply
this estimate. Indeed, the loss gradient contains the gates only through
residual-weighted and backward-transported expressions and may be small
because of interpolation, cancellation, weak backward sensitivity, or small
activations, while the gate-covariance incompatibility remains nonzero. In
particular, a stationary point of the risk may satisfy
\[
\nabla\mathcal R(W)=0
\qquad\text{and}\qquad
\mathcal G_{D,\ell}^{(k,\beta)}>0.
\]
Thus \eqref{eq:alignment-coercivity} is a genuine geometric hypothesis
asserting that persistent mixing across the selected covariance blocks must
remain visible through loss dissipation, up to an integrable transient. It is
not a general consequence of gradient flow or of bounded gate operators and
may fail even when the risk converges. Accordingly,
Theorem~\ref{thm:dissipation-driven-alignment} provides a conditional
dissipation-driven mechanism rather than a universal alignment theorem.
\end{remark}

\subsection{From localized to global alignment}

The passage from all unbuffered localized energies to the global commutator
requires only bounded covariance scale; the converse additionally requires
spectral separation.

\begin{proposition}\label{prop:localized-to-global-comparison}
At a fixed time, let \(A_\ell(x)\) be symmetric and square-integrable,
possibly independent of \(x\), and set
\begin{align*}
\mathcal E_{A,\ell}^{(k)}
&:=
\int\|\Pi_\ell^{(k)}A_\ell
\Pi_\ell^{(k),\perp}\|_F^2\,\d\mu,\qquad \mathcal C_{A,\ell}
:=
\int\|[A_\ell,P_\ell]\|_F^2\,\d\mu,
\end{align*}
where the integrals are omitted when \(A_\ell\) is deterministic. If
\(n=m_{\ell+1}\) and
\(\underline\gamma_\ell:=\min_{1\leq k<n}\gamma_\ell^{(k)}>0\), then
\begin{equation}\label{eq:localized-global-two-sided}
2\underline\gamma_\ell^2
\sum_{k=1}^{n-1}\mathcal E_{A,\ell}^{(k)}
\leq
\mathcal C_{A,\ell}
\leq
2\|P_\ell\|_{\mathrm{op}}^2
\sum_{k=1}^{n-1}\mathcal E_{A,\ell}^{(k)}.
\end{equation}
\end{proposition}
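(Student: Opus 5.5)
The plan is to reduce both inequalities in \eqref{eq:localized-global-two-sided} to a comparison of scalar weights on the off-diagonal entries of \(A_\ell\) in the eigenbasis of \(P_\ell\). Fix \(x\) and set \(a_{ij}(x):=u_{\ell,i}^\top A_\ell(x)u_{\ell,j}\). The hypothesis \(\underline\gamma_\ell>0\) makes every eigenvalue simple, so every \(\Pi_\ell^{(k)}\) is intrinsic. By Proposition~\ref{prop:common-spectral-localization}, equation \eqref{eq:common-commutator-expansion}, we have
\[
\|[A_\ell,P_\ell]\|_F^2=2\sum_{i<j}(\lambda_{\ell,i}-\lambda_{\ell,j})^2|a_{ij}|^2 .
\]
By the block identity stated in the introduction (the case \(\beta=0\)), we have \(\|\Pi_\ell^{(k)}A_\ell\Pi_\ell^{(k),\perp}\|_F^2=\sum_{i\leq k<j}|a_{ij}|^2\).

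The key combinatorial step is to sum the localized energies over \(k\). A pair \(i<j\) is counted in the \(k\)-th energy exactly when \(i\leq k<j\), that is, for \(j-i\) values of \(k\). Hence, pointwise in \(x\),
\[
\sum_{k=1}^{n-1}\|\Pi_\ell^{(k)}A_\ell\Pi_\ell^{(k),\perp}\|_F^2=\sum_{i<j}(j-i)|a_{ij}|^2 .
\]
It therefore suffices to prove, for each \(i<j\),
\[
(j-i)\,\underline\gamma_\ell^2\leq(\lambda_{\ell,i}-\lambda_{\ell,j})^2\leq(j-i)\,\|P_\ell\|_{\mathrm{op}}^2 .
\]
Multiplying by \(2|a_{ij}|^2\), summing over \(i<j\), and integrating in \(\mu\) then gives both sides of \eqref{eq:localized-global-two-sided}. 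If \(A_\ell\) is deterministic, the integration step is simply omitted.

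For the lower weight bound, telescope the eigenvalue difference across consecutive gaps:
\[
\lambda_{\ell,i}-\lambda_{\ell,j}=\sum_{k=i}^{j-1}\gamma_\ell^{(k)}\geq(j-i)\,\underline\gamma_\ell .
\]
Squaring gives \((\lambda_{\ell,i}-\lambda_{\ell,j})^2\geq(j-i)^2\underline\gamma_\ell^2\geq(j-i)\underline\gamma_\ell^2\), since \(j-i\geq1\).

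For the upper weight bound, use positive semidefiniteness of \(P_\ell\). All eigenvalues lie in \([0,\lambda_{\ell,1}]\), so \(|\lambda_{\ell,i}-\lambda_{\ell,j}|\leq\lambda_{\ell,1}=\|P_\ell\|_{\mathrm{op}}\). Together with \(1\leq j-i\), this gives \((\lambda_{\ell,i}-\lambda_{\ell,j})^2\leq(j-i)\|P_\ell\|_{\mathrm{op}}^2\).

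No step is a serious obstacle. The one point needing care is the multiplicity count \(j-i\) in the summed localized energies. Because of that count, the lower bound uses only \(\underline\gamma_\ell^2\), even though the telescoping gives the stronger factor \((j-i)^2\), and the upper bound loses the factor \(j-i\).
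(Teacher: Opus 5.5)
Your proposal is correct and follows the paper's proof essentially verbatim. Both expand in the eigenbasis of \(P_\ell\) and note that summing the localized energies weights \(|a_{ij}|^2\) by the multiplicity \(j-i\). Both then compare with the commutator weights \((\lambda_{\ell,i}-\lambda_{\ell,j})^2\), using \(\lambda_{\ell,i}-\lambda_{\ell,j}\geq(j-i)\underline\gamma_\ell\) with \((j-i)^2\geq j-i\) for the lower bound and \(|\lambda_{\ell,i}-\lambda_{\ell,j}|\leq\|P_\ell\|_{\mathrm{op}}\) for the upper bound.
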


\begin{proof}
In the eigenbasis of \(P_\ell\), writing
\(a_{ij}=u_{\ell,i}^\top A_\ell u_{\ell,j}\), one has
\begin{align*}
\sum_{k=1}^{n-1}\mathcal E_{A,\ell}^{(k)}
&=
\int\sum_{1\leq i<j\leq n}(j-i)|a_{ij}|^2\,\d\mu,
&
\mathcal C_{A,\ell}
&=
2\int\sum_{1\leq i<j\leq n}
(\lambda_{\ell,i}-\lambda_{\ell,j})^2|a_{ij}|^2\,\d\mu.
\end{align*}
Now use
\(|\lambda_{\ell,i}-\lambda_{\ell,j}|\leq\|P_\ell\|_{\mathrm{op}}\) and
\(\lambda_{\ell,i}-\lambda_{\ell,j}
\geq(j-i)\underline\gamma_\ell\), together with
\((j-i)^2\geq j-i\).
\end{proof}

For \(A_\ell=D_\ell\), the global quantity is \(\mathcal G_\ell\); for
\(A_\ell=Q_\ell\), it is \(\mathcal G_{Q,\ell}\). Hence decay of every
unbuffered localized energy implies global decay if
\(\|P_\ell(t)\|_{\mathrm{op}}\) remains bounded. Conversely, global decay
implies decay at every uniformly separated interface by
\eqref{eq:localized-gate-gap-bound} or
\eqref{eq:localized-sensitivity-gap-bound}. Without the scale and gap
hypotheses, the corresponding implications need not hold.

\subsection{Propagation through the internal hierarchy}

Gate alignment removes only the gate source in the internal recursion.
Transported incompatibility, adjacent-layer imbalance, and sensitivity
fluctuations remain and must be controlled separately.

\begin{proposition}\label{prop:propagation-of-internal-alignment}
Suppose that \(\sigma\) is ReLU, that
\[
\sup_{t\geq0}\|W_j(t)\|_{\mathrm{op}}<\infty,
\qquad j=0,\ldots,L-1,
\]
and that, for \(\ell=0,\ldots,L-2\),
\begin{align}
\sup_{t\geq0}\|\mathcal S_{\ell+1}(t)\|_F
&<\infty,\qquad\sup_{t\geq0}\mathbb E_\mu
\|S_{\ell+1}(t)\|_{\mathrm{op}}^2
<\infty.
\label{eq:propagation-sensitivity-bounds}
\end{align}
Assume further that, for the same indices,
\begin{equation}\label{eq:propagation-source-decay}
\|\Delta_\ell(t)\|_{\mathrm{op}}\longrightarrow0,
\qquad
\mathcal V_{\ell+1}(t)\longrightarrow0,
\qquad
\mathcal G_\ell(t)\longrightarrow0.
\end{equation}
Then
\begin{equation}\label{eq:internal-alignment-conclusion}
\mathcal G_{Q,\ell}(t)
=\|C_\ell^{\mathrm{int}}(t)\|_F^2
\longrightarrow0
\end{equation}
for every hidden layer \(\ell\).
\end{proposition}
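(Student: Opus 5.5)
The plan is a backward induction on the layer index, starting from the terminal identity \(C_{L-1}^{\mathrm{int}}=0\) of Theorem~\ref{thm:internal-commutator-hierarchy} and using the ReLU recursive estimate of Corollary~\ref{cor:ReLU-commutator-estimate} at each step. The induction hypothesis at layer \(\ell+1\) is that \(\|C_{\ell+1}^{\mathrm{int}}(t)\|_F\to0\). This holds trivially for \(\ell+1=L-1\), since that commutator vanishes identically. The base case \(\ell=L-2\) can alternatively be read off directly from \eqref{eq:penultimate-ReLU-estimate}.

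For the inductive step, I will bound each of the four terms on the right-hand side of \eqref{ReLUCommutatorEstimate} by a product of a uniformly bounded factor and a factor tending to zero.

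\begin{itemize}
\item \textbf{Transport term.} The term \(4\|W_{\ell+1}\|_{\mathrm{op}}^4\|C_{\ell+1}^{\mathrm{int}}\|_F^2\) tends to zero by the uniform weight bound and the induction hypothesis.
\item \textbf{Imbalance term.} The term \(16\|\Delta_\ell\|_{\mathrm{op}}^2\|\mathcal S_{\ell+1}\|_F^2\) tends to zero by \(\|\Delta_\ell\|_{\mathrm{op}}\to0\) and the first bound in \eqref{eq:propagation-sensitivity-bounds}.
\item \textbf{Fluctuation term.} The term \(16\|W_\ell\|_{\mathrm{op}}^4\mathbb E_\mu\|R_{\ell+1}\|_F^2=16\|W_\ell\|_{\mathrm{op}}^4\mathcal V_{\ell+1}\) tends to zero by the weight bound and \(\mathcal V_{\ell+1}\to0\), using the definition \eqref{eq:fluctuation-variance}.
\item \textbf{Gate term.} The term \(16\mathcal G_\ell\,\mathbb E_\mu\|S_{\ell+1}\|_{\mathrm{op}}^2\) tends to zero by \(\mathcal G_\ell\to0\) and the second bound in \eqref{eq:propagation-sensitivity-bounds}.
\end{itemize}

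Summing these gives \(\|C_\ell^{\mathrm{int}}(t)\|_F^2\to0\). This closes the induction down to \(\ell=0\), which is \eqref{eq:internal-alignment-conclusion}.

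The only point needing care is that the hypotheses are imposed for every index \(\ell=0,\dots,L-2\). This matters because the transport term at layer \(\ell\) requires the conclusion already established at layer \(\ell+1\), so the order of the induction (from the output toward the input) is essential.

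I expect no real obstacle here. The substantive work was already done in Corollary~\ref{cor:ReLU-commutator-estimate}, where the ReLU bound \(\|D_\ell\|_{\mathrm{op}}\le1\) removed the gate-dependent constants \(d_\ell,v_\ell,s_\ell\). The remaining step is to check that every coefficient appearing there is covered by an assumed supremum bound.

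One small check is that \(\|S_{\ell+1}\|_{\mathrm{op}}\) at \(\ell+1=L-1\) equals \(\|W_{L-1}\|_{\mathrm{op}}^2\), so the assumption is consistent there. Also, \(\mathcal V_{L-1}=0\) automatically.
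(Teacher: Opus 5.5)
Your proposal is correct and matches the paper's proof: backward induction from \(C_{L-1}^{\mathrm{int}}=0\), applying the ReLU estimate \eqref{ReLUCommutatorEstimate} at each layer and noting that each of its four terms is a uniformly bounded factor times a factor tending to zero. Your extra remarks, namely the base case via \eqref{eq:penultimate-ReLU-estimate} and the check \(\|S_{L-1}\|_{\mathrm{op}}=\|W_{L-1}\|_{\mathrm{op}}^2\), are correct but not needed.
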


\begin{proof}
At the terminal internal level,
\(C_{L-1}^{\mathrm{int}}=[I_{m_L},P_{L-1}]=0\). If
\(\mathcal G_{Q,\ell+1}(t)\to0\), the ReLU estimate
\eqref{ReLUCommutatorEstimate} gives
\begin{align*}
\mathcal G_{Q,\ell}
&\leq
4\|W_{\ell+1}\|_{\mathrm{op}}^4\mathcal G_{Q,\ell+1}
+16\|\Delta_\ell\|_{\mathrm{op}}^2
\|\mathcal S_{\ell+1}\|_F^2
+16\|W_\ell\|_{\mathrm{op}}^4\mathcal V_{\ell+1}
+
16\mathcal G_\ell
\mathbb E_\mu\|S_{\ell+1}\|_{\mathrm{op}}^2.
\end{align*}
Every term on the right tends to zero, and backward induction proves the
claim.
\end{proof}

The assumptions in \eqref{eq:propagation-source-decay} are independent
requirements, not consequences of gate alignment. In particular,
\(\mathcal V_{L-1}=0\), but fluctuation decay at earlier layers is a genuine
additional hypothesis; likewise, unregularized gradient flow may only make
\(\Delta_\ell(t)\) converge to a nonzero limit, as shown by
Corollary~\ref{cor:ImbalanceLongTime}. If a buffered gap remains uniformly
positive, \eqref{eq:localized-sensitivity-gap-bound} further gives
\begin{equation}\label{eq:propagated-buffered-sensitivity}
\mathcal G_{Q,\ell}^{(k,\beta)}(t)
\leq
\frac{\mathcal G_{Q,\ell}(t)}
{2(\gamma_\ell^{(k,\beta)}(t))^2}
\longrightarrow0.
\end{equation}

\subsection{Consequences for feature geometry}

\begin{corollary}\label{cor:feature-side-alignment}
If \(\mathcal G_{Q,\ell}(t)\to0\) and
\(\sup_{t\geq0}\|W_\ell(t)\|_{\mathrm{op}}<\infty\), then
\[
\|[\mathcal S_\ell(t),\mathrm{NFM}_\ell(t)]\|_F
\longrightarrow0.
\]
If, in addition, \(\mathcal S_\ell(t)\to\mathcal S_\ell^\infty\) and
\(\mathrm{NFM}_\ell(t)\to\mathrm{NFM}_\ell^\infty\), then
\([\mathcal S_\ell^\infty,\mathrm{NFM}_\ell^\infty]=0\).
\end{corollary}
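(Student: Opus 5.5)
The plan is to obtain both assertions from the congruence identity of Lemma~\ref{lem:commutator-transfer}, followed by a continuity argument. Since $\mathcal G_{Q,\ell}(t)=\|C_\ell^{\mathrm{int}}(t)\|_F^2$ by \eqref{eq:global-sensitivity-energy}, the upper bound \eqref{eq:comm-upper} gives
\[
\|[\mathcal S_\ell(t),\mathrm{NFM}_\ell(t)]\|_F
\leq
\|W_\ell(t)\|_{\mathrm{op}}^2\,\mathcal G_{Q,\ell}(t)^{1/2}
\leq
M^2\,\mathcal G_{Q,\ell}(t)^{1/2},
\qquad
M:=\sup_{s\geq0}\|W_\ell(s)\|_{\mathrm{op}}<\infty .
\]
The right-hand side tends to zero, which proves the first claim. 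No gap, full-rank, or spectral hypothesis is needed here, because only the upper direction of the congruence is used. The lower bound \eqref{eq:comm-two-sided} is irrelevant for this direction.

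For the second claim, the map $(A,B)\mapsto[A,B]=AB-BA$ is continuous in Frobenius norm, since it is bilinear on a finite-dimensional space. If $\mathcal S_\ell(t)\to\mathcal S_\ell^\infty$ and $\mathrm{NFM}_\ell(t)\to\mathrm{NFM}_\ell^\infty$, then $[\mathcal S_\ell(t),\mathrm{NFM}_\ell(t)]\to[\mathcal S_\ell^\infty,\mathrm{NFM}_\ell^\infty]$. The first part shows that the same family tends to $0$, so uniqueness of limits gives $[\mathcal S_\ell^\infty,\mathrm{NFM}_\ell^\infty]=0$. Quantitatively, the triangle inequality gives
\[
\|[\mathcal S_\ell^\infty,\mathrm{NFM}_\ell^\infty]\|_F
\leq
\|[\mathcal S_\ell(t),\mathrm{NFM}_\ell(t)]\|_F
+2\|\mathcal S_\ell(t)-\mathcal S_\ell^\infty\|_F\,\|\mathrm{NFM}_\ell(t)\|_{\mathrm{op}}
+2\|\mathcal S_\ell^\infty\|_{\mathrm{op}}\,\|\mathrm{NFM}_\ell(t)-\mathrm{NFM}_\ell^\infty\|_F .
\]
Here $\|\mathrm{NFM}_\ell(t)\|_{\mathrm{op}}\leq M^2$, and every term on the right vanishes as $t\to\infty$.

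There is no serious obstacle. The only point that needs care is the first step, where the uniform bound on $\|W_\ell\|_{\mathrm{op}}$ is essential. Without it, the congruence could amplify a decaying internal commutator, so this is the hypothesis the argument uses. As remarked after Lemma~\ref{lem:commutator-transfer}, the converse implication would require full row rank and a lower bound on $\sigma_{\min}(W_\ell)$, and it is not claimed.
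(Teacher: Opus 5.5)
Your proposal is correct and matches the paper's proof: the upper bound \eqref{eq:comm-upper} from Lemma~\ref{lem:commutator-transfer}, together with $\sup_{t\geq0}\|W_\ell(t)\|_{\mathrm{op}}<\infty$, gives the first claim, and continuity of the commutator gives the second. Your triangle-inequality estimate is a valid quantitative version of the continuity step that the paper states in a single line.
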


\begin{proof}
By Lemma~\ref{lem:commutator-transfer},
\[
\|[\mathcal S_\ell,\mathrm{NFM}_\ell]\|_F
\leq
\|W_\ell\|_{\mathrm{op}}^2
\sqrt{\mathcal G_{Q,\ell}}.
\]
The first conclusion follows, and the second is immediate from continuity of
the commutator.
\end{proof}

The converse to the conclusion of Corollary \ref{cor:feature-side-alignment} need not hold: the feature-side commutator may be small because
internal directions are suppressed by small or clustered singular values or
lost through \(\ker W_\ell^\top\). Thus the section yields only conditional
propagation. Decay of all unbuffered localized gate energies gives global gate
alignment under a covariance-scale bound; global gate alignment gives
internal alignment only when imbalance and fluctuations also vanish; and
internal alignment gives feature-side alignment under bounded transport. None
of these hypotheses is asserted to hold universally along training
trajectories.

\section{Conditional coercivity and Lyapunov principles}
\label{sec:lyapunov-alignment}

The results in this section characterize sufficient mechanisms for decay; they
do not establish that these mechanisms arise generically from neural-network
gradient flow. The risk-dissipation identity governs only the scalar risk.
Even if one additionally assumes a Polyak--Lojasiewicz (PL)
inequality,\footnote{In the notation introduced below, the PL inequality is
the condition
\[
\mathscr D(t)\geq2\mu\mathscr E(t),
\qquad \mu>0,
\]
where \(\mathscr E\) is the excess risk and \(\mathscr D\) is the squared
gradient norm. Combined with the risk-dissipation identity
\(\dot{\mathscr E}=-\mathscr D\), it yields
\(\mathscr E(t)\leq e^{-2\mu t}\mathscr E(0)\). The PL condition does not
require convexity, but it controls the objective value rather than the
geometric alignment defects.}
the resulting exponential decay concerns the excess risk, not the localized
defects. We therefore consider two additional coercive relations. The first is
static: a geometric error bound
controls the selected defects by the excess risk and hence, through the PL
inequality, by the risk dissipation. The second is dynamic: the sensitivity
evolution itself damps selected cross blocks relative to the covariance
splitting. Both mechanisms are conditional. Their purpose is to identify
alternative geometric coercivity conditions under which selective alignment
can be deduced from optimization.

The static principle applies equally to the localized gate and sensitivity
defects,
\[
\mathcal G_{D,\ell}^{(k,\beta)}
\quad\text{and}\quad
\mathcal G_{Q,\ell}^{(k,\beta)}.
\]
The dynamic principle is formulated for the sensitivity energies. This
distinction is dynamical rather than geometric: on intervals of smooth
evolution, \(Q_\ell(t)\) admits an ordinary evolution equation, whereas the
ReLU gate \(D_\ell(x,t)\) changes through sample-dependent switching events.
Switching estimates may control variation of the gate energies, but do not by
themselves provide a sign-definite transverse term.

Let \(\mathcal R\) denote the population or empirical risk and set
\begin{equation}\label{eq:lyap-excess-dissipation}
\mathscr E(t)
:=
\mathcal R(W(t))-\mathcal R_*,
\qquad
\mathcal R_*:=\inf_W\mathcal R(W),
\qquad
\mathscr D(t)
:=
\sum_{j=0}^{L-1}
\|\nabla_{W_j}\mathcal R(W(t))\|_F^2.
\end{equation}
Along gradient flow,
\begin{equation}\label{eq:lyap-risk-dissipation}
\dot W_j=-\nabla_{W_j}\mathcal R(W),
\qquad
\dot{\mathscr E}(t)=-\mathscr D(t).
\end{equation}
Fix a finite set
\[
\mathcal I
\subset
\left\{
(\ell,k,\beta):
0\leq\ell\leq L-2,
\quad
1\leq k\leq k+\beta<m_{\ell+1},
\quad
\beta\in\mathbb N_0
\right\}.
\]
For \(i=(\ell,k,\beta)\in\mathcal I\), write
\begin{align}
\Pi_i(t)
&:=
\Pi_\ell^{(k)}(t),\quad \widehat\Pi_i(t)
:=
\Pi_\ell^{(k+\beta)}(t),\quad
\widehat\Pi_i^\perp(t)
:=
I-\widehat\Pi_i(t),
\label{eq:lyap-buffered-projectors}\\
B_i(t)
&:=
\Pi_i(t)Q_\ell(t)\widehat\Pi_i^\perp(t),
\label{eq:lyap-buffered-block}\\
q_i(t)
&:=
\|B_i(t)\|_F^2
=
\mathcal G_{Q,\ell}^{(k,\beta)}(t).
\label{eq:lyap-buffered-defect}
\end{align}
Here \(Q_\ell=\mathbb E_\mu[\mathsf Q_\ell]\) is the deterministic averaged
sensitivity operator. The quantity \(q_i\) measures direct transfer from the
leading \(k\) covariance directions to the complement of the leading
\(k+\beta\) directions, leaving the buffered modes unpenalized.

\begin{remark}
Since the projectors in \eqref{eq:lyap-buffered-projectors} are spectral
projectors of \(P_\ell\),
\[
[B_i,P_\ell]
=
\Pi_i[Q_\ell,P_\ell]\widehat\Pi_i^\perp.
\]
Thus the localized commutator energy
\[
\widehat q_i
:=
\left\|
\Pi_i[Q_\ell,P_\ell]\widehat\Pi_i^\perp
\right\|_F^2
\]
satisfies
\[
\gamma_i^2q_i
\leq
\widehat q_i
\leq
\Gamma_i^2q_i,
\qquad
\gamma_i
:=
\gamma_\ell^{(k,\beta)},
\qquad
\Gamma_i
:=
\lambda_{\ell,1}-\lambda_{\ell,m_{\ell+1}},
\]
where \(\gamma_i\) is the buffered spectral gap. Under a uniform positive
lower bound on \(\gamma_i\) and a uniform upper bound on \(\Gamma_i\), decay
of \(q_i\) is equivalent to decay of \(\widehat q_i\). We focus on \(q_i\)
because it measures transverse
mixing independently of spectral scale; differentiating \(\widehat q_i\)
would also introduce \([B_i,\dot P_\ell]\).
\end{remark}

\subsection{Variation and risk-based transfer}

We first specialize Proposition~\ref{prop:localized-energy-evolution} to the
selected sensitivity blocks.

\begin{proposition}\label{prop:lyap-modewise-production}
Fix \(i=(\ell,k,\beta)\in\mathcal I\), and suppose that \(Q_\ell\),
\(\Pi_i\), and \(\widehat\Pi_i\) are absolutely continuous on the interval
under consideration. Assume that
\begin{align}
\|Q_\ell(t)\|_F
&\leq M_i^Q,
\label{eq:lyap-Q-bound}\\
\|\dot\Pi_i(t)\|_{\mathrm{op}}
&\leq H_i^-\sqrt{\mathscr D(t)},\quad\|\dot{\widehat\Pi}_i(t)\|_{\mathrm{op}}
\leq H_i^+\sqrt{\mathscr D(t)},
\label{eq:lyap-projector-production}\\
\|\Pi_i(t)\dot Q_\ell(t)\widehat\Pi_i^\perp(t)\|_F
&\leq L_i^Q\sqrt{\mathscr D(t)}.
\label{eq:lyap-Q-production}
\end{align}
Then, almost everywhere,
\begin{equation}\label{eq:lyap-modewise-production}
|\dot q_i(t)|
\leq
c_i\sqrt{q_i(t)}\sqrt{\mathscr D(t)},
\qquad
c_i
:=
2\bigl(L_i^Q+M_i^Q(H_i^-+H_i^+)\bigr).
\end{equation}
\end{proposition}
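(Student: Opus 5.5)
The plan is to differentiate $q_i=\|B_i\|_F^2$ directly, as in the proof of Proposition~\ref{prop:localized-energy-evolution}. That result is specialized to the deterministic operator $A_\ell=Q_\ell$, so the integral over $\mu$ is dropped. Since $Q_\ell$, $\Pi_i$ and $\widehat\Pi_i$ are absolutely continuous, $B_i$ is absolutely continuous as a product of such matrix functions on the interval (they are locally bounded). The product rule then holds almost everywhere:
\[
\dot B_i
=
\dot\Pi_iQ_\ell\widehat\Pi_i^\perp
+\Pi_i\dot Q_\ell\widehat\Pi_i^\perp
-\Pi_iQ_\ell\dot{\widehat\Pi}_i,
\]
where we used $\frac{\d}{\d t}\widehat\Pi_i^\perp=-\dot{\widehat\Pi}_i$. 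Consequently $q_i$ is absolutely continuous and $\dot q_i=2\langle B_i,\dot B_i\rangle_F$ almost everywhere.

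Next I apply Cauchy--Schwarz in Frobenius space to get $|\dot q_i|\leq 2\sqrt{q_i}\,\|\dot B_i\|_F$, and bound the three terms of $\dot B_i$ separately. The middle term is at most $L_i^Q\sqrt{\mathscr D}$ directly by \eqref{eq:lyap-Q-production}; this is why that hypothesis is stated for the compressed derivative rather than for $\dot Q_\ell$ itself. For the outer terms I use $\|ABC\|_F\leq\|A\|_{\mathrm{op}}\|B\|_F\|C\|_{\mathrm{op}}$ together with $\|\Pi_i\|_{\mathrm{op}},\|\widehat\Pi_i^\perp\|_{\mathrm{op}}\leq1$, since these are orthogonal projectors. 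This gives
\[
\|\dot\Pi_iQ_\ell\widehat\Pi_i^\perp\|_F\leq H_i^-\sqrt{\mathscr D}\,M_i^Q,
\qquad
\|\Pi_iQ_\ell\dot{\widehat\Pi}_i\|_F\leq M_i^QH_i^+\sqrt{\mathscr D},
\]
by \eqref{eq:lyap-Q-bound}--\eqref{eq:lyap-projector-production}. Summing the three bounds yields $\|\dot B_i\|_F\leq(L_i^Q+M_i^Q(H_i^-+H_i^+))\sqrt{\mathscr D}$, which gives \eqref{eq:lyap-modewise-production} with the stated $c_i$.

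There is no substantive obstacle. The only point needing care is the almost-everywhere justification: the product rule and the chain rule for $\|\cdot\|_F^2$ must hold for absolutely continuous matrix curves. Both follow entrywise, since products of absolutely continuous functions on compact intervals are absolutely continuous. The hypotheses are then only used at points where all derivatives exist and the bounds hold, which is a full-measure set.
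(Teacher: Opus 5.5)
Your proposal is correct and follows the paper's proof. Like the paper, you differentiate $B_i$ by the product rule, bound the three terms with the hypotheses and $\|\Pi_i\|_{\mathrm{op}},\|\widehat\Pi_i^\perp\|_{\mathrm{op}}\leq1$, and conclude from $\dot q_i=2\langle B_i,\dot B_i\rangle_F$ via Cauchy--Schwarz. Your almost-everywhere justification for absolutely continuous matrix curves makes explicit what the paper leaves implicit.
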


\begin{proof}
Differentiation gives
\[
\dot B_i
=
\dot\Pi_iQ_\ell\widehat\Pi_i^\perp
+
\Pi_i\dot Q_\ell\widehat\Pi_i^\perp
-
\Pi_iQ_\ell\dot{\widehat\Pi}_i.
\]
The assumptions yield
\[
\|\dot B_i\|_F
\leq
\bigl(L_i^Q+M_i^Q(H_i^-+H_i^+)\bigr)
\sqrt{\mathscr D}.
\]
Since \(\dot q_i=2\langle B_i,\dot B_i\rangle_F\), the result follows.
\end{proof}

\begin{remark} Along the gradient flow \eqref{eq:lyap-risk-dissipation}, if
\(\|W_\ell\|_{\mathrm{op}}\leq M_\ell\) and the two cutoff gaps are
bounded below by \(\underline\gamma_i^->0\) and
\(\underline\gamma_i^+>0\), Proposition~\ref{prop:projector-velocity}
permits the choices
\begin{equation}\label{eq:lyap-explicit-projector-constants}
H_i^-
=
\frac{2CM_\ell}{\underline\gamma_i^-},
\qquad
H_i^+
=
\frac{2CM_\ell}{\underline\gamma_i^+},
\end{equation}
where \(C\) is the constant in
\eqref{eq:projector-velocity-gradient}; here
\(\|G_\ell\|_{\mathrm{op}}\leq\sqrt{\mathscr D}\). Thus the production estimate
deteriorates as either boundary gap closes. Its remaining substantive input is
the transverse derivative bound \eqref{eq:lyap-Q-production}. For ReLU, the
argument applies on intervals of absolute continuity; a global statement
requires a bounded-variation or switching remainder.
\end{remark}

The simplest risk-transfer principle through a geometric error bound is immediate.

\begin{theorem}
\label{thm:production-controlled-lyapunov}
Assume that, along the gradient-flow trajectory,
\begin{equation}\label{eq:lyap-PL}
\mathscr D(t)
\geq
2\mu\mathscr E(t),
\qquad
\mu>0,
\end{equation}
and, for every \(i\in\mathcal I\),
\begin{equation}\label{eq:lyap-geometric-error-bound}
q_i(t)
\leq
K_i\mathscr E(t),
\qquad
K_i>0.
\end{equation}
Then
\begin{equation}\label{eq:lyap-risk-transfer-decay}
\mathscr E(t)
\leq
e^{-2\mu t}\mathscr E(0),
\qquad
q_i(t)
\leq
K_i e^{-2\mu t}\mathscr E(0),
\quad
i\in\mathcal I.
\end{equation}
The same conclusion holds with \(q_i\) replaced by a localized gate defect
whenever the corresponding error bound is available.
\end{theorem}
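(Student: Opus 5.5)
The argument is a Grönwall estimate for the excess risk, followed by transfer through the error bound. By \eqref{eq:lyap-risk-dissipation}, the excess risk \(\mathscr E(t)=\mathcal R(W(t))-\mathcal R_*\) is locally absolutely continuous along the trajectory and satisfies \(\dot{\mathscr E}=-\mathscr D\) almost everywhere. This is the chain-rule identity \eqref{Pre26} under the standing trajectory convention. Combining it with the PL inequality \eqref{eq:lyap-PL} gives the differential inequality
\[
\dot{\mathscr E}(t)\leq-2\mu\,\mathscr E(t)
\qquad\text{for a.e. } t\geq0.
\]

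To integrate this, consider \(\phi(t):=e^{2\mu t}\mathscr E(t)\). It is a product of an absolutely continuous function and a smooth function, hence locally absolutely continuous. Almost everywhere,
\[
\dot\phi=e^{2\mu t}\bigl(\dot{\mathscr E}+2\mu\mathscr E\bigr)\leq0.
\]
Since \(\phi\) is absolutely continuous, \(\phi(t)-\phi(0)=\int_0^t\dot\phi\,\d s\leq0\). This yields \(\mathscr E(t)\leq e^{-2\mu t}\mathscr E(0)\). Note that \(\mathscr E\geq0\) holds automatically by the definition of \(\mathcal R_*\) as an infimum. This nonnegativity is what makes the bounds meaningful, although it is not needed in the Grönwall step itself.

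The second estimate in \eqref{eq:lyap-risk-transfer-decay} then follows pointwise. For each \(i\in\mathcal I\), insert the risk bound into the geometric error bound \eqref{eq:lyap-geometric-error-bound} to get \(q_i(t)\leq K_i\mathscr E(t)\leq K_ie^{-2\mu t}\mathscr E(0)\). No differentiability of \(q_i\) is needed, so Proposition~\ref{prop:lyap-modewise-production} is not invoked. The gate version is identical: replace \(q_i\) by \(\mathcal G_{D,\ell}^{(k,\beta)}\) in the assumed error bound. The ReLU switching discontinuities are then irrelevant, because only the scalar \(\mathscr E\) is differentiated.

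The only step requiring care is justifying that \(\mathscr E\) is absolutely continuous, so that the a.e. inequality can be integrated. I would take this directly from the standing trajectory convention and \eqref{Pre26}, which presuppose that \(t\mapsto\mathcal R(W(t))\) is locally absolutely continuous. For ReLU this is exactly the selection convention adopted in Section~\ref{sec:Pre}. Everything else is routine.
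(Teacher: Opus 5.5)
Your proof is correct and follows the paper's argument. The dissipation identity and the PL inequality give $\dot{\mathscr E}\leq-2\mu\mathscr E$, Gronwall (which you carry out explicitly with the integrating factor $e^{2\mu t}\mathscr E$) gives the risk decay, and the geometric error bound transfers that decay to each $q_i$, or to a gate defect when the corresponding error bound is available. Your check that $t\mapsto\mathscr E(t)$ is absolutely continuous, so the almost-everywhere inequality can be integrated, makes explicit what the paper leaves to its standing trajectory convention.
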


\begin{proof}
Equations \eqref{eq:lyap-risk-dissipation} and \eqref{eq:lyap-PL} give
\(\dot{\mathscr E}\leq-2\mu\mathscr E\). The conclusion follows from
Gronwall's inequality and \eqref{eq:lyap-geometric-error-bound}.
\end{proof}

The alignment conclusion comes from the error bound, not from the production
estimate. The latter becomes useful when geometric variation and remainders
must be absorbed into a single functional. Indeed, suppose in addition that
the absolutely continuous defects satisfy, almost everywhere,
\begin{equation}\label{eq:lyap-production-with-remainder}
\dot q_i
\leq
c_i\sqrt{q_i}\sqrt{\mathscr D}+r_i,
\qquad
r_i\geq0,
\qquad
r_i\in\mathrm L^1_{\mathrm{loc}},
\end{equation}
and choose \(A>0\), \(\omega_i>0\), and \(\rho\in(0,1)\) so that
\begin{equation}\label{eq:lyap-risk-absorption-condition}
\sum_{i\in\mathcal I}
\omega_i c_i\sqrt{\frac{K_i}{2\mu}}
\leq
A\rho.
\end{equation}
Then, for
\[
\mathscr V_{\mathrm{risk}}
:=
A\mathscr E+
\sum_{i\in\mathcal I}\omega_iq_i,
\]
the PL inequality and \eqref{eq:lyap-geometric-error-bound} give
\begin{equation}\label{eq:lyap-strict-dissipation}
\dot{\mathscr V}_{\mathrm{risk}}
\leq
-A(1-\rho)\mathscr D
+
\sum_{i\in\mathcal I}\omega_ir_i.
\end{equation}
This estimate organizes production and remainders but does not strengthen the
decay already contained in \eqref{eq:lyap-risk-transfer-decay}.

The geometric error bound is restrictive. Let
\[
\mathcal M
:=
\{W:\mathcal R(W)=\mathcal R_*\}
\]
be a local minimizing set. Suppose that, in a tubular neighborhood \(U\) of a
regular component of \(\mathcal M\),
\begin{align}
\mathcal R(W)-\mathcal R_*
&\geq
c\,\operatorname{dist}(W,\mathcal M)^2,
\label{eq:lyap-quadratic-growth}\\
B_i(\overline W)
&=0
\quad
\text{for every }\overline W\in\mathcal M\cap U,
\label{eq:lyap-aligned-minimizers}\\
\|B_i(W)-B_i(\overline W)\|_F
&\leq
L_i\|W-\overline W\|
\quad
\text{for }W,\overline W\in U.
\label{eq:lyap-block-lipschitz}
\end{align}
Choosing a nearest \(\overline W\in\mathcal M\) gives
\begin{equation}\label{eq:lyap-error-bound-sufficient}
q_i(W)
\leq
L_i^2\operatorname{dist}(W,\mathcal M)^2
\leq
\frac{L_i^2}{c}
\bigl(\mathcal R(W)-\mathcal R_*\bigr).
\end{equation}
Persistent cutoff gaps and local regularity of \(Q_\ell(W)\) provide the
required regularity of \(B_i\). The strong condition is
\eqref{eq:lyap-aligned-minimizers}: the selected block must vanish on the
entire minimizing component under consideration.

For empirical squared loss, let
\[
r_N(W)
:=
\bigl(
F(x^{(1)},W)-y^{(1)},\ldots,
F(x^{(N)},W)-y^{(N)}
\bigr).
\]
Then
\[
\mathcal R_N(W)
=
\frac{1}{2N}\|r_N(W)\|_2^2,
\]
and the interpolation set is
\[
\mathcal M_N
:=
\{W:r_N(W)=0\}
=
\{W:\mathcal R_N(W)=0\}.
\]
Near a regular interpolating point, uniform nondegeneracy of the Jacobian
\(J_N=D_Wr_N\) in directions normal to \(\mathcal M_N\) yields local quadratic
growth. A uniform lower bound on the relevant positive spectrum of the
empirical Gram operator \(N^{-1}J_NJ_N^\top\) yields a local PL inequality.
Neither condition implies \(B_i=0\) on \(\mathcal M_N\). Risk-based alignment is
therefore available only near minimizing components on which interpolation
and the selected spectral compatibility condition coexist.

\subsection{Intrinsic transverse damping}
\label{subsec:lyap-intrinsic-damping}

The preceding mechanism explains alignment only when the defect is already
tied to the risk by \eqref{eq:lyap-geometric-error-bound}. We now assume
instead that the sensitivity evolution damps the selected cross block.

\begin{proposition}\label{prop:lyap-damped-buffered-defect}
Fix \(i=(\ell,k,\beta)\in\mathcal I\). In addition to
\eqref{eq:lyap-Q-bound} and \eqref{eq:lyap-projector-production}, suppose that
there exist \(\kappa_i>0\) and \(L_i^Q\geq0\) such that
\begin{equation}\label{eq:lyap-transverse-damping}
\left\|
\Pi_i\dot Q_\ell\widehat\Pi_i^\perp
+
\kappa_iB_i
\right\|_F
\leq
L_i^Q\sqrt{\mathscr D}.
\end{equation}
Then, almost everywhere,
\begin{equation}\label{eq:lyap-damped-defect}
\dot q_i
\leq
-2\kappa_iq_i
+
c_i\sqrt{q_i}\sqrt{\mathscr D},
\qquad
c_i
:=
2\bigl(L_i^Q+M_i^Q(H_i^-+H_i^+)\bigr).
\end{equation}
\end{proposition}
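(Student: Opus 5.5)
The argument follows the proof of Proposition~\ref{prop:lyap-modewise-production}, with the damping term split off before norms are taken. On an interval of absolute continuity, the product rule gives, almost everywhere,
\[
\dot B_i
=
\dot\Pi_iQ_\ell\widehat\Pi_i^\perp
+
\Pi_i\dot Q_\ell\widehat\Pi_i^\perp
-
\Pi_iQ_\ell\dot{\widehat\Pi}_i .
\]
We add and subtract \(\kappa_iB_i\) in the middle term:
\[
\dot B_i
=
-\kappa_iB_i
+
\bigl(\Pi_i\dot Q_\ell\widehat\Pi_i^\perp+\kappa_iB_i\bigr)
+
\dot\Pi_iQ_\ell\widehat\Pi_i^\perp
-
\Pi_iQ_\ell\dot{\widehat\Pi}_i
=:
-\kappa_iB_i+E_i .
\]
Since \(q_i=\|B_i\|_F^2\) is absolutely continuous, we have \(\dot q_i=2\langle B_i,\dot B_i\rangle_F\). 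This produces the exact term \(-2\kappa_i\|B_i\|_F^2=-2\kappa_iq_i\) plus \(2\langle B_i,E_i\rangle_F\).

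The remainder \(E_i\) is then bounded term by term. Hypothesis \eqref{eq:lyap-transverse-damping} bounds the bracketed term by \(L_i^Q\sqrt{\mathscr D}\). For the projector terms, the orthogonal projectors have operator norm at most one, so \(\|ABC\|_F\leq\|A\|_{\mathrm{op}}\|B\|_F\|C\|_{\mathrm{op}}\) together with \eqref{eq:lyap-Q-bound} and \eqref{eq:lyap-projector-production} gives
\[
\|\dot\Pi_iQ_\ell\widehat\Pi_i^\perp\|_F\leq H_i^-M_i^Q\sqrt{\mathscr D},
\qquad
\|\Pi_iQ_\ell\dot{\widehat\Pi}_i\|_F\leq H_i^+M_i^Q\sqrt{\mathscr D}.
\]
Cauchy--Schwarz then yields
\[
2\langle B_i,E_i\rangle_F\leq 2\sqrt{q_i}\,\bigl(L_i^Q+M_i^Q(H_i^-+H_i^+)\bigr)\sqrt{\mathscr D}=c_i\sqrt{q_i}\sqrt{\mathscr D},
\]
which proves \eqref{eq:lyap-damped-defect}.

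No step is a real obstacle; the estimate is a bookkeeping refinement of the earlier production bound. The one point needing care is keeping \(\kappa_iB_i\) out of the absolute values, so that it enters as the exact negative term \(-2\kappa_i q_i\) rather than being absorbed into an upper bound. The only regularity required is that \(B_i\) be absolutely continuous, which follows from the assumed absolute continuity of \(Q_\ell\), \(\Pi_i\), and \(\widehat\Pi_i\) on the interval, since products of absolutely continuous matrix functions are absolutely continuous there.
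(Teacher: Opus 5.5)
Your proposal is correct and follows essentially the same route as the paper. Both arguments split $\dot B_i=-\kappa_iB_i+Z_i+\dot\Pi_iQ_\ell\widehat\Pi_i^\perp-\Pi_iQ_\ell\dot{\widehat\Pi}_i$, bound $\|\dot B_i+\kappa_iB_i\|_F$ by $\bigl(L_i^Q+M_i^Q(H_i^-+H_i^+)\bigr)\sqrt{\mathscr D}$, and take the Frobenius inner product with $2B_i$; you also correctly read the absolute-continuity hypothesis as carried over from Proposition~\ref{prop:lyap-modewise-production}.
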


\begin{proof}
Set
\[
Z_i
:=
\Pi_i\dot Q_\ell\widehat\Pi_i^\perp+
\kappa_iB_i.
\]
Then
\[
\dot B_i
=
-\kappa_iB_i+Z_i
+
\dot\Pi_iQ_\ell\widehat\Pi_i^\perp
-
\Pi_iQ_\ell\dot{\widehat\Pi}_i.
\]
The assumptions bound \(\|\dot B_i+\kappa_iB_i\|_F\) by
\[
\bigl(L_i^Q+M_i^Q(H_i^-+H_i^+)\bigr)\sqrt{\mathscr D}.
\]
Taking the Frobenius inner product with \(2B_i\) proves the result.
\end{proof}

The assumption has a direct operator interpretation. Define
\[
\mathfrak T_i(Q)
:=
\Pi_iQ\widehat\Pi_i^\perp
+
\widehat\Pi_i^\perp Q\Pi_i.
\]
Since \(\operatorname{Ran}\Pi_i\subseteq
\operatorname{Ran}\widehat\Pi_i\),
\[
\Pi_i\mathfrak T_i(Q)\widehat\Pi_i^\perp
=
\Pi_iQ\widehat\Pi_i^\perp.
\]
Hence a decomposition
\begin{equation}\label{eq:lyap-transverse-operator-model}
\dot Q_\ell
=
-\kappa_i\mathfrak T_i(Q_\ell)
+
\mathcal F_i,
\end{equation}
together with
\begin{equation}\label{eq:lyap-effective-transverse-forcing}
\left\|
\Pi_i\mathcal F_i\widehat\Pi_i^\perp
\right\|_F
\leq
L_i^Q\sqrt{\mathscr D},
\end{equation}
implies \eqref{eq:lyap-transverse-damping}. The motion of the projectors is
then accounted for by the terms \(H_i^-\) and \(H_i^+\) in
\eqref{eq:lyap-damped-defect}. Equivalently,
\[
\dot B_i
=
-\kappa_iB_i+\mathcal R_i,
\]
where
\[
\mathcal R_i
:=
\Pi_i\mathcal F_i\widehat\Pi_i^\perp
+
\dot\Pi_iQ_\ell\widehat\Pi_i^\perp
-
\Pi_iQ_\ell\dot{\widehat\Pi}_i.
\]

When \(\beta=0\),
\[
\mathfrak T_i(Q)
=
\Pi_iQ\Pi_i^\perp+
\Pi_i^\perp Q\Pi_i
=
[\Pi_i,[\Pi_i,Q]],
\]
so the damping acts on the full off-diagonal part of \(Q\) relative to
\(\operatorname{Ran}\Pi_i\oplus\operatorname{Ran}\Pi_i^\perp\). For
\(\beta>0\), it suppresses only the direct coupling between
\(\operatorname{Ran}\Pi_i\) and
\(\operatorname{Ran}\widehat\Pi_i^\perp\); mixing through the buffered modes
is left unrestricted.

This representation interprets, but does not derive, the damping assumption.
Indeed, differentiability of \(W\mapsto Q_\ell(W)\) gives along gradient
flow
\[
\dot Q_\ell
=
-DQ_\ell(W)\bigl[\nabla\mathcal R(W)\bigr],
\]
and a uniform derivative bound may yield
\[
\|\dot Q_\ell\|_F
\lesssim
\|\nabla\mathcal R(W)\|_F
=
\sqrt{\mathscr D}.
\]
This controls the speed of \(Q_\ell\), but supplies neither a sign nor the
negative transverse term in \eqref{eq:lyap-transverse-operator-model}.
Intrinsic damping is therefore a genuine, model-dependent dynamical
coercivity hypothesis.

\begin{theorem}
\label{thm:lyap-intrinsic-damping}
Assume \eqref{eq:lyap-risk-dissipation}, the PL inequality
\eqref{eq:lyap-PL}, and \eqref{eq:lyap-damped-defect} for every
\(i\in\mathcal I\). For \(A>0\) and \(\omega_i>0\), define
\begin{equation}\label{eq:lyap-damped-functional}
\mathscr V_{\mathrm{damp}}(t)
:=
A\mathscr E(t)
+
\sum_{i\in\mathcal I}\omega_iq_i(t).
\end{equation}
If
\begin{equation}\label{eq:lyap-weight-condition-damped}
\delta
:=
A-
\sum_{i\in\mathcal I}
\frac{\omega_ic_i^2}{4\kappa_i}
>0,
\end{equation}
then
\begin{align}
\dot{\mathscr V}_{\mathrm{damp}}
&\leq
-\delta\mathscr D
-
\sum_{i\in\mathcal I}\omega_i\kappa_iq_i,
\label{eq:lyap-damped-dissipation}\\
\mathscr V_{\mathrm{damp}}(t)
&\leq
e^{-\lambda_{\mathrm{damp}}t}
\mathscr V_{\mathrm{damp}}(0),
\qquad
\lambda_{\mathrm{damp}}
:=
\min\left\{
\frac{2\mu\delta}{A},
\min_{i\in\mathcal I}\kappa_i
\right\}.
\label{eq:lyap-damped-exponential}
\end{align}
In particular, \(\mathscr E(t)\to0\) and \(q_i(t)\to0\) exponentially for
all selected modes.
\end{theorem}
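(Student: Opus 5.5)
The argument is a direct Lyapunov computation: differentiate $\mathscr V_{\mathrm{damp}}$, absorb the production terms with Young's inequality, and then use the PL inequality to turn the dissipation into a term proportional to the excess risk. Since $\mathscr E$ and each $q_i$ are absolutely continuous, $\mathscr V_{\mathrm{damp}}$ is too. Almost everywhere, \eqref{eq:lyap-risk-dissipation} and \eqref{eq:lyap-damped-defect} give
\[
\dot{\mathscr V}_{\mathrm{damp}}
\leq
-A\mathscr D
+\sum_{i\in\mathcal I}\omega_i\bigl(-2\kappa_iq_i+c_i\sqrt{q_i}\sqrt{\mathscr D}\bigr).
\]

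Each cross term is split by Young's inequality so that it costs exactly half of the available damping:
\[
c_i\sqrt{q_i}\sqrt{\mathscr D}
\leq
\kappa_iq_i+\frac{c_i^2}{4\kappa_i}\mathscr D .
\]
Substituting this, the $q_i$ terms combine to $-\sum_i\omega_i\kappa_iq_i$. The $\mathscr D$ terms combine to $-\bigl(A-\sum_i\omega_ic_i^2/(4\kappa_i)\bigr)\mathscr D=-\delta\mathscr D$. This is exactly \eqref{eq:lyap-damped-dissipation}, and the weight condition \eqref{eq:lyap-weight-condition-damped} is precisely what makes this coefficient negative.

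For the exponential bound, we apply the PL inequality \eqref{eq:lyap-PL}. Because $\delta>0$, it gives $-\delta\mathscr D\leq-2\mu\delta\mathscr E=-(2\mu\delta/A)\,A\mathscr E$. The damping term already satisfies $-\sum_i\omega_i\kappa_iq_i\leq-(\min_i\kappa_i)\sum_i\omega_iq_i$. Together these yield
\[
\dot{\mathscr V}_{\mathrm{damp}}\leq-\lambda_{\mathrm{damp}}\mathscr V_{\mathrm{damp}}
\quad\text{a.e.}
\]
Gronwall's inequality, applied to the absolutely continuous function $e^{\lambda_{\mathrm{damp}}t}\mathscr V_{\mathrm{damp}}(t)$, then gives \eqref{eq:lyap-damped-exponential}. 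Since $A\mathscr E\leq\mathscr V_{\mathrm{damp}}$ and $\omega_iq_i\leq\mathscr V_{\mathrm{damp}}$, with all terms nonnegative, both $\mathscr E$ and every $q_i$ decay exponentially.

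The only delicate point is choosing the Young splitting so that the $q_i$ coefficient keeps a factor $\kappa_i$ rather than $2\kappa_i$. The constant in \eqref{eq:lyap-weight-condition-damped} indicates that this half-and-half split is the intended one. The almost-everywhere differentiability needed for Gronwall is inherited from the hypotheses.
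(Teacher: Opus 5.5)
Your proposal is correct and follows the paper's proof essentially step for step. Both use Young's inequality with the split $\kappa_i q_i + c_i^2\mathscr D/(4\kappa_i)$, then the PL inequality to turn $-\delta\mathscr D$ into $-2\mu\delta\mathscr E$, then Gronwall. Your explicit remarks on absolute continuity, and on the nonnegativity of $\mathscr E$ and $q_i$ used in the comparison with $\lambda_{\mathrm{damp}}\mathscr V_{\mathrm{damp}}$, only spell out what the paper leaves implicit.
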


\begin{proof}
Young's inequality gives
\[
c_i\sqrt{q_i}\sqrt{\mathscr D}
\leq
\kappa_iq_i
+
\frac{c_i^2}{4\kappa_i}\mathscr D.
\]
Combining this with \eqref{eq:lyap-risk-dissipation} and
\eqref{eq:lyap-damped-defect} yields
\eqref{eq:lyap-damped-dissipation}. By \eqref{eq:lyap-PL},
\[
\dot{\mathscr V}_{\mathrm{damp}}
\leq
-2\mu\delta\mathscr E
-
\sum_{i\in\mathcal I}\omega_i\kappa_iq_i
\leq
-\lambda_{\mathrm{damp}}\mathscr V_{\mathrm{damp}},
\]
and Gronwall's inequality proves \eqref{eq:lyap-damped-exponential}.
\end{proof}

Unlike Theorem~\ref{thm:production-controlled-lyapunov}, this result does not
assume \(q_i\lesssim\mathscr E\). As \(\mathscr D(t)\) becomes small, the
production term weakens while \(-2\kappa_iq_i\) remains effective. The
mechanism can therefore eliminate a persistent late-time sensitivity defect,
provided the model-dependent damping estimate holds.

\subsection{A linear consistency example}
\label{subsec:trainable-lyapunov-model}

We record a simplified, fully trainable linear network showing only that the
assumptions of Theorem~\ref{thm:lyap-intrinsic-damping} are mutually
consistent. This is not an example of nonlinear feature alignment: the target
is zero and the sensitivity operator ultimately collapses. Its role is to
check the conditional mechanism, not to supply evidence that intrinsic
damping occurs generically.

Let \(\xi\in\mathbb R^2\) satisfy
\(\mathbb E[\xi\xi^\top]=I_2\), and consider the two-layer linear network
\[
F_{U,v}(\xi)
=
v^\top U\xi,
\qquad
U\in\mathbb R^{2\times2},
\qquad
v\in\mathbb R^2,
\]
trained against the zero target with population risk
\[
\mathcal R(U,v)
=
\frac12\mathbb E\bigl[|F_{U,v}(\xi)|^2\bigr]
=
\frac12v^\top UU^\top v.
\]
At the hidden interface,
\[
P=UU^\top,
\qquad
Q=vv^\top,
\]
and both operators arise from trainable weights. Gradient flow gives
\[
\dot U=-vv^\top U=-QU,
\qquad
\dot v=-UU^\top v=-Pv,
\]
hence
\[
\dot P=-QP-PQ,
\qquad
\dot Q=-PQ-QP.
\]
The imbalance \(\Delta:=P-Q\) is therefore conserved. Choose
\[
\Delta
=
\begin{pmatrix}
2&0\\
0&1
\end{pmatrix},
\qquad
v(0)
=
\begin{pmatrix}
\varepsilon\\
\varepsilon
\end{pmatrix},
\qquad
\varepsilon=\frac1{10},
\]
and set
\[
U(0)
=
\bigl(\Delta+v(0)v(0)^\top\bigr)^{1/2}.
\]
Then
\[
P(t)
=
\Delta+Q(t)
=
\Delta+v(t)v(t)^\top.
\]
Writing \(v=(x,y)^\top\) and \(r^2=x^2+y^2\), we obtain
\[
\dot x=-(2+r^2)x,
\qquad
\dot y=-(1+r^2)y,
\qquad
r^2(t)\leq r^2(0)=\frac1{50}.
\]

Let \(\Pi(t)\) be the projector onto the leading eigendirection of \(P(t)\)
and define
\[
q(t)
:=
\left\|
\Pi(t)Q(t)(I-\Pi(t))
\right\|_F^2
=
\mathcal G_{Q,0}^{(1,0)}(t).
\]
The spectral gap \(\gamma=\lambda_1(P)-\lambda_2(P)\) satisfies
\[
\gamma(t)
\geq
1-r^2(t)
\geq
\frac{49}{50}.
\]
Since \([P,Q]=[\Delta,Q]\), the two-dimensional commutator identity gives
\[
q(t)
=
\frac{x(t)^2y(t)^2}{\gamma(t)^2}.
\]
In particular, \(q(0)>0\).

Set
\[
\mathscr E(t):=\mathcal R(U(t),v(t)),
\qquad
\mathscr D(t)
:=
\|\nabla_U\mathcal R\|_F^2+
\|\nabla_v\mathcal R\|^2.
\]
Here \(\mathcal R_*=0\), and \(\dot{\mathscr E}=-\mathscr D\). Since
\(P(t)\geq\Delta\geq I_2\),
\[
\mathscr D
\geq
\|Pv\|_2^2
=
v^\top P^2v
\geq
v^\top Pv
=
2\mathscr E.
\]
Thus \eqref{eq:lyap-PL} holds with \(\mu=1\).

It remains to verify transverse damping. In the spectral basis of \(P\),
\[
\Pi\dot Q(I-\Pi)
=
-\bigl(\lambda_1(P)+\lambda_2(P)\bigr)
\Pi Q(I-\Pi).
\]
Differentiating the localized block and using
\[
\|\dot\Pi\|_{\mathrm{op}}
\leq
\frac{\|\dot P\|_{\mathrm{op}}}{\gamma}
\]
therefore gives
\[
\dot q
\leq
-2\operatorname{tr}(P)q
+
4\frac{\|Q\|_F\|\dot P\|_{\mathrm{op}}}{\gamma}
\sqrt q.
\]
Moreover,
\[
\|\dot P\|_{\mathrm{op}}
\leq
2\|U\|_{\mathrm{op}}\|\dot U\|_F
\leq
2\sqrt{\|P\|_{\mathrm{op}}}\sqrt{\mathscr D}.
\]
Using
\[
\operatorname{tr}(P)\geq3,
\qquad
\|Q\|_F=r^2\leq\frac1{50},
\qquad
\|P\|_{\mathrm{op}}\leq\frac{101}{50},
\qquad
\gamma\geq\frac{49}{50},
\]
we conclude that
\[
\dot q
\leq
-6q+c_*\sqrt{\mathscr Dq},
\qquad
c_*
:=
\frac{8(1/50)\sqrt{101/50}}{49/50}
<0.233.
\]
Thus \eqref{eq:lyap-damped-defect} holds with \(\kappa=3\) and \(c=c_*\).
Taking \(A=\omega=1\),
\[
\delta
=
1-\frac{c_*^2}{12}
>0,
\]
and Theorem~\ref{thm:lyap-intrinsic-damping} yields
\[
\mathscr E(t)+q(t)
\leq
e^{-\lambda_{\mathrm{damp}}t}
\bigl(\mathscr E(0)+q(0)\bigr),
\qquad
\lambda_{\mathrm{damp}}
=
\min\{2\delta,3\}
>1.99.
\]
Hence both the population risk and a nonzero initial
covariance-sensitivity defect decay exponentially. This example verifies
compatibility of the Lyapunov assumptions within a trainable architecture; it
does not establish nonlinear feature alignment, and here \(q(t)\to0\) partly
because \(Q(t)=v(t)v(t)^\top\) collapses.

\subsection{Consequences for the other alignment levels}

The internal sensitivity energy is the natural Lyapunov variable because it
incorporates the combined effect of the four source terms and transports
directly to feature space. Proposition~\ref{prop:feature-side-localization}
gives, for \(i=(\ell,k,\beta)\in\mathcal I\) such that
\(1\leq k\leq r_\ell\) and \(k+\beta\leq m_\ell\),
\begin{equation}\label{eq:lyap-feature-transport}
\Pi_{\ell,N}^{(k)}
\mathcal S_\ell
\Pi_{\ell,N}^{(k+\beta),\perp}
=
W_\ell^\top
\Pi_\ell^{(k)}Q_\ell
\Pi_\ell^{(k+\beta),\perp}
W_\ell.
\end{equation}
Here \(\Pi_{\ell,N}^{(k)}\) is the leading spectral projector of
\(\mathrm{NFM}_\ell=W_\ell^\top W_\ell\). Consequently,
\begin{equation}\label{eq:lyap-feature-energy-transport}
\mathcal G_{\mathcal S,\ell}^{(k,\beta)}(t)
\leq
\|W_\ell(t)\|_{\mathrm{op}}^4
\mathcal G_{Q,\ell}^{(k,\beta)}(t)
=
\|P_\ell(t)\|_{\mathrm{op}}^2
\mathcal G_{Q,\ell}^{(k,\beta)}(t).
\end{equation}
Thus, under a uniform bound on \(\|P_\ell(t)\|_{\mathrm{op}}\), internal
sensitivity alignment implies the corresponding feature-side AGOP-NFM
alignment. The converse may fail: small singular values or a nontrivial kernel
of \(W_\ell\) can suppress internal mixing. A direct feature-side evolution
would also involve both \(W_\ell\) and the spectral projectors of
\(\mathrm{NFM}_\ell\). We therefore regard
\(\mathcal G_{Q,\ell}^{(k,\beta)}\) as the primitive Lyapunov quantity and
\(\mathcal G_{\mathcal S,\ell}^{(k,\beta)}\) as its transported consequence.

Gate alignment requires an independent argument. For smooth activations, the
localized gate energies may be added to the Lyapunov functional if they
satisfy an analogous error bound or damping estimate. For ReLU, a global
argument requires a bounded-variation formulation with an explicit switching
remainder. In either case, decay of
\(\mathcal G_{Q,\ell}^{(k,\beta)}\) does not imply decay of
\(\mathcal G_{D,\ell}^{(k,\beta)}\): the gate-covariance interaction is only
one source in the internal recursion and may be offset by the other three.

\section{Analytic examples}\label{Analytic}

The preceding sections distinguish activation, internal sensitivity, and
feature-side alignment.
The examples below isolate phenomena that cannot be inferred from any one of
these commutators alone: gate and sensitivity alignment need not coincide,
the sources of the internal commutator may cancel, feature transport may hide
internal incompatibility, and alignment need not be monotone. In a deep linear
network \(D_\ell=I\), so the activation commutator and the nonlinear gate
source vanish identically; this is the reference configuration against which
the nonlinear examples should be read.

\subsection{A two-neuron hidden layer}

Consider
\begin{equation}\label{eq:two-neuron-model}
F(x)=W_1\sigma(W_0x),
\qquad
W_0=
\begin{pmatrix}
w_1^\top\\ w_2^\top
\end{pmatrix},
\qquad
W_1=\begin{pmatrix}c_1&c_2\end{pmatrix},
\end{equation}
and write
\[
P_0=W_0W_0^\top=
\begin{pmatrix}a&b\\b&c\end{pmatrix},
\qquad
(a,b,c)=(\|w_1\|^2,w_1^\top w_2,\|w_2\|^2),
\qquad
D_0=\operatorname{diag}(d_1,d_2).
\]
Direct multiplication gives
\begin{equation}\label{eq:two-neuron-pointwise-commutator}
[D_0,P_0]
=b(d_1-d_2)
\begin{pmatrix}0&1\\-1&0\end{pmatrix},
\qquad
\mathcal G_0
=2b^2\mathbb E_\mu[(d_1-d_2)^2].
\end{equation}
Thus the gate energy separates covariance coupling from disagreement of the
two activation patterns. If \(\lambda_+\geq\lambda_-\) are the eigenvalues of
\(P_0\), \(U=(u_+,u_-)\), and \(\widetilde D_0=U^\top D_0U\), the same identity
becomes
\begin{equation}\label{eq:two-neuron-spectral-factorization}
\mathcal G_0
=2(\lambda_+-\lambda_-)^2
\mathbb E_\mu|\widetilde D_{0,12}|^2,
\end{equation}
which is the two-dimensional form of the general spectral expansion.

Assume now that \(x\sim N(0,I_d)\), \(w_1,w_2\neq0\), and
\(\sigma(r)=\max\{r,0\}\). Set
\[
\rho=\frac{w_1^\top w_2}{\|w_1\|\|w_2\|},
\qquad
\vartheta=\arccos\rho,
\qquad
\pi_{12}:=\mathbb E_\mu[d_1d_2]
=\frac14+\frac{1}{2\pi}\arcsin\rho.
\]
\begin{theorem}\label{thm:ExactGaussianReLUEnergy}
For the Gaussian ReLU model \eqref{eq:two-neuron-model},
\begin{equation}\label{ExactGaussianReLUEnergy}
\mathcal G_0
=\frac{2}{\pi}(w_1^\top w_2)^2\arccos\rho
=\frac{2}{\pi}\|w_1\|^2\|w_2\|^2
\vartheta\cos^2\vartheta.
\end{equation}
\end{theorem}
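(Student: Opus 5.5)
The plan is to start from the pointwise identity \eqref{eq:two-neuron-pointwise-commutator}, which already gives
\[
\mathcal G_0=2b^2\,\mathbb E_\mu[(d_1-d_2)^2],
\qquad b=w_1^\top w_2,
\]
so the problem reduces to a scalar Gaussian probability. Since \(d_r=\mathbf 1_{\{w_r^\top x>0\}}\) takes values in \(\{0,1\}\), we have \((d_1-d_2)^2=d_1+d_2-2d_1d_2\). This gives
\[
\mathbb E_\mu[(d_1-d_2)^2]=\mathbb E d_1+\mathbb E d_2-2\pi_{12}.
\]
For \(x\sim N(0,I_d)\) and \(w_r\neq0\), the projection \(w_r^\top x\) is a centered nondegenerate Gaussian. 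Hence \(\mathbb E d_r=\tfrac12\), and the zero set \(\{w_r^\top x=0\}\) is \(\mu\)-null, so the fixed selection of \(\sigma'\) at the kink is irrelevant.

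The only real input is the orthant probability \(\pi_{12}=\mathbb P(w_1^\top x>0,\,w_2^\top x>0)\). The paper states this value in the definition preceding the theorem. I would nonetheless justify it briefly by Sheppard's formula, via a reduction to two dimensions. The pair \((w_1^\top x/\|w_1\|,\,w_2^\top x/\|w_2\|)\) is a standard bivariate Gaussian with correlation \(\rho\). Equivalently, projecting \(x\) onto \(\operatorname{span}\{w_1,w_2\}\) gives a rotationally invariant planar Gaussian. The two half-planes \(\{w_r^\top x>0\}\) then intersect in a sector of angle \(\pi-\vartheta\), so
\[
\pi_{12}=\frac{\pi-\vartheta}{2\pi}=\frac12-\frac{\vartheta}{2\pi}.
\]
This agrees with \(\tfrac14+\tfrac1{2\pi}\arcsin\rho\), because \(\arcsin\rho=\tfrac\pi2-\arccos\rho\). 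The degenerate cases \(\rho=\pm1\) (collinear weights) are covered directly by the same sector count, giving angle \(\pi\) or \(0\).

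Substituting these values gives
\[
\mathbb E(d_1-d_2)^2=1-2\Bigl(\frac12-\frac{\vartheta}{2\pi}\Bigr)=\frac{\vartheta}{\pi}.
\]
Therefore \(\mathcal G_0=\tfrac{2}{\pi}b^2\vartheta\), which is the first expression in \eqref{ExactGaussianReLUEnergy} since \(\vartheta=\arccos\rho\). The second expression follows from \(b=\|w_1\|\|w_2\|\cos\vartheta\).

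The only step needing care is the sector-angle computation, specifically the rotational invariance of the projected Gaussian. Everything else is the algebraic identity for indicators together with the already established formula \eqref{eq:two-neuron-pointwise-commutator}.
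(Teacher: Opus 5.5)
Your proof is correct and follows the paper's route. Both arguments reduce $\mathcal G_0=2b^2\,\mathbb E_\mu[(d_1-d_2)^2]$ to the Gaussian gate-disagreement probability $\vartheta/\pi$; the paper writes $(d_1-d_2)^2=\mathbf 1_{\{d_1\neq d_2\}}$ and quotes that probability, while you write the same indicator as $d_1+d_2-2d_1d_2$ and derive it from the orthant value $\pi_{12}=\tfrac12-\tfrac{\vartheta}{2\pi}$, which just adds a justification the paper takes as known.
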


\begin{proof}
For binary gates,
\((d_1-d_2)^2=\mathbf 1_{\{d_1\neq d_2\}}\). Substituting \(\mathbb P(d_1\neq d_2)=\vartheta/\pi\) into \eqref{eq:two-neuron-pointwise-commutator} gives the result.
\end{proof}

For fixed neuron norms, the formula has two distinct zero mechanisms:
\(\rho=0\) removes covariance coupling, whereas \(\rho=1\) makes the
oriented half-spaces coincide. By contrast, \(\rho=-1\) gives complementary
gates and \(\mathcal G_0=2\|w_1\|^2\|w_2\|^2\). Hence gate compatibility is
not a monotone function of neuron correlation.

\subsubsection{Internal and feature-side alignment}

Let \(s=(c_1,c_2)^\top\). Since \(S_1=\mathcal S_1=ss^\top\) and
\(Q_1=1\),
\begin{equation}\label{eq:one-hidden-Q0}
Q_0=\mathbb E_\mu[D_0S_1D_0]
=\begin{pmatrix}
c_1^2/2&c_1c_2\pi_{12}\\
c_1c_2\pi_{12}&c_2^2/2
\end{pmatrix}.
\end{equation}
With
\(J:=\left(\begin{smallmatrix}0&1\\-1&0\end{smallmatrix}\right)\), one finds
\begin{equation}\label{eq:one-hidden-internal-commutator}
C_0^{\mathrm{int}}=[Q_0,P_0]=\kappa_0J,
\qquad
\kappa_0
=\frac b2(c_1^2-c_2^2)+(c-a)c_1c_2\pi_{12}.
\end{equation}
Consequently, gate alignment need not imply internal alignment: if
\(w_1\perp w_2\), then \(\mathcal G_0=0\), while
\(C_0^{\mathrm{int}}\neq0\) whenever \(a\neq c\) and \(c_1c_2\neq0\).
The transport identity and
\(\|W_0^\top JW_0\|_F^2=2(ac-b^2)\) give the exact feature-side relation
\begin{equation}\label{eq:one-hidden-feature-energy}
\|C_0^{\mathrm{feat}}\|_F^2
=\det(P_0)\|C_0^{\mathrm{int}}\|_F^2.
\end{equation}
Thus rank collapse can hide a nonzero internal commutator, whereas a
well-conditioned \(W_0\) transmits it quantitatively.

\subsubsection{Exact cancellation of local sources}

At the penultimate level there is no transported or fluctuation component,
so
\(C_0^{\mathrm{int}}=T_0^{\mathrm{imb}}+T_0^{\mathrm{gate}}\). Still for
ReLU, but now for a general input distribution, put
\(p_i=\mathbb E_\mu[d_i]\) and
\(p_{12}=\mathbb E_\mu[d_1d_2]\). Then
\begin{align}
T_0^{\mathrm{imb}}&=\kappa_{\mathrm{imb}}J,
&\kappa_{\mathrm{imb}}
&=p_{12}\bigl[b(c_1^2-c_2^2)+(c-a)c_1c_2\bigr],
\label{eq:one-hidden-imbalance-scalar}\\
T_0^{\mathrm{gate}}&=\kappa_{\mathrm{gate}}J,
&\kappa_{\mathrm{gate}}
&=b\bigl[c_1^2p_1-c_2^2p_2+(c_2^2-c_1^2)p_{12}\bigr].
\label{eq:one-hidden-gate-scalar}
\end{align}
The common term \(b(c_1^2-c_2^2)p_{12}\) cancels in their sum. This
cancellation can be exact while both sources and the gate energy remain
nonzero. Indeed, let \(d=2\), \(x\sim N(0,I_2)\), and
\[
w_1=e_1,
\qquad
w_2=e_1+e_2,
\qquad
c_1=u:=\frac{\sqrt{73}-3}{8},
\qquad
c_2=1.
\]
Here \(p_1=p_2=1/2\), \(p_{12}=3/8\), and \(4u^2+3u-4=0\), whence
\begin{equation}\label{eq:exact-source-cancellation}
T_0^{\mathrm{imb}}=\frac{3u}{32}J,
\qquad
T_0^{\mathrm{gate}}=-\frac{3u}{32}J,
\qquad
C_0^{\mathrm{int}}=C_0^{\mathrm{feat}}=0,
\qquad
\mathcal G_0=\frac12.
\end{equation}
This is exact sensitivity and feature-side alignment produced by cancellation,
not by smallness of the local mechanisms.

\subsubsection{Dependence on a function-preserving parametrization}

The represented function does not determine the three commutator geometries.
For \(r>0\), let
\[
H_r=\operatorname{diag}(r,1),
\qquad
W_0^{(r)}=H_rW_0,
\qquad
W_1^{(r)}=W_1H_r^{-1}.
\]
Positive homogeneity of ReLU gives
\(W_1^{(r)}\sigma(W_0^{(r)}x)=F(x)\), and the gates are unchanged, but
\[
P_0^{(r)}=H_rP_0H_r,
\qquad
Q_0^{(r)}=H_r^{-1}Q_0H_r^{-1}.
\]
Writing \(C_0^{\mathrm{int},(r)}=\kappa_0(r)J\), formula
\eqref{eq:one-hidden-internal-commutator} yields
\begin{equation}\label{eq:gauge-dependent-internal-commutator}
\kappa_0(r)=\frac{A}{r}-Br,
\qquad
A=\frac{bc_1^2}{2}+cc_1c_2\pi_{12},
\qquad
B=\frac{bc_2^2}{2}+ac_1c_2\pi_{12}.
\end{equation}
Moreover,
\begin{equation}\label{eq:gauge-dependent-energies}
\mathcal G_0^{(r)}=r^2\mathcal G_0,
\qquad
\|C_0^{\mathrm{feat},(r)}\|_F^2
=2r^2\det(P_0)\kappa_0(r)^2.
\end{equation}
When \(A,B>0\), the choice \(r^2=A/B\) produces exact internal and
feature-side alignment without changing the network function. This does not
make alignment arbitrary: it shows that these quantities describe parameter
geometry and must therefore be interpreted relative to the chosen
parametrization.

\subsubsection{Transient growth along a constrained gradient flow}

The preceding identities hold along arbitrary parameter paths. To exhibit
nonmonotonicity under a population-gradient flow constrained to a symmetric
manifold, let
\(x\sim N(0,I_d)\), choose orthonormal \(v,e\), and take the teacher
\(f_*(x)=\sigma(v^\top x)\) and the symmetric student
\[
F_\theta(x)=\frac12\bigl(\sigma(w_1(\theta)^\top x)
+\sigma(w_2(\theta)^\top x)\bigr),
\qquad
w_{1,2}(\theta)=\cos\theta\,v\pm\sin\theta\,e,
\quad
0\leq\theta\leq\frac\pi2.
\]
For unit vectors with mutual angle \(\alpha\), the Gaussian ReLU kernel is
\begin{equation}\label{GaussianReLUKernel}
K(\alpha)=\frac{1}{2\pi}
\bigl(\sin\alpha+(\pi-\alpha)\cos\alpha\bigr).
\end{equation}
This is the first-order arc-cosine kernel of \cite{ChoSaul2009}.
The quadratic population risk restricted to this symmetric manifold is
\begin{equation}\label{SymmetricTeacherRisk}
\mathcal R(\theta)=\frac38+\frac14K(2\theta)-K(\theta),
\qquad
\mathcal R'(\theta)
=\frac{\sin\theta}{2\pi}
\bigl[(\pi-\theta)-(\pi-2\theta)\cos\theta\bigr]>0
\end{equation}
for \(0<\theta\leq\pi/2\). The Euclidean metric on \((w_1,w_2)\)
induces \(g_\theta=\|\partial_\theta w_1\|^2+
\|\partial_\theta w_2\|^2=2\), so the constrained gradient flow satisfies
\(\dot\theta=-\mathcal R'(\theta)/2\).

\begin{proposition}\label{prop:symmetric-flow-transient-growth}
If \(0<\theta(0)\leq\pi/2\), then \(\theta(t)\to0\) and
\(w_1(t),w_2(t)\to v\). Along the flow,
\begin{equation}\label{SymmetricGateEnergy}
\mathcal G_0(\theta)
=\frac{4\theta}{\pi}\cos^2(2\theta)\longrightarrow0,
\end{equation}
but \(\mathcal G_0(t)\) need not be monotone. In particular, trajectories
starting below and sufficiently close to \(\pi/4\) first increase and then
decrease in gate energy.
\end{proposition}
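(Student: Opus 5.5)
The argument has three parts: the convergence of the flow, the closed form for the gate energy, and a one-variable calculus check for non-monotonicity.

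\emph{Convergence.} The constrained flow is the scalar ODE $\dot\theta=-\mathcal R'(\theta)/2$ on $[0,\pi/2]$. I will use the sign information in \eqref{SymmetricTeacherRisk}. For $0<\theta\le\pi/2$ we have $\sin\theta>0$, and
\[
(\pi-\theta)-(\pi-2\theta)\cos\theta\ge(\pi-\theta)-(\pi-2\theta)=\theta>0 .
\]
Hence $\mathcal R'>0$ on $(0,\pi/2]$, so $\theta(t)$ is strictly decreasing and stays in $(0,\theta(0)]$. It therefore converges to some limit $\theta_\infty\ge0$. If $\theta_\infty>0$, then $\dot\theta\le-\tfrac12\min_{[\theta_\infty,\theta(0)]}\mathcal R'<0$ for all $t$, which is impossible for a bounded monotone solution. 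So $\theta_\infty=0$, and continuity of $w_{1,2}(\theta)$ gives $w_{1,2}(t)\to v$.

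\emph{Gate energy.} I will apply Theorem~\ref{thm:ExactGaussianReLUEnergy} with $\|w_1\|=\|w_2\|=1$. The correlation is
\[
\rho=w_1^\top w_2=\cos^2\theta-\sin^2\theta=\cos2\theta ,
\]
so $\vartheta=\arccos\rho=2\theta$ for $\theta\in[0,\pi/2]$. Substituting into \eqref{ExactGaussianReLUEnergy} gives
\[
\mathcal G_0=\tfrac{2}{\pi}\cdot2\theta\cos^2(2\theta)=\tfrac{4\theta}{\pi}\cos^2(2\theta),
\]
which is \eqref{SymmetricGateEnergy}. This tends to $0$ as $\theta\to0$. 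The model has no $1/2$ output factor affecting $P_0$ or $D_0$, since both depend only on $W_0$.

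\emph{Non-monotonicity.} Write $g(\theta)=\theta\cos^2(2\theta)$. Since $\theta(t)$ is strictly decreasing, $\mathcal G_0(t)$ increases exactly while $g'(\theta(t))<0$. Compute
\[
g'(\theta)=\cos(2\theta)\bigl[\cos(2\theta)-4\theta\sin(2\theta)\bigr].
\]
On $(0,\pi/4)$ the factor $\cos2\theta$ is positive, and the bracket $h(\theta)=\cos2\theta-4\theta\sin2\theta$ is strictly decreasing, with $h(0)=1$ and $h(\pi/4)=-\pi<0$. So $h$ has a unique zero $\theta_*\in(0,\pi/4)$, with $g'<0$ on $(\theta_*,\pi/4)$ and $g'>0$ on $(0,\theta_*)$.

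Now take any $\theta(0)\in(\theta_*,\pi/4)$, which covers every initial value below and sufficiently close to $\pi/4$. Along the flow $\theta$ decreases through $(\theta_*,\theta(0)]$, where $\mathcal G_0$ increases strictly. It then passes $\theta_*$, which it must do because $\theta\to0$, and afterwards $\mathcal G_0$ decreases to $0$. Hence the gate energy first increases and then decreases, as claimed.

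The only step that needs care is the sign of $h$ on $(0,\pi/4)$. This follows from $h'=-2\sin2\theta-4\sin2\theta-8\theta\cos2\theta<0$ on that interval.
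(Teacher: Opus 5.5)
Your proof is correct and follows the paper's route: the bracket is positive (the paper writes it as $\theta+(\pi-2\theta)(1-\cos\theta)$), so $\theta(t)$ decreases to $0$; Theorem~\ref{thm:ExactGaussianReLUEnergy} with $\rho=\cos 2\theta$ gives the energy formula; and the sign of the $\theta$-derivative of $\mathcal G_0$, combined with $\dot\theta<0$, gives the transient growth. Your analysis of $h$ is a slightly sharper form of the paper's local sign check near $\pi/4$ and near $0$: it yields a unique switching angle $\theta_*$, so the energy increases and then decreases exactly once.
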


\begin{proof}
The bracket in \eqref{SymmetricTeacherRisk} equals
\(\theta+(\pi-2\theta)(1-\cos\theta)\), so its strict positivity gives
\(\dot\theta<0\) for \(\theta>0\) and forces \(\theta(t)\to0\). Since
\(w_1^\top w_2=\cos(2\theta)\), Theorem~\ref{thm:ExactGaussianReLUEnergy}
gives \eqref{SymmetricGateEnergy}. Finally,
\[
\frac{\d}{\d\theta}\mathcal G_0(\theta)
=\frac4\pi\bigl(\cos^2(2\theta)-2\theta\sin(4\theta)\bigr).
\]
The displayed derivative is negative immediately to the left of
\(\pi/4\) and positive near \(0\). Since \(\dot\theta<0\), the time
derivative of the energy has the opposite sign, which proves the asserted
initial growth and subsequent decay.
\end{proof}

The initial growth is caused by increasing covariance coupling as the neurons
leave orthogonality; near the teacher, decreasing gate disagreement dominates.
Thus convergence to exact activation alignment does not make the gate energy
a Lyapunov function.

\subsection{A minimal depth-two hierarchy}

Consider
\begin{equation}\label{eq:depth-two-model}
F(x)=W_2\sigma\bigl(W_1\sigma(W_0x)\bigr),
\qquad
W_0=\begin{pmatrix}w_1^\top\\w_2^\top\end{pmatrix},
\qquad
W_1=\begin{pmatrix}\alpha&-\beta\\-\beta&\alpha\end{pmatrix},
\qquad
W_2=\begin{pmatrix}c_1&c_2\end{pmatrix},
\end{equation}
where \(\sigma\) is ReLU and \(\alpha>\beta>0\). Let
\(D_j=\operatorname{diag}(d_{j,1},d_{j,2})\). Then
\[
P_0=\begin{pmatrix}
\|w_1\|^2&w_1^\top w_2\\w_1^\top w_2&\|w_2\|^2
\end{pmatrix},
\qquad
P_1=\begin{pmatrix}
\alpha^2+\beta^2&-2\alpha\beta\\
-2\alpha\beta&\alpha^2+\beta^2
\end{pmatrix},
\]
and \eqref{eq:two-neuron-pointwise-commutator} gives
\begin{align}
\mathcal G_0
&=2(w_1^\top w_2)^2
\mathbb P_\mu(d_{0,1}\neq d_{0,2}),
\label{TwoLayerG0}\\
\mathcal G_1
&=8\alpha^2\beta^2
\mathbb P_\mu(d_{1,1}\neq d_{1,2}).
\label{TwoLayerG1}
\end{align}
If both coordinates of \(a_1=\sigma(W_0x)\) are positive and
\(R=a_{1,1}/a_{1,2}\), the second-layer gates disagree precisely when
\begin{equation}\label{SecondLayerRatioCondition}
R<\frac\beta\alpha
\qquad\text{or}\qquad
R>\frac\alpha\beta.
\end{equation}

\begin{proposition}\label{prop:ExplicitSecondLayerEnergy}
For the depth-two ReLU model \eqref{eq:depth-two-model}, assume
\(x\sim N(0,I_d)\) and \(w_1,w_2\) are orthonormal, and put
\(\tau=\arctan(\beta/\alpha)\). Then
\begin{equation}\label{ExplicitSecondLayerEnergy}
\mathcal G_0=0,
\qquad
\mathcal G_1
=8\alpha^2\beta^2\left(\frac12+\frac\tau\pi\right)>0.
\end{equation}
Moreover, with
\(p_i=\mathbb E_\mu[d_{1,i}]\) and
\(p_{12}=\mathbb E_\mu[d_{1,1}d_{1,2}]\),
\begin{equation}\label{eq:orthogonal-second-layer-gate-moments}
p_1=p_2=\frac12-\frac{\tau}{2\pi},
\qquad
p_{12}=\frac14-\frac\tau\pi.
\end{equation}
\end{proposition}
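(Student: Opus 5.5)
The plan is to reduce every quantity to the polar angle of a standard two-dimensional Gaussian. First, since \(w_1^\top w_2=0\), formula \eqref{TwoLayerG0} immediately gives \(\mathcal G_0=0\), because the covariance coupling \(b\) vanishes regardless of the gate statistics. For the second layer, orthonormality of \(w_1,w_2\) and rotation invariance of \(N(0,I_d)\) imply that \(g=(w_1^\top x,w_2^\top x)\) is a standard Gaussian in \(\mathbb R^2\). Hence its angle \(\varphi\) is uniform on \([0,2\pi)\), independent of the radius. Since \(a_1=\sigma(W_0x)=(g_1^+,g_2^+)\) and the gates are invariant under positive scaling, every second-layer gate event depends only on \(\varphi\). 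Each probability is therefore the arc length of the corresponding event divided by \(2\pi\).

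Next I would write the second-layer preactivations explicitly as \(z_{1,1}=\alpha a_{1,1}-\beta a_{1,2}\) and \(z_{1,2}=-\beta a_{1,1}+\alpha a_{1,2}\), using the selection \(\sigma'(s)=\mathbf 1_{\{s>0\}}\). I would then split into the four quadrants of \(\varphi\):
\begin{itemize}
\item In the open first quadrant, \(d_{1,1}=1\) if and only if \(\tan\varphi<\alpha/\beta\), that is, \(\varphi<\pi/2-\tau\). Similarly, \(d_{1,2}=1\) if and only if \(\varphi>\tau\). This is the ratio condition \eqref{SecondLayerRatioCondition}, and \(\alpha>\beta\) guarantees \(0<\tau<\pi/4\).
\item In the quadrant \(g_1>0\ge g_2\), one has \(a_1=(g_1,0)\), so \(z_1=(\alpha g_1,-\beta g_1)\). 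Thus \(d_{1,1}=1\) and \(d_{1,2}=0\).
\item The quadrant \(g_2>0\ge g_1\) is the symmetric case, with \(d_{1,1}=0\) and \(d_{1,2}=1\).
\item In the quadrant where both coordinates are nonpositive, \(a_1=0\), so \(z_1=0\) and both gates vanish under the chosen selection.
\end{itemize}
The coordinate axes and the rays \(\varphi=\tau\) and \(\varphi=\pi/2-\tau\) have \(\mu\)-measure zero, so the boundary conventions do not matter.

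Collecting arc lengths should then give the claimed values. Disagreement occurs on \((0,\tau)\cup(\pi/2-\tau,\pi/2)\) and on the two mixed quadrants, with total probability \(\tau/\pi+1/2\). Inserting this into \eqref{TwoLayerG1} gives \eqref{ExplicitSecondLayerEnergy}, which is positive because \(\alpha\beta>0\). Likewise, \(p_1\) collects \((0,\pi/2-\tau)\) together with one full mixed quadrant, giving \(1/4-\tau/(2\pi)+1/4\). By the symmetry \(g_1\leftrightarrow g_2\), the same value holds for \(p_2\). Finally, \(p_{12}\) is the arc \((\tau,\pi/2-\tau)\), giving \(1/4-\tau/\pi\), in agreement with \eqref{eq:orthogonal-second-layer-gate-moments}. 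As a consistency check, these values satisfy \(p_1+p_2-2p_{12}=1/2+\tau/\pi\), matching the disagreement probability.

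The only step needing care is the careful treatment of the degenerate regions. These are the mixed quadrants, where one first-layer unit is off so the second-layer gates disagree deterministically, and the all-off quadrant, where \(z_1=0\) and the selection convention forces agreement. I expect no real obstacle beyond this bookkeeping, since everything reduces to arc lengths.
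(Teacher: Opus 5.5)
Your proposal is correct and follows the paper's own argument. Both reduce to the uniform polar angle of the standard Gaussian \((w_1^\top x,w_2^\top x)\), note that the two mixed quadrants always give disagreement, the all-off quadrant never activates, and the first quadrant splits at the angles \(\tau\) and \(\pi/2-\tau\), and then divide arc lengths by \(2\pi\); your remark that the selection \(\sigma'(0)=0\) on the positive-measure quadrant where \(z_1=0\) is what fixes \(p_1,p_2,p_{12}\) (though not the disagreement probability) is a useful detail the paper leaves implicit.
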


\begin{proof}
The second and fourth Gaussian quadrants, of total angular measure \(\pi\),
give gate disagreement. In the first quadrant, the two exterior sectors in
\eqref{SecondLayerRatioCondition} have total angle \(2\tau\), while the third
quadrant activates neither neuron. Division by \(2\pi\) gives
\eqref{ExplicitSecondLayerEnergy}; counting the individual active sectors and
their intersection gives \eqref{eq:orthogonal-second-layer-gate-moments}.
\end{proof}

Thus exact gate alignment at the first layer does not prevent a new nonlinear
incompatibility from being created at the second. At that layer,
\begin{equation}\label{eq:depth-two-Q1}
Q_1=\mathbb E_\mu[D_1S_2D_1]
=\begin{pmatrix}
c_1^2p_1&c_1c_2p_{12}\\
c_1c_2p_{12}&c_2^2p_2
\end{pmatrix},
\qquad
S_2=W_2^\top W_2,
\end{equation}
and hence
\begin{equation}\label{SecondLayerSensitivityCommutator}
C_1^{\mathrm{int}}=\kappa_1J,
\qquad
\kappa_1=-2\alpha\beta(c_1^2p_1-c_2^2p_2).
\end{equation}
The gate and internal energies therefore share the covariance factor
\(8\alpha^2\beta^2\), but contain different geometric information: gate
disagreement in the first case and sensitivity imbalance in the second. In
the orthonormal Gaussian configuration, \(p_1=p_2>0\), so
\(C_1^{\mathrm{int}}=0\) when \(|c_1|=|c_2|\), although
\(\mathcal G_1>0\).

Since \(A^\top JA=(\det A)J\) for \(2\times2\) matrices,
\begin{equation}\label{eq:depth-two-feature-energy}
C_1^{\mathrm{feat}}
=(\alpha^2-\beta^2)C_1^{\mathrm{int}},
\qquad
\|C_1^{\mathrm{feat}}\|_F^2
=(\alpha^2-\beta^2)^2\|C_1^{\mathrm{int}}\|_F^2.
\end{equation}
This is the singular-value weighting of feature transport in exact
two-dimensional form. At the preceding layer,
\begin{equation}\label{eq:depth-two-Q0}
Q_0
=\mathbb E_\mu\bigl[
D_0W_1^\top D_1S_2D_1W_1D_0
\bigr],
\qquad
C_0^{\mathrm{feat}}=W_0^\top C_0^{\mathrm{int}}W_0.
\end{equation}
The compact formula already displays the composed activation geometry of both
hidden layers. In particular, if \(P_0=\lambda I\), then
\(C_0^{\mathrm{int}}=C_0^{\mathrm{feat}}=0\) irrespective of the deeper
gates: isotropy can mask, at the earlier layer, incompatibility created deeper
in the network.

At \(\ell=1=L-2\), only the imbalance and gate sources remain. With
\(h=-2\alpha\beta\),
\begin{align}
T_1^{\mathrm{imb}}
&=h(c_1^2-c_2^2)p_{12}J,
\label{eq:depth-two-imbalance-source}\\
T_1^{\mathrm{gate}}
&=h\bigl[c_1^2p_1-c_2^2p_2
+(c_2^2-c_1^2)p_{12}\bigr]J,
\label{eq:depth-two-gate-source}
\end{align}
so their sum is \(C_1^{\mathrm{int}}=h(c_1^2p_1-c_2^2p_2)J\). If
\(p_1,p_2,p_{12}>0\), \(p_1\neq p_2\), and
\(c_1^2p_1=c_2^2p_2\) with \(c_1^2\neq c_2^2\), the two nonzero sources
cancel exactly. The depth-two model therefore reproduces the cancellation
mechanism while also showing that nonlinear incompatibility may be created
anew from layer to layer.

These examples give no universal direction of evolution. A vanishing gate
commutator may coexist with a nonzero internal commutator; a vanishing internal
commutator may result from cancellation of nonzero sources; and a small
feature-side commutator may result from singular-value suppression. Moreover,
the same represented function can carry different commutator geometries, and
even a convergent constrained gradient-flow trajectory can exhibit transient
growth. The three levels should therefore be read as complementary diagnostics
of a layer-dependent mechanism, not as quantities whose decay is built into
the framework.

\section{Numerical experiments}\label{Num}

The purpose of this section is to determine how the mechanisms isolated in
the preceding analysis are realized along finite-width training trajectories.
The exact transport identity and internal commutator recursion require no
numerical confirmation as mathematical statements.  Numerically, the relevant
questions are instead whether the gate, internal-sensitivity, and feature-side
geometries evolve in the same manner; whether a small internal commutator is
produced by small source terms or by cancellation among large terms; and
whether the resulting picture persists across depth, width, initialization,
and data.  We therefore distinguish throughout between exact identities,
ensemble observations, and conditional interpretations.  In particular, a
decrease of the risk is not assumed to force decay of any commutator.

The study has three components. First, a controlled synthetic experiment
compares two, four, and six hidden layers at matched relative-risk levels and
adds a width comparison at fixed depth. Second, the resulting conclusions are
tested on the Concrete Compressive Strength and Energy Efficiency regression
datasets.\footnote{Both datasets are taken from the UCI Machine Learning
Repository. The Concrete Compressive Strength dataset contains \(1030\)
observations with eight quantitative input variables describing the concrete
composition and age, and compressive strength as the scalar response. The
Energy Efficiency dataset contains \(768\) simulated building configurations
with eight input variables and two responses, heating and cooling load; in the
present experiments, heating load is used as the scalar response.} Third, a
paired positive-homogeneous rescaling preserves the initialized network
function while changing the adjacent-layer imbalance. The first two components
test robustness of the geometric picture, whereas the third probes how the
imbalance and parametrization class influence the internal hierarchy while
holding the initialized function and gates fixed.


\subsection{Experimental setting and observables}
\label{sec:numerical-setting}

For the synthetic experiments, the input distribution is the standard
Gaussian measure on \(\mathbb R^2\), and the scalar target is
\begin{equation}
f_\ast(x_1,x_2)
=
0.8(x_1+0.6x_2)_+
-0.6(-0.7x_1+x_2)_+
+0.25x_1.
\label{eq:numerical-target}
\end{equation}
A fixed sample of \(N=256\) points is used for training, and an independent
sample of size \(M=1024\) is used for geometric diagnostics. All networks are
bias-free, fully connected ReLU networks with scalar linear output. If \(H\)
denotes the number of hidden layers, the depth experiment uses
\[
H\in\{2,4,6\},
\qquad
m_1=\cdots=m_H=32,
\]
whereas the capacity experiment fixes \(H=4\) and uses widths
\(16,32,64\). The weight entries are initialized independently with variance
\(2/m_{\mathrm{in}}\), and the empirical risk
\[
\mathscr E_N(W)
=
\frac1{2N}\sum_{i=1}^N
\bigl(F_W(x_i)-f_\ast(x_i)\bigr)^2
\]
is minimized by full-batch gradient descent in double precision, with step
size \(\eta=10^{-2}\), for at most \(10^4\) iterations. We identify iteration
\(n\) with the scaled gradient-descent time \(t_n=n\eta\); this is a discrete
time variable, not an observed continuous gradient-flow trajectory. Thus the
final checkpoint corresponds to \(T=100\). The geometry is evaluated at
logarithmically spaced checkpoints and at the first crossings of
\begin{equation}
r(t)
:=
\frac{\mathscr E_N(t)}{\mathscr E_N(0)}
\in
\{10^{-1},10^{-2},10^{-3},10^{-4}\}.
\label{eq:numerical-relative-risk}
\end{equation}
Architecture comparisons are therefore made at matched optimization progress
rather than at a common iteration number. For the squared loss used here,
\(\mathscr E_N=\mathcal R_N\) in the notation of
\eqref{Pre:empirical-risk}.

The Concrete Compressive Strength and Energy Efficiency datasets
\cite{Yeh1998,TsanasXifara2012} contain, respectively, \(1030\) and \(768\)
observations with eight input variables.
For Energy, the heating load is used as the scalar response. Each dataset is
split once into \(80\%\) training data and \(20\%\) test data. Inputs and
targets are standardized using statistics computed only from the training
split. Four-hidden-layer width-\(32\) networks are trained for \(10^4\)
iterations over twenty independent initializations, with the same
training-test split retained across all seeds.

When no ambiguity arises, the sample subscript is suppressed in tables and
figures. Synthetic global profiles use the independent diagnostic sample,
benchmark global profiles use the test sample, and all source decompositions
use the corresponding training sample.

For a diagnostic sample \(\{x_a\}_{a=1}^M\), set
\[
Q_{\ell,M}
=\frac1M\sum_{a=1}^M
D_\ell(x_a)S_{\ell+1}(x_a)D_\ell(x_a),
\qquad
\mathcal S_{\ell,M}
=\frac1M\sum_{a=1}^M S_\ell(x_a).
\]
The three global defects are normalized by the operator scale of the first
factor and the covariance scale of the second:
\begin{align}
\widehat{\mathcal G}_{D,\ell,M}
&:=
\frac{
M^{-1}\sum_{a=1}^M
\|[D_\ell(x_a),P_\ell]\|_F^2
}{
\left(
M^{-1}\sum_{a=1}^M
\|D_\ell(x_a)\|_{\mathrm{op}}^2
\right)
\|P_\ell\|_F^2
},
\label{eq:numerical-normalized-gate}
\\
\widehat{\mathcal G}_{Q,\ell,M}
&:=
\frac{
\|[Q_{\ell,M},P_\ell]\|_F^2
}{
\|Q_{\ell,M}\|_{\mathrm{op}}^2
\|P_\ell\|_F^2
},
\label{eq:numerical-normalized-internal}
\\
\widehat{\mathcal G}_{\mathcal S,\ell,M}
&:=
\frac{
\|[\mathcal S_{\ell,M},\mathrm{NFM}_\ell]\|_F^2
}{
\|\mathcal S_{\ell,M}\|_{\mathrm{op}}^2
\|\mathrm{NFM}_\ell\|_F^2
}.
\label{eq:numerical-normalized-feature}
\end{align}
Here $Q_{\ell,M}$ and $\mathcal S_{\ell,M}$ denote the empirical counterparts
of the corresponding population operators. The gate defect averages the
pointwise commutator energy, whereas the sensitivity and feature-side defects
are formed from the empirically averaged operators. Thus
\eqref{eq:numerical-normalized-internal} and
\eqref{eq:numerical-normalized-feature} correspond to the internal and
feature-side objects studied in the theory, rather than to stronger pointwise
variants. These dimensionless quantities measure relative incompatibility
and are invariant under separate scalar rescalings of the two operators,
whenever the normalizing denominators are nonzero; otherwise they are left
undefined according to the convention specified below.

For localized quantities, let \(\Pi_{\ell,P}^{(k)}\) denote the leading
rank-\(k\) spectral projector of \(P_\ell\), and set
\[
\mathsf Q_\ell(x)
:=
D_\ell(x)S_{\ell+1}(x)D_\ell(x).
\]
For a sample of size \(M\), define
\begin{align}
q_{D,\ell,M}^{(k,\beta)}
&:=
\frac1M\sum_{a=1}^M
\left\|
\Pi_{\ell,P}^{(k)}D_\ell(x_a)
\Pi_{\ell,P}^{(k+\beta),\perp}
\right\|_F^2,
\label{eq:numerical-localized-D}
\\
q_{Q,\ell,M}^{(k,\beta),\mathrm{av}}
&:=
\left\|
\Pi_{\ell,P}^{(k)}Q_{\ell,M}
\Pi_{\ell,P}^{(k+\beta),\perp}
\right\|_F^2,
\label{eq:numerical-localized-Q-averaged}
\\
q_{Q,\ell,M}^{(k,\beta),\mathrm{pt}}
&:=
\frac1M\sum_{a=1}^M
\left\|
\Pi_{\ell,P}^{(k)}\mathsf Q_\ell(x_a)
\Pi_{\ell,P}^{(k+\beta),\perp}
\right\|_F^2.
\label{eq:numerical-localized-Q-pointwise}
\end{align}
The averaged quantity in
\eqref{eq:numerical-localized-Q-averaged} is the empirical counterpart of
the localized internal defect used in the Lyapunov theory, whereas the
pointwise quantity in \eqref{eq:numerical-localized-Q-pointwise} is a
stronger inputwise diagnostic. Jensen's inequality gives
\[
q_{Q,\ell,M}^{(k,\beta),\mathrm{av}}
\leq
q_{Q,\ell,M}^{(k,\beta),\mathrm{pt}}.
\]

Let
\[
r_{\ell,P}^{(k,\beta)}
:=
\min\{k,m_{\ell+1}-k-\beta\}.
\]
The corresponding normalized quantities are
\begin{align}
\widehat q_{D,\ell,M}^{(k,\beta)}
&:=
\frac{
q_{D,\ell,M}^{(k,\beta)}
}{
r_{\ell,P}^{(k,\beta)}
M^{-1}\sum_{a=1}^M
\|D_\ell(x_a)\|_{\mathrm{op}}^2
},
\label{eq:numerical-localized-D-normalized}
\\
\widehat q_{Q,\ell,M}^{(k,\beta),\mathrm{av}}
&:=
\frac{
q_{Q,\ell,M}^{(k,\beta),\mathrm{av}}
}{
r_{\ell,P}^{(k,\beta)}
\|Q_{\ell,M}\|_{\mathrm{op}}^2
},
\label{eq:numerical-localized-Q-averaged-normalized}
\\
\widehat q_{Q,\ell,M}^{(k,\beta),\mathrm{pt}}
&:=
\frac{
q_{Q,\ell,M}^{(k,\beta),\mathrm{pt}}
}{
r_{\ell,P}^{(k,\beta)}
M^{-1}\sum_{a=1}^M
\|\mathsf Q_\ell(x_a)\|_{\mathrm{op}}^2
}.
\label{eq:numerical-localized-Q-pointwise-normalized}
\end{align}

Writing
\(\lambda_{\ell,1}\geq\cdots\geq\lambda_{\ell,m_{\ell+1}}\)
for the eigenvalues of \(P_\ell\), define the normalized boundary gap by
\[
\widehat\gamma_{\ell,j}
:=
\frac{\lambda_{\ell,j}-\lambda_{\ell,j+1}}
{\lambda_{\ell,1}}
=
\frac{\lambda_{\ell,j}-\lambda_{\ell,j+1}}
{\|P_\ell\|_{\mathrm{op}}}.
\]
We use
\[
(k,\beta)\in\{(1,0),(1,1),(2,0),(2,2)\}
\]
and retain a projector-dependent quantity only when
\[
\min\left\{
\widehat\gamma_{\ell,k},
\widehat\gamma_{\ell,k+\beta}
\right\}
\geq10^{-6}.
\]
All cutoffs reported below satisfy this criterion for every seed.
Ensemble curves display means over the defined seed set and pointwise \(95\%\)
bootstrap intervals; matrices and projectors are never averaged across
independently initialized networks. The threshold is only a numerical
nondegeneracy filter. It does not by itself establish statistical stability of
the selected eigenspaces, and no robustness claim with respect to alternative
gap thresholds is made here.

The source-level diagnostics are computed on the corresponding training
sample. Writing
\[
Q_{\ell,N}
=
\frac1N\sum_{a=1}^N
D_\ell(x_a)S_{\ell+1}(x_a)D_\ell(x_a),
\]
and replacing every population expectation in the four source terms by an
average over the same sample, the exact finite-sample identity is
\begin{equation}
[Q_{\ell,N},P_\ell]
=
T_\ell^{\mathrm{tr}}
+
T_\ell^{\mathrm{imb}}
+
T_\ell^{\mathrm{fluc}}
+
T_\ell^{\mathrm{gate}}.
\label{eq:numerical-source-decomposition}
\end{equation}
Let
\[
\mathcal V
=
\{\mathrm{tr},\mathrm{imb},\mathrm{fluc},\mathrm{gate}\}.
\]
For \(\nu\in\mathcal V\), define the source-energy fraction and aggregate
cancellation ratio by
\begin{equation}
e_\ell^\nu
:=
\frac{\|T_\ell^\nu\|_F^2}
{\displaystyle\sum_{\zeta\in\mathcal V}
\|T_\ell^\zeta\|_F^2},
\qquad
\chi_\ell
:=
\frac{
\bigl\|\displaystyle\sum_{\nu\in\mathcal V}T_\ell^\nu\bigr\|_F
}{
\displaystyle\sum_{\nu\in\mathcal V}\|T_\ell^\nu\|_F
}.
\label{eq:numerical-source-fractions}
\end{equation}
Whenever they are defined, the fractions \(e_\ell^\nu\) sum to one and
describe the relative squared magnitudes of the four sources. They do not
include the pairwise interaction terms in the squared norm of the total
commutator. The ratio \(\chi_\ell\in[0,1]\) compares the norm of the total
commutator with the sum of the source norms; a small value therefore indicates
strong matrix-valued cancellation. We further measure the principal
directional interaction by
\begin{equation}
\rho_\ell^{\mathrm{tr},\mathrm{imb}}
:=
\frac{
\langle
T_\ell^{\mathrm{tr}},
T_\ell^{\mathrm{imb}}
\rangle_F
}{
\|T_\ell^{\mathrm{tr}}\|_F
\|T_\ell^{\mathrm{imb}}\|_F
}.
\label{eq:numerical-source-cosine}
\end{equation}
Whenever it is defined,
\(\rho_\ell^{\mathrm{tr},\mathrm{imb}}\in[-1,1]\), with negative values
indicating that the transport and imbalance sources point in opposing
directions in Frobenius space.

All ensemble summaries retain every seed for which the displayed observable
is defined. A normalized global or localized defect is left undefined when
its normalizing denominator vanishes. The quantities \(e_\ell^\nu\) and
\(\chi_\ell\) are defined only when at least one source term is nonzero, and
\(\rho_\ell^{\mathrm{tr},\mathrm{imb}}\) only when both source norms are
positive. Undefined values are omitted from the corresponding layerwise mean
and bootstrap interval and are never replaced by zero. In particular, seed
\(19\), which produces a dead layer at one recorded checkpoint, remains in
the risk summaries and in every well-defined global statistic but is omitted
from the affected sensitivity and source summaries. We call a summary formed
after such an omission an \emph{active-seed summary}; all other summaries use
the full stated ensemble. Thus a stored zero placeholder for a dead layer is
not interpreted as perfect alignment or perfect cancellation.

Table~\ref{tab:numerical-success} records the number of seeds reaching each
relative-risk level. All synthetic seeds reach \(10^{-3}\), which is
therefore the principal matched-risk level. The benchmark ensembles do not
reach \(10^{-3}\), and conditioning the comparison on the seven Concrete or
fifteen Energy seeds that reach \(10^{-2}\) would change the initialization
ensemble. Their principal comparison is consequently made at the common final
time \(t=100\).

\begin{table}[t]
\centering
\small
\begin{tabular}{c|c|c|c|ccc}
experiment & \(H\) & width & seeds
& \(r\leq10^{-2}\) & \(r\leq10^{-3}\) & \(r\leq10^{-4}\) \\
\hline
synthetic depth & 2 & 32 & 20 & 20 & 20 & 13 \\
synthetic depth & 4 & 32 & 20 & 20 & 20 & 20 \\
synthetic depth & 6 & 32 & 20 & 20 & 20 & 19 \\
synthetic capacity & 4 & 16 & 10 & 10 & 10 & 4 \\
synthetic capacity & 4 & 64 & 10 & 10 & 10 & 10 \\
Concrete & 4 & 32 & 20 & 7 & 0 & 0 \\
Energy & 4 & 32 & 20 & 15 & 0 & 0
\end{tabular}
\caption{Numbers of seeds reaching the indicated relative-risk levels. No
architecture comparison is reported at a risk level not attained by every
seed in the corresponding ensemble.}
\label{tab:numerical-success}
\end{table}

\subsection{Risk reduction does not force global alignment}
\label{sec:numerical-depth-alignment}

Figure~\ref{fig:numerical-depth-alignment} compares the three normalized
defects at \(r=10^{-3}\). The gate defect is substantially larger than the
internal and feature defects at every depth. At the input layer its mean is
\(0.4486\), \(0.4486\), and \(0.4487\) for \(H=2,4,6\), respectively. The
corresponding internal means are \(0.0646\), \(0.0805\), and \(0.0793\),
whereas the feature means are \(0.0211\), \(0.0199\), and \(0.0201\). Thus
the input-layer gate geometry is almost depth-independent, while the smaller
feature-side values are consistent with attenuation of internal
incompatibility under congruence transport. In the interior of the deeper
networks, the internal and feature defects are both of order \(10^{-2}\) but
remain nonzero.

\begin{figure}[t]
\centering
\includegraphics[width=0.94\textwidth]
{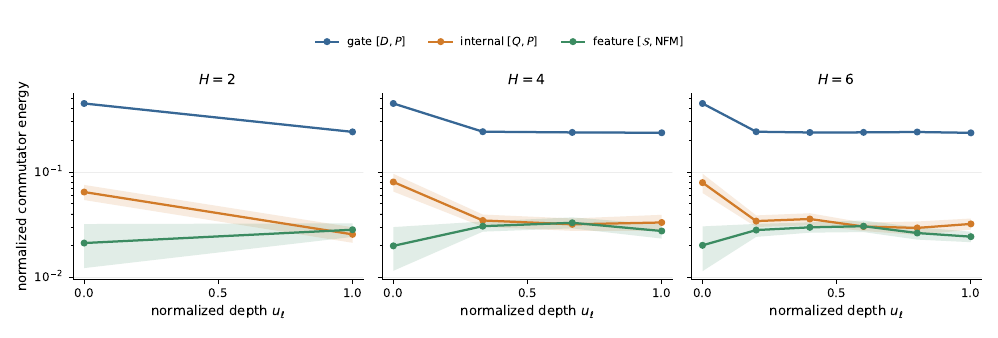}
\caption{The three normalized commutator energies at matched relative risk
\(r=10^{-3}\), plotted against normalized depth
\(u_\ell=\ell/(H-1)\). Curves are means over the twenty-seed ensembles,
subject to the active-seed convention for an undefined normalized entry, and
shaded regions are pointwise \(95\%\) bootstrap intervals. The logarithmic
vertical scale makes clear that the gate, averaged internal, and aggregated
feature geometries remain quantitatively distinct.}
\label{fig:numerical-depth-alignment}
\end{figure}

The separation is not merely an endpoint effect. Table
\ref{tab:numerical-risk-evolution} follows the four-hidden-layer network while
the relative risk decreases from \(10^{-1}\) to \(10^{-3}\). The gate defect
is essentially constant in every layer. At the input, the internal defect
increases from \(0.0446\) to \(0.0805\), and the feature defect increases from
\(0.0134\) to \(0.0199\). The three interior profiles change only slightly.
At \(r=10^{-4}\), the corresponding input values are \(0.4487\), \(0.0791\),
and \(0.0211\), so the profile has stabilized without approaching global
commutation.

\begin{table}[t]
\centering
\small
\begin{tabular}{c|cc|cc|cc}
& \multicolumn{2}{c|}{\(\widehat{\mathcal G}_{D,\ell}\)}
& \multicolumn{2}{c|}{\(\widehat{\mathcal G}_{Q,\ell}\)}
& \multicolumn{2}{c}{\(\widehat{\mathcal G}_{\mathcal S,\ell}\)} \\
\(\ell\) & \(10^{-1}\) & \(10^{-3}\)
& \(10^{-1}\) & \(10^{-3}\)
& \(10^{-1}\) & \(10^{-3}\) \\
\hline
0 & 0.4486 & 0.4486 & 0.0446 & 0.0805 & 0.0134 & 0.0199 \\
1 & 0.2418 & 0.2418 & 0.0362 & 0.0347 & 0.0306 & 0.0306 \\
2 & 0.2378 & 0.2381 & 0.0327 & 0.0318 & 0.0329 & 0.0329 \\
3 & 0.2368 & 0.2362 & 0.0332 & 0.0331 & 0.0278 & 0.0275
\end{tabular}
\caption{Means for the four-hidden-layer width-\(32\) synthetic network at two
matched relative-risk levels. All twenty seeds enter except for an undefined
normalized entry covered by the active-seed convention. A hundredfold
decrease of the risk does not produce global commutator decay.}
\label{tab:numerical-risk-evolution}
\end{table}

This experiment therefore gives no evidence for universal alignment generated
solely by risk minimization. It instead exhibits three different responses:
the gate geometry is nearly frozen, the input internal and feature defects
increase before stabilizing, and the interior defects remain approximately
constant. The numerical conclusion is not that alignment is absent, but that
it is selective and cannot be inferred from the risk alone.

\subsection{Depth propagation and systematic source cancellation}
\label{sec:numerical-source-cancellation}

We now apply the finite-sample decomposition and source diagnostics defined in
\eqref{eq:numerical-source-decomposition}-\eqref{eq:numerical-source-cosine}.
Figure~\ref{fig:numerical-depth-sources} shows a stable backward-propagation
pattern. In the interior layers of the \(H=4\) network,
\(\chi_1=0.296\) and \(\chi_2=0.290\); for \(H=6\), the corresponding
interior values range from \(0.298\) to \(0.321\). Hence the net internal
commutator is only about thirty percent of the sum of the source magnitudes.
The main cancellation is between transport and imbalance:
\[
\rho_\ell^{\mathrm{tr},\mathrm{imb}}
\in[-0.712,-0.703]
\quad\text{for the two interior layers when }H=4,
\]
and it lies between \(-0.716\) and \(-0.649\) across the four interior layers
when \(H=6\). This persistent negative interaction is not visible from
separate norm estimates for the sources.

\begin{figure}[t]
\centering
\includegraphics[width=0.98\textwidth]
{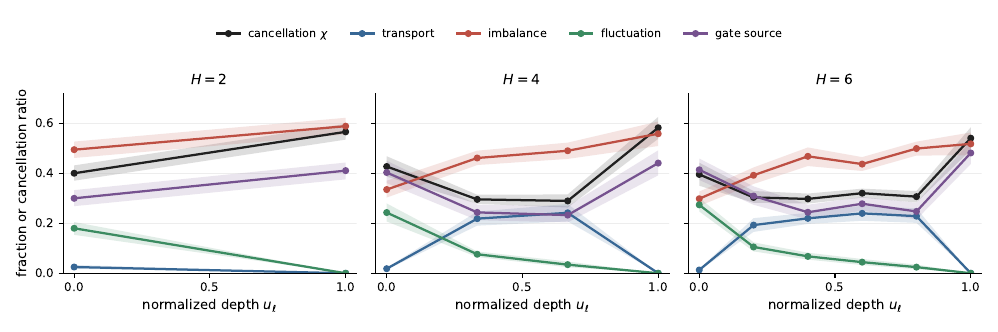}
\caption{Source-energy fractions and aggregate cancellation ratio at
\(r=10^{-3}\). Curves are active-seed means and shaded regions are
pointwise \(95\%\) bootstrap intervals; all twenty seeds enter whenever the
displayed ratio is defined. Imbalance is the largest interior source,
transport remains substantial, and the fluctuation fraction decreases toward
the output. At the terminal hidden layer, transport and fluctuation vanish
by the boundary structure of the recursion, so that layer is not directly
comparable with the interior ones.}
\label{fig:numerical-depth-sources}
\end{figure}

The source fractions add a second distinction between layers. Near the input,
the fluctuation and nonlinear gate terms carry a substantial part of the
source energy, while transport is small. Moving toward the interior,
imbalance becomes dominant, transport rises to approximately \(20\%\)-\(25\%\),
and the fluctuation contribution decreases. These profiles change little
between the last matched-risk checkpoints. The observed small internal
commutator is therefore not generally the result of uniformly small
production; it is maintained by a reproducible cancellation structure.

\subsection{Width dependence and the localized Lyapunov question}
\label{sec:numerical-width-lyapunov}

The capacity experiment compares the width dependence of the three defects.
Figure
\ref{fig:numerical-width-alignment} and Table
\ref{tab:numerical-width} show that, over the tested width range, increasing
width is associated with smaller internal and feature defects, especially in
the interior, but not with a smaller gate defect.  At layer \(1\), the
internal mean decreases from \(0.0635\) at
width \(16\) to \(0.0185\) at width \(64\), and the feature mean decreases
from \(0.0555\) to \(0.0170\).  In contrast, the input gate defect increases
from \(0.3610\) to \(0.4685\).  Three widths do not determine an asymptotic
scaling law, but over the tested range they do not support the interpretation
that the three defects are interchangeable finite-width measurements of one
common alignment quantity.

\begin{figure}[t]
\centering
\includegraphics[width=0.98\textwidth]
{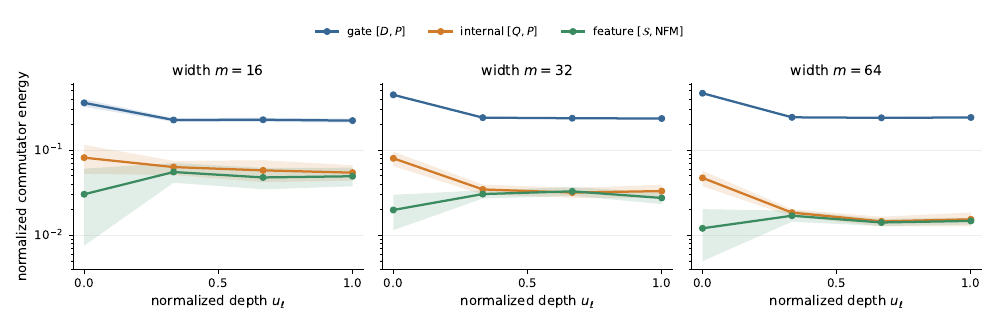}
\caption{Width comparison for four hidden layers at \(r=10^{-3}\).  The
width-\(32\) curves use twenty seeds and the width-\(16\) and width-\(64\)
curves use ten seeds, subject to the active-seed convention for an undefined
normalized entry; bands are pointwise \(95\%\) bootstrap intervals.  The
internal and feature defects decrease over the tested widths, whereas the
gate defect does not.}
\label{fig:numerical-width-alignment}
\end{figure}

\begin{table}[t]
\centering
\small
\begin{tabular}{c|ccc|ccc}
width
& \(\widehat{\mathcal G}_{D,0}\)
& \(\widehat{\mathcal G}_{Q,1}\)
& \(\widehat{\mathcal G}_{\mathcal S,1}\)
& \(\chi_1\) & \(\chi_2\)
& \(\rho_1^{\mathrm{tr},\mathrm{imb}}\) \\
\hline
16 & 0.3610 & 0.0635 & 0.0555 & 0.352 & 0.348 & -0.663 \\
32 & 0.4486 & 0.0347 & 0.0306 & 0.296 & 0.290 & -0.712 \\
64 & 0.4685 & 0.0185 & 0.0170 & 0.314 & 0.270 & -0.649
\end{tabular}
\caption{Seed means at \(r=10^{-3}\) in the four-hidden-layer synthetic
capacity experiment.  The separation of the three alignment levels and the
negative transport-imbalance interaction both persist across the tested
widths.  Source ratios are active-seed means whenever a denominator
vanishes.}
\label{tab:numerical-width}
\end{table}

The localized data give a direct numerical caution for the Lyapunov
principles of Section~\ref{sec:lyapunov-alignment}. Define the unnormalized
leading internal defect on the training sample by
\[
q_{Q,0,N}^{(1,0),\mathrm{av}}
:=
\left\|
\Pi_{0,P}^{(1)}Q_{0,N}
\Pi_{0,P}^{(1),\perp}
\right\|_F^2.
\]
 Table
\ref{tab:numerical-localized-risk} shows that this defect increases while the
risk decreases by two orders of magnitude.  The same conclusion holds for
the buffered cutoff \((k,\beta)=(1,1)\), for which the median paired growth
factors are \(2.15\), \(1.96\), and \(1.79\) at widths \(16,32,64\).

\begin{table}[t]
\centering
\small
\begin{tabular}{c|cc|c|cc}
width
& \multicolumn{2}{c|}{mean \(q_{Q,0,N}^{(1,0),\mathrm{av}}\)}
& median paired factor
& \multicolumn{2}{c}{mean \(\widehat q_{D,0,N}^{(1,0)}\)} \\
& \(r=10^{-1}\) & \(r=10^{-3}\)
& \(q(10^{-3})/q(10^{-1})\)
& \(r=10^{-1}\) & \(r=10^{-3}\) \\
\hline
16 & \(4.61\times10^{-3}\) & \(9.15\times10^{-3}\) & 2.09 & 0.1949 & 0.1960 \\
32 & \(1.88\times10^{-3}\) & \(3.41\times10^{-3}\) & 2.03 & 0.2287 & 0.2287 \\
64 & \(7.63\times10^{-4}\) & \(1.24\times10^{-3}\) & 1.79 & 0.2395 & 0.2395
\end{tabular}
\caption{Localized input-layer behavior in the four-hidden-layer synthetic
experiment.  The raw averaged-\(Q\) defect grows across the matched-risk
interval, while the normalized localized gate defect is stationary.  Every
displayed projector boundary passes the spectral reliability test.}
\label{tab:numerical-localized-risk}
\end{table}

For width \(32\), all twenty seeds also reach \(r=10^{-4}\).  At that level,
\[
\bigl\langle q_{Q,0,N}^{(1,0),\mathrm{av}}\bigr\rangle_{\mathrm{seed}}
=3.24\times10^{-3},
\qquad
\bigl\langle
\frac{q_{Q,0,N}^{(1,0),\mathrm{av}}}{\mathscr E_N}
\bigr\rangle_{\mathrm{seed}}
=37.4,
\]
whereas the latter mean is \(0.025\) at \(r=10^{-1}\).  Hence the data do not
support a risk-transfer estimate with a constant comparable to its early-time
value.  More importantly, the localized defect does not track the risk at all:
it grows and then stabilizes while the risk continues to decrease.  This does
not contradict the conditional Lyapunov results, whose intrinsic damping and
risk-transfer hypotheses are additional assumptions.  It shows that risk
dissipation alone cannot supply those hypotheses in the present regime.

\subsection{Replication on scalar-regression benchmarks}
\label{sec:numerical-benchmarks}

The benchmark trajectories provide a test of whether the synthetic source
geometry is tied to the specially constructed target \eqref{eq:numerical-target}.
At the common final time, the twenty-seed training and test risks are
\begin{align*}
\text{Concrete:}\quad
&\mathscr E_{\mathrm{train}}=0.0183\pm0.0018,
&&\mathscr E_{\mathrm{test}}=0.0780\pm0.0117,\\
\text{Energy:}\quad
&\mathscr E_{\mathrm{train}}=0.0061\pm0.0024,
&&\mathscr E_{\mathrm{test}}=0.0181\pm0.0049.
\end{align*}
These values are half mean-squared errors in standardized target units.
Here \(\pm\) denotes one standard deviation across the twenty seeds.

Figure~\ref{fig:numerical-benchmark-alignment} shows that the three-level
separation is reproduced on the test data.  For Concrete, the layer-\(1\)
gate, internal, and feature means are \(0.2365\), \(0.0320\), and \(0.0281\);
for Energy they are \(0.2397\), \(0.0327\), and \(0.0302\).  The agreement is
not imposed by a shared training sample or target: the datasets have different
observations, responses, and train-test splits.

\begin{figure}[t]
\centering
\includegraphics[width=0.78\textwidth]
{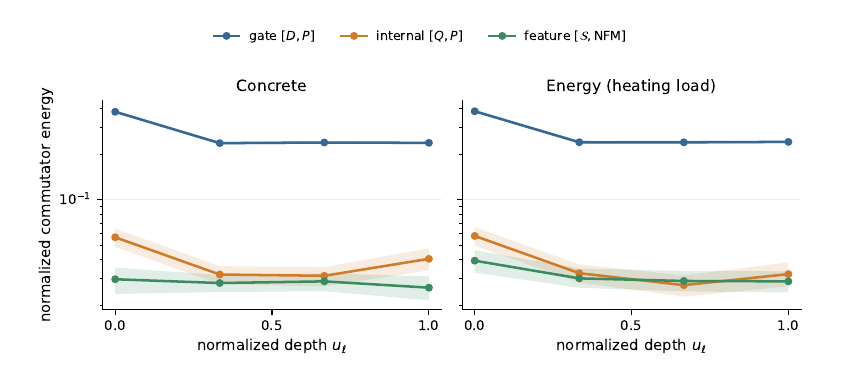}
\caption{Test-sample alignment profiles at the common final training time for
Concrete Compressive Strength and Energy Efficiency with heating load as the
response.  Curves are means over the twenty-seed ensembles, subject to the
active-seed convention for an undefined normalized entry, and bands are
pointwise \(95\%\) bootstrap intervals.  Both datasets reproduce the
quantitative separation of the gate geometry from the averaged internal and
aggregated feature geometries.}
\label{fig:numerical-benchmark-alignment}
\end{figure}

The source decomposition also replicates.  Table
\ref{tab:numerical-benchmark-sources} shows strong cancellation in both
interior layers, again dominated by a negative transport-imbalance
interaction.  The imbalance term carries approximately one half of the source
energy.  The fluctuation fraction is larger near the input and falls from
\(0.0345\) to \(0.0116\) between layers \(1\) and \(2\) for Concrete, and
from \(0.0599\) to \(0.0227\) for Energy.  Thus both the cancellation and the
backward decrease of the fluctuation contribution persist beyond the
synthetic model.

\begin{table}[t]
\centering
\small
\begin{tabular}{c|cc|cc|cc}
dataset
& \(\chi_1\) & \(\rho_1^{\mathrm{tr},\mathrm{imb}}\)
& \(\chi_2\) & \(\rho_2^{\mathrm{tr},\mathrm{imb}}\)
& \(e_1^{\mathrm{imb}}\) & \(e_2^{\mathrm{imb}}\) \\
\hline
Concrete & 0.327 & -0.722 & 0.347 & -0.689 & 0.517 & 0.539 \\
Energy   & 0.310 & -0.678 & 0.289 & -0.713 & 0.480 & 0.516
\end{tabular}
\caption{Twenty-seed means of the final-time source geometry on the two
benchmarks, with undefined ratios omitted as above.  The interior cancellation and the negative
transport-imbalance interaction reproduce the synthetic pattern.}
\label{tab:numerical-benchmark-sources}
\end{table}


\subsection{A function-preserving imbalance intervention}
\label{sec:numerical-rescaling}

This paired experiment uses the same synthetic target
\eqref{eq:numerical-target}, but a separate three-hidden-layer width-\(32\)
network with \(N=160\) training points, \(M=1024\) diagnostic points, and ten
paired initializations.  The learning rate is \(10^{-2}\), and the final time
is \(T=100\).

To vary the adjacent-layer imbalance while holding the realized initial
function and activation gates fixed, we apply the positive rescaling
\[
W_1^{(c)}(0)=cW_1(0),
\qquad
W_2^{(c)}(0)=c^{-1}W_2(0),
\qquad
c\in\left\{\frac12,1,2\right\}.
\]
By the positive homogeneity of the ReLU activation, this transformation
preserves both the initial network function and the activation gates. It does,
however, change the adjacent-layer imbalance according to
\begin{equation}
\Delta_1^{(c)}(0)
=
c^2W_1(0)W_1(0)^\top
-
c^{-2}W_2(0)^\top W_2(0).
\label{eq:numerical-rescaled-imbalance}
\end{equation}
We measure its relative magnitude at initialization by
\begin{equation}
\widehat\Delta_1^{(c)}(0)
:=
\frac{
\|\Delta_1^{(c)}(0)\|_F
}{
\|c^2W_1(0)W_1(0)^\top\|_F
+
\|c^{-2}W_2(0)^\top W_2(0)\|_F
}.
\label{eq:numerical-normalized-imbalance}
\end{equation}
Across the ten paired initializations, the mean and standard deviation of
\(\widehat\Delta_1^{(c)}(0)\) are
\[
0.914\pm0.007,\qquad
0.507\pm0.009,\qquad
0.914\pm0.007
\]
for \(c=1/2\), \(c=1\), and \(c=2\), respectively. Thus the two nontrivial
rescalings have nearly equal normalized imbalance magnitudes, although they
weight the two matrix contributions to \(\Delta_1^{(c)}(0)\) in opposite
ways.

This intervention also has a neuronwise balancedness interpretation. Indeed,
\begin{align}
\operatorname{Diag}\Delta_1^{(c)}(0)
&=
c^2\operatorname{Diag}
\bigl(W_1(0)W_1(0)^\top\bigr)
-
c^{-2}\operatorname{Diag}
\bigl(W_2(0)^\top W_2(0)\bigr).
\label{eq:numerical-rescaled-diagonal-imbalance}
\end{align}
Consequently, although the rescaling preserves the realized network function
and the activation gates at initialization, different values of \(c\)
generally place the trajectories in distinct neuronwise imbalance classes. For
bias-free ReLU gradient flow, these diagonal imbalances are conserved by
Remark~\ref{rem:neuronwise-balancedness}. The subsequent differences between
the rescaled trajectories should therefore not be interpreted as random
numerical variation. Rather, they reflect training from functionally
equivalent initializations with systematically different diagonal imbalances,
which may
alter the imbalance source and its interaction with the remaining terms in the
commutator recursion. Under the finite-step optimization used in the
experiments, the corresponding conservation law holds only approximately.

The intervention should nevertheless not be interpreted as changing
\(\Delta_1\) alone. The rescaling also changes the scales of the adjacent
covariance operators and the parametrization of the gradient dynamics.
Accordingly, the experiment separates the imbalance and parametrization class
from the realized initial function, but does not attribute the subsequent
differences exclusively to \(\Delta_1\).

For two source terms, also define the magnitude-weighted interaction
\[
I_\ell^{\nu,\eta}
=
\frac{2\langle T_\ell^\nu,T_\ell^\eta\rangle_F}
{\sum_\zeta\|T_\ell^\zeta\|_F^2}.
\]
The ten-seed endpoint means in Table~\ref{tab:numerical-rescaling-sources}
show that the rescaling changes the source geometry sharply, even though the
initialized function and activation gates are unchanged.  For \(c=1/2\), transport and imbalance
each carry approximately one half of the source energy and cancel almost
exactly.  For \(c=2\), transport carries only \(0.21\%\), while imbalance and
the nonlinear gate source dominate.  Since \(c=1/2\) and \(c=2\) have nearly
the same normalized imbalance magnitude, \(\|\Delta_1\|_F\) alone cannot
predict the interaction.

\begin{table}[t]
\centering
\scriptsize
\begin{tabular}{c|ccc|cccc}
\(c\)
& \(\rho_1^{\mathrm{tr},\mathrm{imb}}\)
& \(I_1^{\mathrm{tr},\mathrm{imb}}\)
& \(\chi_1\)
& \(e_1^{\mathrm{tr}}\)
& \(e_1^{\mathrm{imb}}\)
& \(e_1^{\mathrm{fluc}}\)
& \(e_1^{\mathrm{gate}}\) \\
\hline
\(1/2\) & -0.997 & -0.994 & 0.037 & 0.4970 & 0.5000 & 0.0007 & 0.0026 \\
\(1\)   & -0.704 & -0.492 & 0.296 & 0.2410 & 0.4920 & 0.0580 & 0.2090 \\
\(2\)   & -0.091 & -0.005 & 0.477 & 0.0021 & 0.5010 & 0.1070 & 0.3890
\end{tabular}
\caption{Ten-seed means of the middle-layer source geometry at \(t=100\) in
the function-preserving rescaling experiment.  The intervention redistributes
the source budget and changes its directional cancellation.}
\label{tab:numerical-rescaling-sources}
\end{table}

The separation persists along the recorded trajectories, as shown in
Figure~\ref{fig:numerical-source-time}. It also affects the subsequent
localized sensitivity geometry. Define
\[
R_Q^{(c)}(T)
:=
\frac{
\widehat q_{Q,1,M}^{(1,0),\mathrm{pt},(c)}(T)
}{
\widehat q_{Q,1,M}^{(1,0),\mathrm{pt},(c)}(0)
}.
\]
For paired seeds \(s=1,\ldots,10\), the reported fixed-time effect of \(c=2\)
relative to \(c=1\) is
\[
\mathfrak E_Q^{(2:1)}(T)
:=
\exp\left\{
\frac1{10}\sum_{s=1}^{10}
\log
\frac{R_{Q,s}^{(2)}(T)}{R_{Q,s}^{(1)}(T)}
\right\},
\]
with the analogous definition at matched empirical-risk crossings.
Relative to \(c=1\), the geometric-mean paired effect of \(c=2\) on
\(R_Q^{(c)}(T)\) is \(1.219\), with paired \(95\%\) bootstrap interval
\([1.027,1.523]\), at fixed time. At the matched empirical-risk level
\(10^{-3}\), the corresponding effect is \(1.235\), with interval
\([1.079,1.441]\). Thus the difference persists after matching the empirical-risk level and cannot
be explained solely by the time required to reach the prescribed risk. The
intervention has a particularly stable effect on the source decomposition,
whereas the endpoint defect remains more seed-dependent. Source production,
source cancellation, and observed alignment must therefore be distinguished.
Moreover, \(\widehat q_{Q,1,M}^{(1,0),\mathrm{pt}}\) is the pointwise
diagnostic in \eqref{eq:numerical-localized-Q-pointwise-normalized}, not the
averaged-\(Q\) variable in the Lyapunov theory.  The paired effect therefore
shows parametrization sensitivity of the stronger inputwise geometry, but is
not direct evidence for decay or growth of the Lyapunov defect.

\begin{figure}[t]
\centering
\includegraphics[width=0.92\textwidth]
{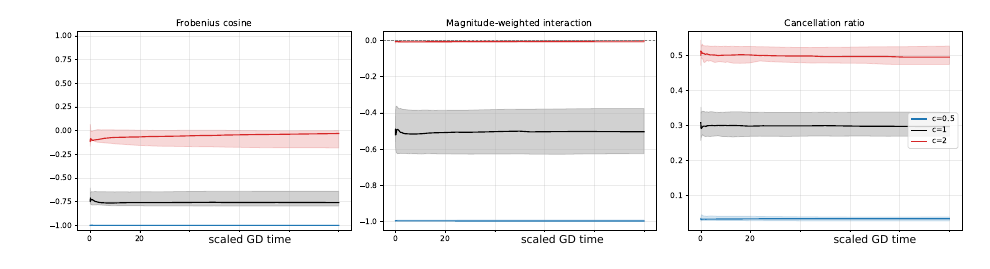}
\caption{Evolution of the middle-layer transport-imbalance interaction and
aggregate cancellation ratio in the paired rescaling experiment.  Curves
summarize ten paired seeds.  The same realized initial function can lead to
nearly exact source cancellation or to a regime in which transport is almost
absent.}
\label{fig:numerical-source-time}
\end{figure}

\subsection{Validation and interpretation}
\label{sec:numerical-validation}

For two matrices \(A\) and \(B\), define the relative residual
\[
\operatorname{Res}(A,B)
:=
\frac{\|A-B\|_F}
{\|A\|_F+\|B\|_F+\varepsilon_{\mathrm{num}}},
\]
where \(\varepsilon_{\mathrm{num}}>0\) prevents division by zero. The
computations use \(\varepsilon_{\mathrm{num}}=10^{-30}\). The four-source
residual is obtained by taking
\[
A=[Q_{\ell,N},P_\ell],
\qquad
B=
T_\ell^{\mathrm{tr}}
+
T_\ell^{\mathrm{imb}}
+
T_\ell^{\mathrm{fluc}}
+
T_\ell^{\mathrm{gate}}.
\]
The feature-transport residual is evaluated on each training,
diagnostic, or test sample using the corresponding empirical operators. Thus,
for a sample of size \(n\), it uses
\[
A=[\mathcal S_{\ell,n},\mathrm{NFM}_\ell],
\qquad
B=W_\ell^\top[Q_{\ell,n},P_\ell]W_\ell.
\]

The exact finite-sample identities provide stringent implementation checks.
Across \(12\,452\) evaluations, the normalized residual in the four-source
recursion never exceeds \(1.25\times10^{-15}\).  Across \(24\,904\)
evaluations, the feature-transport residual never exceeds
\(6.54\times10^{-13}\).  Symmetry, Jensen, spectral-gap, and localized-gap
checks are also satisfied, and all projector boundaries used in the main
tables and figures are spectrally resolved.  These residuals validate the
implementation; they are not treated as empirical evidence for alignment.

No active run reaches exact interpolation.  Accordingly, every alignment and
source statement in this section is a finite-time ensemble observation,
dependent on the sampled initializations and restricted to the architectures,
widths, datasets, and optimization horizon tested here. The experiments use a
single learning rate and the normalizations in
\eqref{eq:numerical-normalized-gate}-
\eqref{eq:numerical-normalized-feature}; neither a step-size refinement toward
continuous gradient flow nor a systematic comparison of alternative global
normalizations is included. The width and cancellation conclusions should be
read with this scope in mind.

Within the tested regimes, the three commutators remain quantitatively
distinct, and decreasing risk does not force their global or leading localized
defects to decay. Width reduces the internal and feature-side defects, but not
the gate defect, over the tested range. The persistent negative
transport--imbalance interaction, including under the function-preserving
rescaling, supports the interpretation that small observed commutators may
reflect structured cancellation rather than universal Lyapunov decay.

\section{Concluding remarks}
\label{sec:Conclusion}

This paper develops the three-level hierarchy
\[
[D_\ell,P_\ell]
\quad\longrightarrow\quad
[Q_\ell,P_\ell]
\quad\longrightarrow\quad
[\mathcal S_\ell,\mathrm{NFM}_\ell],
\]
in finite-width nonlinear networks, where the arrows indicate structural
coupling rather than logical implication. The exact recursion for the middle
commutator identifies
downstream transport, adjacent-layer imbalance, pointwise sensitivity
fluctuations, and nonlinear gate-covariance interactions as its four sources.
Buffered localization distinguishes coupling across separated spectral blocks
from mixing within a boundary cluster, while the transport identity carries
internal compatibility to feature space with explicit singular-value weights.

Risk reduction alone does not impose a universal monotone alignment law:
stabilization and decay are distinct, and a small net commutator may result
from cancellation among large sources or from singular-value filtering. The
decay results instead isolate two additional sufficient mechanisms, geometric
error bounds and intrinsic transverse damping. The analytic examples and
finite-time experiments exhibit the corresponding separation among gate,
internal, and feature-side geometry, together with exact cancellation,
transient growth, and parametrization dependence.

The most important next problem is to derive verifiable, model-dependent
conditions that produce genuine localized damping with a non-collapsing
sensitivity operator. Other natural extensions are to stochastic training,
where switching and mini-batch noise introduce additional variation; to
wide-network limits in feature-learning parametrizations; and to implicit
selection among the many internal geometries compatible with interpolation.
The present hierarchy provides concrete observables and exact identities with
which to formulate these questions, but their resolution lies beyond the
scope of this paper.
\par\medskip
\noindent
\textbf{Declaration on the use of generative AI.}
The main ideas and an initial version of the manuscript were developed by the
author before ChatGPT and Codex, developed by OpenAI, were used in its subsequent
preparation and revision.  These tools were used as an interactive aid for language
editing, organization, LaTeX formatting, checking the clarity and internal
consistency of the mathematical arguments,  and in particular in the development and debugging of the numerical experiments.  All AI-assisted outputs were
critically assessed by the author, and all mathematical statements,
proofs, references, and conclusions are independently verified by the author,
who takes full responsibility for the content of the paper.

\end{document}